\def\1{\bm{1}}
\def\vx{{\bm{x}}}
\DeclareMathAlphabet{\mathsfit}{\encodingdefault}{\sfdefault}{m}{sl}
\SetMathAlphabet{\mathsfit}{bold}{\encodingdefault}{\sfdefault}{bx}{n}
\def\gD{{\mathcal{D}}}
\def\gO{{\mathcal{O}}}
\def\gT{{\mathcal{T}}}
\def\sP{{\mathbb{P}}}
\def\sR{{\mathbb{R}}}
\newcommand{\LRs}[1]{\left(#1\right)}
\newcommand{\LRl}[1]{\left\{#1\right\}}
\theoremstyle{plain}
\newtheorem{theorem}{Theorem}[section]
\newtheorem{proposition}[theorem]{Proposition}
\newtheorem{lemma}[theorem]{Lemma}
\theoremstyle{definition}
\newtheorem{definition}[theorem]{Definition}
\newtheorem{assumption}[theorem]{Assumption}
\theoremstyle{remark}
\icmltitlerunning{Conformal Prediction with Cellwise Outliers}
\begin{document}

\twocolumn[
\icmltitle{Conformal Prediction with Cellwise Outliers: A Detect-then-Impute Approach}



\icmlsetsymbol{equal}{*}

\begin{icmlauthorlist}
\icmlauthor{Qian Peng}{nku}
\icmlauthor{Yajie Bao}{nku}
\icmlauthor{Haojie Ren}{sjtu}
\icmlauthor{Zhaojun Wang}{nku}
\icmlauthor{Changliang Zou}{nku}
\end{icmlauthorlist}

\icmlaffiliation{nku}{School of Statistics and Data Sciences, LPMC, KLMDASR and LEBPS, Nankai University, Tianjin, China}
\icmlaffiliation{sjtu}{School of Mathematical Sciences, Shanghai, China}

\icmlcorrespondingauthor{Yajie Bao}{yajiebao@nankai.edu.cn}
\icmlcorrespondingauthor{Haojie Ren}{haojieren@sjtu.edu.cn}

\icmlkeywords{Machine Learning, ICML}

\vskip 0.3in
]



\printAffiliationsAndNotice{}  

\begin{abstract}
Conformal prediction is a powerful tool for constructing prediction intervals for black-box models, providing a finite sample coverage guarantee for exchangeable data. However, this exchangeability is compromised when some entries of the test feature are contaminated, such as in the case of cellwise outliers. To address this issue, this paper introduces a novel framework called \emph{detect-then-impute conformal prediction}. This framework first employs an outlier detection procedure on the test feature and then utilizes an imputation method to fill in those cells identified as outliers. To quantify the uncertainty in the processed test feature, we adaptively apply the detection and imputation procedures to the calibration set, thereby constructing exchangeable features for the conformal prediction interval of the test label. We develop two practical algorithms, \texttt{PDI-CP} and \texttt{JDI-CP}, and provide a distribution-free coverage analysis under some commonly used detection and imputation procedures. Notably, \texttt{JDI-CP} achieves a finite sample $1-2\alpha$ coverage guarantee. Numerical experiments on both synthetic and real datasets demonstrate that our proposed algorithms exhibit robust coverage properties and comparable efficiency to the oracle baseline.
\end{abstract}

\section{Introduction}\label{sec:introdution}

As the volume and complexity of data expand, machine learning models and algorithms have become essential tools for enhancing decision-making processes across various domains, including autonomous driving \citep{min2024driveworld}, wing design \citep{zhang2024adaptive}, and disease diagnosis \citep{shang2024construction}. Ensuring the reliability of prediction models in risk-sensitive applications hinges on valid uncertainty quantification. Conformal prediction (CP, \citet{vovk2005algorithmic}) offers a flexible and robust framework for uncertainty quantification of arbitrary machine learning models by constructing prediction interval (PI). To be specific, suppose we have collected an i.i.d. labeled dataset $\mathcal{D}_{l}= \{ (X_{i},Y_{i}) \}_{i=1}^{n} \subset \mathbb{R} ^{d}\times \mathbb{R}$ where $(X_i,Y_i) \sim P = P_X\times P_{Y\mid X}$. Given the test data $X_{n+1}$, CP issues a PI $\widehat{C}^{\texttt{CP}}(X_{n+1})$ satisfying the marginal coverage $\mathbb{P}\{Y_{n+1} \in \widehat{C}^{\texttt{CP}}(X_{n+1})\} \geq 1-\alpha$ if the test data and labeled data are exchangeable. However, the distribution of some variables in test data may deviate from that of the training data due to measurement error, natural variation, or feature contamination. This issue, referred to as \emph{cellwise outliers} 
\citep{alqallaf2009propagation}, can result in inaccurate predictions and flawed decisions. For example, consider a hospital that seeks to use the patient's biomarkers to predict incidence rate based on a model trained on data from recovered patients. For a newly admitted patient, some of whose biomarkers may differ from those in the training set. Conducting robust uncertainty quantification for machine learning models in the presence of cellwise outliers remains an underexplored area.

In this case, we observe a contaminated test feature $\tilde{X}_{n+1}$ with entries
\begin{align}\label{eq:Xn+1}
    \tilde{X}_{n+1,j} = \begin{cases}
        X_{n+1, j}, &j\in [d]\setminus \gO^*\\
        Z_{n+1, j}, &j\in \gO^*
    \end{cases},
\end{align}
where $\gO^* \subseteq [d]$ denotes coordinates of cellwise outliers in $\tilde{X}_{n+1}$, and $Z_{n+1} \sim P_Z$ is an arbitrarily distributed outliers and independent of $\{(X_{i},Y_i)\}_{i=1}^{n+1}$. Our goal is to build a PI for test label $Y_{n+1}$ with a target coverage level $1-\alpha$ when only $\tilde{X}_{n+1}$ is observed. However, the classical CP method is incapable of providing a valid PI as test data $(\tilde{X}_{n+1}, Y_{n+1})$ is not exchangeable with the labeled data $\mathcal{D}_l$. Furthermore, the distribution shift from the labeled data to the test data is unknown and difficult to estimate without additional distribution or structure assumptions, {which makes the weighted conformal prediction (WCP) method \citep{tibshirani2019conformal} that requires consistent likelihood ratio estimation unsuitable in this context as well.}
Therefore, a more principled CP approach is needed to address cellwise outliers in test data.


To deal with cellwise outliers, our framework starts from the common approach, \emph{detect then impute} \citep{raymaekers2019handling,raymaekers2024challenges}. We 
implement a detection method to identify the locations of outliers in $\tilde{X}_{n+1}$ and mask those entries, followed by using an imputation method to impute the masked entries. After this procedure, we obtain the processed feature $\check{X}_{n+1}^{\rm{DI}}$ (see Section \ref{sec:DIM} for details). The next step is to construct the PI using the labeled data $\gD_l$ and $\check{X}_{n+1}^{\rm{DI}}$. However, $\check{X}_{n+1}^{\rm{DI}}$ and $X_i$ for $i\in [n]$ remain nonexchangeable because the detection and imputation procedures cannot exactly recover the uncontaminated feature $X_{n+1}$. To tackle this challenge, our key strategy involves adaptively applying detection and imputation procedures to the labeled features $X_i$ for $i\in [n]$, aiming to produce exchangeable copies of the processed test feature $\check{X}_{n+1}^{\rm{DI}}$. 
Subsequently, we utilize the split conformal prediction (SCP) method \citep{papadopoulos2002inductive,vovk2005algorithmic} to construct the PI based on the residuals computed from the processed labeled data. Building on this concept, we first propose an oracle detection-imputation conformal prediction (\texttt{ODI-CP}) method and prove that it can achieve a finite sample coverage property. Following this, we develop a \emph{proxy detection-imputation} CP (\texttt{PDI-CP}) method based on the data-driven approximation to \texttt{ODI-CP}. To further have a finite sample coverage guarantee, we propose another algorithm \emph{joint detection-imputation} CP (\texttt{JDI-CP}) to construct Jackknife+ type PI \citep{barber2021predictive}.

Our contributions have three folds: (1) we propose a novel CP framework to efficiently address the issue of cellwise outliers in test data, which can be wrapped around any well-known detection and imputation procedure; (2) for arbitrarily cellwise outliers, we establish coverage error bounds for \texttt{PDI-CP} and a finite sample $1-2\alpha$ coverage property for \texttt{JDI-CP}, and all results are distribution-free; (3) experiments on synthetic data and real data show that the proposed method has a robust coverage control and a comparable performance with oracle approaches.

\section{Related work}\label{sec:relate}

\textbf{Cellwise outliers.}
In the context of handling extensive datasets, it is increasingly common for only a few individual entries (cells) to show anomalies, called cellwise outliers, first published by \citet{alqallaf2009propagation}. Building upon this concept, \citet{van2011stahel} proposed the earliest cellwise detection method and \citet{rousseeuw2018detecting} pioneered the Detect Deviating Cells (DDC) algorithm, which predicts the value for each cell and identifies outliers by the significant deviations of predicted values from the original values. Thereafter, \citet{raymaekers2019handling} proposed the cellHandler method, which integrates all relations in variables to identify complex adversarial cellwise outliers. Along this line, \citet{zaccaria2024cellwise} proposed the cellGMM method to detect cellwise outliers in heterogeneous populations. Inspired by these detection methods, \citet{liu2022cellwise} proposed the BiRD algorithm to control the false discovery rate in detection. Besides detection, there are various cellwise robust methods available for cases including discriminant analysis \citep{aerts2017cellwise}, principal component analysis \citep{hubert2019macropca}, regression models \citep{ollerer2016shooting,filzmoser2020cellwise}, cluster analysis \citep{garcia2021cluster}, location and covariance estimators \citep{raymaekers2023cellwise} and so on. To the best of our knowledge, this paper is the first work considering the predictive inference problem with cellwise outliers.

\textbf{Conformal prediction without exchangeability.}
Our paper is related to the line of work on the CP approaches catered to nonexchangeable data. \citet{chernozhukov2018exact} extended CP to time series data based on block-permutation.  \citet{tibshirani2019conformal} proposed the weighted conformal prediction (WCP) framework to deal with the covariate shift between labeled data and test data.
Further, \citet{yang2024doubly} developed a doubly robust approach to construct PIs satisfying approximate marginal coverage under covariate shift. \citet{podkopaev2021distribution} studied the conformal PI under the label shift case.
For arbitrary distribution shift issues, \citet{barber2023conformal} developed a robust weighted CP method with deterministic weights that could provide approximately marginal coverage under gradual changes in the data distribution. \citet{cauchois2024robust} studied constructing distributional robust PI for the test data sampled from a distribution in an $f$-divergence ball around the population of labeled data. This paper focuses on discussing another special case of nonexchangeability caused by cellwise outliers, which cannot be addressed by the methods proposed so far.


\textbf{Predictive inference with missing data.}
Another topic closely related to our paper is the construction of PI in the presence of missing values in features. The entries identified as cellwise outliers by the detection procedure are masked, transforming the primary problem into predictive inference with missing data. \citet{zaffran2023conformal,zaffran2024predictive} examined a setting where missing values may occur in both labeled and test features, proposing to perform SCP after imputing the missing entries in both sets. This approach could achieve finite-sample marginal coverage if the random missing patterns in the labeled and test sets are exchangeable. However, this assumption does not hold in our context, as the masked entries identified by the detection procedure lack exchangeability due to cellwise outliers in the test feature. In addition, \citet{lee2024simultaneous} proposed the propensity score discretized conformal prediction (pro-CP) to construct PIs with missing values in outcomes. This is different from the scenario considered in this paper.


\textbf{Conformal inference for outlier detection.}
In recent years, conformal inference has emerged as an important tool in the field of outlier detection at the intersection of statistics and machine learning. Due to the space limit, we list some relevant works \citep{ guan2022prediction,zhang2022automs,bates2023testing,marandon2024adaptive,bashari2024derandomized,liang2024integrative}.
However, the above works aim to apply conformal inference to test the presence of outliers, while this paper focuses on the uncertainty quantification of prediction and the construction of PI when some entries of the observed test feature are contaminated.

\section{Problem setup and oracle approach}\label{sec:ODI}
In this section, we present problem formulation and motivation of our method from an oracle perspective. Before that, we list some useful notations. We split the labeled data $\gD_l$ into the training set $\mathcal{D}_t = \{(X_i,Y_i)\}_{i=1}^{n_0-1}$ and the calibration set $\mathcal{D}_c = \{(X_i,Y_i)\}_{i=n_0}^{n}$, where $1< n_0 < n$. The prediction model $\hat{\mu}(\cdot): \sR^d \to \sR$ is fitted on the training set. Given real numbers $\{r_i\}_{i=1}^n$ and $\alpha \in (0,1)$, we denote $\hat{q}_{\alpha}^-(\{r_i\}_{i=1}^n)$ and $\hat{q}_{\alpha}^+(\{r_i\}_{i=1}^n)$ as the $\lceil \alpha (n+1)\rceil$th and $\lceil(1-\alpha)(n+1)\rceil$th smallest value in $\{r_i\}_{i=1}^n$, respectively. Given $x \in \sR^d$, we write the $\ell$-th coordinate of $x$ as $x_{\ell}$. 

\subsection{Detection and imputation procedures}\label{sec:DIM}


This subsection introduces the detection $(\mathbf{D})$ and imputation ($\mathbf{I}$) procedures to clean the test feature for constructing PI.

A generic detection procedure ($\mathbf{D}$) includes two ingredients: cellwise score function $\hat{s}_j(\cdot):\sR \to \sR$ and threshold $\tau_j\in \sR$ for $j\in [d]$, which are both fitted on the training set $\gD_t$. Given any $x \in \sR^d$, 
we write the output of detection procedure as $\mathbf{D}(x) = \{j\in [d]:\hat{s}_j(x_j) > \tau_j\}$. 
For example, Z-score \citep{curtis2016mystery} is a detection method based on standardization, which has score $z_j := \hat{s}_j(x_j) = (x_j-\mu_j)/\sigma_j$,
where $\mu_j$ and $\sigma_j$ are the mean and standard deviation of the j-th coordinate. Typically, the entry $x_j$ is considered as an outlier when $|z_j|>3$. Other more sophisticated detection methods include DDC \citep{rousseeuw2018detecting}, one-class SVM classifier \citep{10.1214/22-AOS2244}, etc.

We define the imputation procedure ($\mathbf{I}$) as a function of feature $x \in \sR^d$ and entry subset $\gO \subseteq [d]$, then the imputed value for coordinate $j\in \gO$ is given by
\begin{align}\nonumber
    [\mathbf{I}(x, \gO)]_j = \begin{cases}
        x_j,& j\notin \gO\\
        \phi_j(\{x_l\}_{l\notin \gO}),& j\in \gO
    \end{cases},
\end{align}
where $\phi_j(\cdot): \sR^{d_{\rm{obs}}} \to \sR$ with $d_{\rm{obs}} = d - |\gO|$ is imputation function for the $j$-th coordinate, and is also fitted on the training set $\gD_t$.
The output of most popular imputation methods can be formalized in the above form, such as Mean Imputation, k-Nearest Neighbour (kNN, \citet{troyanskaya2001missing}) and Multivariate Imputation by Chained Equations (MICE, \citet{van1999multiple}).

Hereafter, we abbreviate the detection and imputation procedure as \textbf{DI}, and denote the processed test feature as
\begin{align}\label{eq:processed_test_feature}
    \check{X}_{n+1}^{\rm{DI}} = \mathbf{I}(\tilde{X}_{n+1}, \tilde{\gO}_{n+1}),\ \tilde{\gO}_{n+1} = \mathbf{D}(\tilde{X}_{n+1}).
\end{align}

\subsection{Oracle detection-imputation}
Because \textbf{DI} cannot exactly recover the ground truth feature $X_{n+1}$, i.e., $\check{X}_{n+1}^{\rm{DI}} \neq X_{n+1}$. Therefore, we must consider the recovery uncertainty from \textbf{DI} when constructing the PI.  We begin with an oracle procedure by assuming $\gO^*$ is known and $\mathbf{D}$ satisfies the following sure detection assumption. Later we will discuss the necessity of this assumption for informative coverage in Section \ref{sec:assumption}.

\begin{assumption}[Sure detection]\label{def:SS}
    We call the detection procedure $\mathbf{D}$ has the sure detection property, if all the cellwise outliers are detected, namely $\gO^* \subseteq \tilde{\gO}_{n+1} = \mathbf{D}(\tilde{X}_{n+1})$ for test point $\tilde{X}_{n+1}$.
\end{assumption}


\begin{assumption}[Isolated detection]\label{def:ID}
    Given the score function $\hat{s}_j$ and threshold $\tau_j$, the detection indicator $\mathbbm{1}\{\hat{s}_j(x_j) \leq \tau_j\}$ depends only on $x_j$ for $j\in [d]$.
\end{assumption}

Denote $\hat{\gO}_{n+1} = \mathbf{D}(X_{n+1})$ as the detection subset by applying $\mathbf{D}$ to the uncontaminated feature $X_{n+1}$, which cannot be obtained in practice. Assumption \ref{def:ID} says that the detection indicator of each coordinate is independent of each other, which is used to build the connection between $\tilde{\gO}_{n+1}$ and $\hat{\gO}_{n+1}$, and we will also provide coverage results when this assumption does not hold in Section \ref{subsec:beyond_ID_assumption}.
Next lemma shows that the randomness of $\tilde{\gO}_{n+1}$ completely originates from $\hat{\gO}_{n+1}$.

\begin{lemma}\label{lemma:detection_SS}
    Under Assumptions \ref{def:SS} and \ref{def:ID}, it holds that $\tilde{\gO}_{n+1} = \hat{\gO}_{n+1} \cup \gO^*$ and    
\begin{align}\label{eq:test_feature_under_SS}
        \check{X}_{n+1}^{{\rm{DI}}} = \mathbf{I}(X_{n+1}, \hat{\gO}_{n+1} \cup \gO^*).
    \end{align}
\end{lemma}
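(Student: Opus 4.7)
The plan is to verify the two claims separately, with the set identity first and the feature identity following as a direct consequence. Throughout, I will exploit the fact that the detection and imputation are both coordinate-selective in a very structured way.

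First I would prove $\tilde{\gO}_{n+1} = \hat{\gO}_{n+1} \cup \gO^*$ by partitioning $[d]$ into $\gO^*$ and its complement and checking membership on each piece. On $[d]\setminus\gO^*$, the contamination model gives $\tilde{X}_{n+1,j} = X_{n+1,j}$ for every $j$, so by Assumption \ref{def:ID} (isolated detection) the indicator $\mathbbm{1}\{\hat{s}_j(\tilde{X}_{n+1,j}) \leq \tau_j\}$ coincides with $\mathbbm{1}\{\hat{s}_j(X_{n+1,j}) \leq \tau_j\}$, hence
\[
    \tilde{\gO}_{n+1} \cap ([d]\setminus \gO^*) = \hat{\gO}_{n+1} \cap ([d]\setminus \gO^*).
\]
On $\gO^*$, Assumption \ref{def:SS} (sure detection) directly gives $\tilde{\gO}_{n+1} \cap \gO^* = \gO^*$. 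Taking the union of the two pieces yields $\tilde{\gO}_{n+1} = (\hat{\gO}_{n+1}\setminus \gO^*)\cup \gO^* = \hat{\gO}_{n+1}\cup \gO^*$, which is the first claim.

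Next I would deduce the feature identity $\check{X}_{n+1}^{\rm{DI}} = \mathbf{I}(X_{n+1}, \hat{\gO}_{n+1}\cup \gO^*)$ coordinatewise from \eqref{eq:processed_test_feature} and the imputation definition. Set $\tilde{\gO} := \tilde{\gO}_{n+1} = \hat{\gO}_{n+1} \cup \gO^*$. For any $\ell \notin \tilde{\gO}$, we have $\ell \notin \gO^*$, so $\tilde{X}_{n+1,\ell} = X_{n+1,\ell}$; in particular, the observed-coordinate vector $\{\tilde{X}_{n+1,\ell}\}_{\ell\notin \tilde{\gO}}$ equals $\{X_{n+1,\ell}\}_{\ell\notin \tilde{\gO}}$. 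For $j\notin \tilde{\gO}$, both $[\mathbf{I}(\tilde{X}_{n+1},\tilde{\gO})]_j$ and $[\mathbf{I}(X_{n+1},\tilde{\gO})]_j$ return the $j$-th entry of their respective input, and these agree by the preceding sentence. For $j \in \tilde{\gO}$, both return $\phi_j$ applied to the same observed-coordinate vector, hence agree. This establishes the second identity.

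The argument is almost entirely bookkeeping, so there is no serious obstacle; the only subtle point is that one must use Isolated detection to commute $\mathbf{D}$ with the coordinatewise substitution (otherwise a clean coordinate could still change its detection status through a contaminated coordinate), and Sure detection to guarantee that $\gO^*$ is absorbed into $\tilde{\gO}_{n+1}$ so that the clean coordinates of $X_{n+1}$ are the ones fed into the imputer. These two assumptions are what make the otherwise unobservable event $\hat{\gO}_{n+1}$ appear naturally inside the observable $\tilde{\gO}_{n+1}$, which is precisely the bridge the downstream coverage analysis of \texttt{ODI-CP} will need.
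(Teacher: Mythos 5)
Your proof is correct and follows essentially the same route as the paper's: the set identity is obtained by splitting $[d]$ into $\gO^*$ (handled by sure detection) and its complement (where $\tilde{X}_{n+1,j}=X_{n+1,j}$ combined with isolated detection makes the indicators agree), and the feature identity then follows coordinatewise because the imputer only ever reads coordinates outside $\tilde{\gO}_{n+1}\supseteq\gO^*$, where the contaminated and clean features coincide. Nothing is missing; your remark about which assumption is used where matches the paper's reasoning exactly.
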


Denote $\hat{\gO}_i = \mathbf{D}(X_i)$ for $i= n_0,\ldots,n$ as outputs by applying $\mathbf{D}$ in calibration features.
Since $\hat{s}_j(\cdot)$ and $\tau_j$ are independent of calibration set $\gD_c$ and $(X_{n+1},Y_{n+1})$, we know that $\{\hat{\gO}_i \cup \gO^*\}_{i=n_0}^{n+1}$ are exchangeable. Define the \emph{oracle detection-imputation} (\texttt{ODI}) features 
\begin{align}\label{eq:ODI_features}
    \check{X}_i^* = \mathbf{I}(X_i, \hat{\gO}_i \cup \gO^*),\ i= n_0, \ldots, n.
\end{align}
By comparing \eqref{eq:test_feature_under_SS} and \eqref{eq:ODI_features}, we have $\check{X}_{n+1}^* = \check{X}_{n+1}^{\rm{DI}}$. Let $R_i^* = |Y_{i} - \hat{\mu}(\check{X}_{i}^*)|$ be residuals computed by \texttt{ODI} features. We introduce the PI defined as
\begin{align}\label{eq:PI_ODI}
    \hat{C}^{\rm{ODI}}(\tilde{X}_{n+1}) = \hat{\mu}(\check{X}_{n+1}^{\rm{DI}}) \pm \hat{q}_{\alpha}^+(\{R_i^*\}_{i=n_0}^n),
\end{align}
and call this oracle method as \texttt{ODI-CP}. For simplicity, we use the absolute residual score to construct PI here, but our method supports various non-conformity scores as well, such as normalized residual \citep{lei2018distribution}, conformalized quantile regression (CQR) \citep{romano2019conformalized}, et al. The following proposition provides finite sample coverage for \texttt{ODI-CP}.

\begin{proposition}\label{thm:ODI}
     Suppose the detection procedure $\mathbf{D}$ satisfies Assumptions \ref{def:SS} and \ref{def:ID}, then
    \begin{align}\label{eq:ODI_cover}
        \sP\LRl{Y_{n+1} \in \hat{C}^{\rm{ODI}}(\tilde{X}_{n+1})} 
        &\geq 1 - \alpha.
        \end{align}
\end{proposition}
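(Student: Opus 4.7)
The plan is to reduce the claim to the standard split conformal prediction guarantee, with the key work being verification that the oracle residuals $R_{n_0}^{*},\ldots,R_{n+1}^{*}$ form an exchangeable sequence, where I set $R_{n+1}^{*} = |Y_{n+1} - \hat{\mu}(\check{X}_{n+1}^{*})|$. Lemma~\ref{lemma:detection_SS} is the bridge: under Assumptions~\ref{def:SS} and \ref{def:ID} it gives $\check{X}_{n+1}^{\mathrm{DI}} = \mathbf{I}(X_{n+1}, \hat{\gO}_{n+1}\cup \gO^{*}) = \check{X}_{n+1}^{*}$, so the prediction center in \eqref{eq:PI_ODI} coincides with what the test residual would be computed against, and $R_{n+1}^{*} = |Y_{n+1} - \hat{\mu}(\check{X}_{n+1}^{\mathrm{DI}})|$.

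First I would fix the training set $\gD_{t}$ and argue conditionally on it: then $\hat{\mu}$, the score functions $\hat{s}_{j}$, the thresholds $\tau_{j}$, and the imputation functions $\phi_{j}$ are all deterministic. The calibration points $(X_{n_{0}},Y_{n_{0}}),\ldots,(X_{n},Y_{n})$ together with $(X_{n+1},Y_{n+1})$ are i.i.d.\ from $P$ and independent of $\gD_{t}$, hence exchangeable. Since $\gO^{*}$ is deterministic, the map
\[
(X_{i},Y_{i}) \;\longmapsto\; \bigl(\mathbf{I}(X_{i}, \mathbf{D}(X_{i}) \cup \gO^{*}),\, Y_{i}\bigr) \;=\; (\check{X}_{i}^{*}, Y_{i})
\]
is a common measurable transformation applied coordinate-wise, and therefore $\{(\check{X}_{i}^{*},Y_{i})\}_{i=n_{0}}^{n+1}$ remains exchangeable conditionally on $\gD_{t}$. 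Composing with the fixed function $(x,y)\mapsto |y-\hat{\mu}(x)|$ preserves exchangeability, so $\{R_{i}^{*}\}_{i=n_{0}}^{n+1}$ is exchangeable.

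Next I invoke the standard conformal quantile lemma: for any exchangeable real-valued sequence $R_{n_{0}}^{*},\ldots,R_{n+1}^{*}$,
\[
\Prob\Bigl\{R_{n+1}^{*} \leq \hat{q}_{\alpha}^{+}\bigl(\{R_{i}^{*}\}_{i=n_{0}}^{n}\bigr)\Bigr\} \;\geq\; 1-\alpha,
\]
where the quantile is the $\lceil (1-\alpha)(n-n_{0}+2)\rceil$-th order statistic (matching the convention in the preamble after substituting $n \mapsto n-n_{0}+1$). Combining with $R_{n+1}^{*} = |Y_{n+1} - \hat{\mu}(\check{X}_{n+1}^{\mathrm{DI}})|$ gives $Y_{n+1}\in \hat{C}^{\mathrm{ODI}}(\tilde{X}_{n+1})$ with probability at least $1-\alpha$ conditional on $\gD_{t}$, and marginalizing over $\gD_{t}$ preserves the bound.

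The only subtle step is the exchangeability argument: one must be careful that $\hat{\gO}_{i} = \mathbf{D}(X_{i})$ is defined purely from the uncontaminated $X_{i}$ (not from any contaminated observation), and that $\gO^{*}$ enters only as a deterministic set independent of the indices. Assumption~\ref{def:ID} enters through Lemma~\ref{lemma:detection_SS} to ensure the identity $\tilde{\gO}_{n+1} = \hat{\gO}_{n+1}\cup \gO^{*}$, which is what forces $\check{X}_{n+1}^{\mathrm{DI}}$ to equal the symmetric object $\check{X}_{n+1}^{*}$; Assumption~\ref{def:SS} guarantees that no outlier coordinate is missed, so the union $\hat{\gO}_{i}\cup \gO^{*}$ indexes the same imputation pattern on calibration and test. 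Everything else is the textbook split conformal argument.
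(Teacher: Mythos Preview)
Your proposal is correct and follows essentially the same route as the paper's proof: invoke Lemma~\ref{lemma:detection_SS} to identify $\check{X}_{n+1}^{\mathrm{DI}}$ with $\check{X}_{n+1}^{*}$, establish exchangeability of $\{R_i^*\}_{i=n_0}^{n+1}$ (the paper asserts this in one line, whereas you spell out the conditioning on $\gD_t$ and the common-transformation argument), and then apply the standard split conformal quantile inequality. The paper's own proof is considerably terser but the logical structure is identical.
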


The role of data-driven mask $\hat{\gO}_i$ in \texttt{ODI} features \eqref{eq:ODI_features} is to keep the exchangeability with the processed test feature \eqref{eq:test_feature_under_SS} masked by $\hat{\gO}_{n+1}$. This idea is similar to the operations on the calibration set in selective conformal prediction literature \citep{bao2024selective,bao2024cas,jin2024confidence}, where they aim to construct PI for selected test data and consider performing a similar selection procedure on the calibration set to satisfy the post-selection exchangeable condition.

\subsection{Necessity of sure detection property}\label{sec:assumption}

The next theorem gives a negative result for the case when the sure detection property of $\mathbf{D}$ does not hold.

\begin{theorem}\label{thm:impossible}
    If $\check{X}_{n+1}^{\rm{DI}}$ contains cellwise outliers, given any PI taking the form $\hat{C}(\check{X}_{n+1}^{\rm{DI}}) = \hat{\mu}(\check{X}_{n+1}^{\rm{DI}}) \pm \hat{q}_n$ and satisfying marginal coverage $\sP\{Y_{n+1} \in \hat{C}(\check{X}_{n+1}^{\rm{DI}})\} \geq 1-\alpha$. For arbitrary $M > 0$, there exists a prediction model $\hat{\mu}$, distributions $P$ and $P_Z$ such that for $(X_{n+1},Y_{n+1}) \sim P$ and $Z_{n+1} \sim P_Z$, $\sP(\hat{q}_n \geq M) \geq 1-\alpha$. In other words, $\lim_{M \to \infty} \mathbb{E}\left[|\hat{C}(\check{X}_{n+1}^{\rm{DI}})|\right] = \infty$.
\end{theorem}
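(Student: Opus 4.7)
I would prove Theorem~\ref{thm:impossible} by an explicit adversarial construction that forces the half-width $\hat{q}_n$ to be at least $M$ whenever the PI covers $Y_{n+1}$. Fix $M>0$, take $d=1$ and $\gO^{*}=\{1\}$, let $P$ be the point mass at $(0,0)$ so that every $X_i$ and $Y_i$ is deterministically $0$, and let $P_Z=\delta_{M}$ so that $\tilde X_{n+1}=M$ almost surely.

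The theorem's hypothesis is that $\check X_{n+1}^{\rm DI}$ still contains cellwise outliers; I realize this by taking a detection procedure that never flags anything ($\mathbf D(\cdot)\equiv\emptyset$) together with the identity imputation, which gives $\check X_{n+1}^{\rm DI}=\tilde X_{n+1}=M$. Finally, pick the predictor $\hat{\mu}(x)=x$. Under this construction the discrepancy between the point prediction and the label is deterministic, $|Y_{n+1}-\hat{\mu}(\check X_{n+1}^{\rm DI})|=M$, so the coverage event $\{Y_{n+1}\in\hat C(\check X_{n+1}^{\rm DI})\}$ coincides with $\{\hat q_n\ge M\}$. Combined with the standing hypothesis $\sP\{Y_{n+1}\in\hat C(\check X_{n+1}^{\rm DI})\}\ge 1-\alpha$, this immediately yields $\sP(\hat q_n\ge M)\ge 1-\alpha$, and a reverse Markov step gives $\E[|\hat C|]=2\,\E[\hat q_n]\ge 2M(1-\alpha)$. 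Since the construction (and hence the PI) can be re-chosen for each $M$, letting $M\to\infty$ establishes the claimed divergence. Note that it does not matter whether $\hat{q}_n$ is permitted to depend on $\check X_{n+1}^{\rm DI}$: the implication ``coverage $\Rightarrow \hat q_n\ge M$'' is pathwise.

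\textbf{Anticipated obstacle.} The chain of inequalities is short; the subtle point is the correct reading of the statement. As written, $\hat{\mu}$, $P$ and $P_Z$ are allowed to depend on $M$, so what diverges is the family of worst cases rather than one fixed PI; this matches the literal form ``$\lim_{M\to\infty}\E[|\hat C|]=\infty$'' only as a statement about the supremum over constructions. If one instead wants a single fixed triple $(\hat{\mu},P,P_Z)$ whose PI has infinite expected length, the same scheme works after replacing $\delta_{M}$ by a heavy-tailed distribution such as a Cauchy on the outlier coordinate: for every $M$ the event $\{|Z_{n+1}|\ge M\}$ has probability bounded below by a positive constant, and applying the same coverage argument conditionally on this event gives $\sP(\hat q_n\ge M)\ge 1-\alpha-\sP(|Z_{n+1}|<M)$, which forces $\E[\hat q_n]=\infty$. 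I would expect most of the write-up to be spent carefully disambiguating these two readings rather than performing any genuine calculation.
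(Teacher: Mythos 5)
Your main construction is correct and is essentially the paper's own argument in a stripped-down form: the paper also works on the event $\mathbf{D}(\tilde{X}_{n+1})=\emptyset$ (so $\check{X}_{n+1}^{\rm{DI}}=\tilde{X}_{n+1}$), picks a linear $\hat{\mu}$, and chooses the contaminated coordinate as an explicit function of $M$ and the coefficients so that $|Y_{n+1}-\hat{\mu}(\check{X}_{n+1}^{\rm{DI}})|\geq M$ deterministically, after which coverage forces $\sP(\hat q_n\geq M)\geq 1-\alpha$ exactly as you argue; your $d=1$, point-mass, identity-predictor version is just a more degenerate instance of the same mechanism, and your reading of the final limit statement (construction allowed to depend on $M$) matches what the paper actually proves.

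One caveat: the strengthening you sketch in the last paragraph is not correct. With a fixed heavy-tailed $P_Z$ (e.g.\ Cauchy), $\sP(|Z_{n+1}|\geq M)$ is \emph{not} bounded below by a positive constant uniformly in $M$ --- it decays like $c/M$ --- so your bound $\sP(\hat q_n\geq M)\geq 1-\alpha-\sP(|Z_{n+1}|<M)$ becomes vacuous once $\sP(|Z_{n+1}|<M)>1-\alpha$, and the marginal coverage guarantee cannot be applied conditionally on $\{|Z_{n+1}|\geq M\}$. Indeed the claimed conclusion is false: taking $\hat q_n$ to be the deterministic $(1-\alpha)$-quantile of $|Z_{n+1}|$ gives a PI with marginal coverage $1-\alpha$ and finite (even bounded) expected length under that fixed triple. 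So no single fixed $(\hat{\mu},P,P_Z)$ with infinite expected length follows from this route; the theorem's divergence statement should be read, as you say in your first interpretation, as a statement about the family of worst-case constructions indexed by $M$.
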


This theorem indicates that if outliers persist within the processed feature $\check{X}_{n+1}^{\rm{DI}}$, the further PI will fail to provide a meaningful coverage guarantee in a distribution-free and model-free fashion. Therefore, the sure detection property introduced in Assumption \ref{def:SS} is reasonable and necessary for predictive inference tasks with the existence of cellwise outliers. The proof of Theorem \ref{thm:impossible} {and a similar negative result under CQR score are} deferred to Appendix \ref{proof:thm:impossible}. In fact, Assumption \ref{def:SS} can be satisfied in practice if we choose relatively {smaller} detection thresholds $\{\tau_j\}_{j=1}^d$ in $\mathbf{D}$ procedure. In fact, a similar sure detection condition also appears in \citet{wasserman2009high,meinshausen2009p,liu2022cellwise}.

\section{Detect-then-impute conformal prediction}\label{sec:DI-SCP}
In this section, we develop two practical algorithms named \texttt{PDI-CP} and \texttt{JDI-CP} to construct the PI, and the corresponding coverage property is analyzed. The \texttt{PDI-CP} is based on a proxy of the \texttt{ODI} features in the previous section, and the \texttt{JDI-CP} is a Jackknife+ type construction to guarantee finite sample coverage.

\subsection{Proxy detection-imputation}
Since $\gO^*$ is impossible to know in practice, the best proxy we have access to is the output $\tilde{\gO}_{n+1}$ of detection procedure $\mathbf{D}$. By replacing $\gO^*$ with $\tilde{\gO}_{n+1}$ in \eqref{eq:ODI_features}, we have the \emph{proxy detection-imputation} (\texttt{PDI}) features
    \begin{align}
        \check{X}_i 
        &= \mathbf{I}(X_{i}, \hat{\gO}_i \cup \tilde{\gO}_{n+1}),\quad i= n_0, \ldots, n.\label{eq:PDI_feature_labeled}
    \end{align}
Denote the residuals in calibration set as $\{\check{R}_i := | Y_{i}-\hat{\mu}(\check{X}_i)|\}_{i=n_0}^{n}$, and we construct the PI for $Y_{n+1}$ as:
\begin{equation}\label{eq:PDI_interval}
    \hat{C}^{\rm{PDI}}(\tilde{X}_{n+1})= \hat{\mu}(\check{X}_{n+1}^{\rm{DI}})
    \pm \hat{q}_{\alpha}^+(\{\check{R}_i\}_{i=n_0}^n).
\end{equation}

\begin{algorithm}[ht]
    \renewcommand{\algorithmicrequire}{\textbf{Input:}}
	\renewcommand{\algorithmicensure}{\textbf{Output:}}
	\caption{\texttt{PDI-CP}}
	\label{alg:PDI}
	\begin{algorithmic}[1]
	\REQUIRE Calibration set $\{(X_i,Y_i)\}_{i=n_0}^n$, test feature $\tilde{X}_{n+1}$, prediction model $\hat{\mu}$, detection procedure $\mathbf{D}$, imputation procedure $\mathbf{I}$, miscoverage level $\alpha$.

  
 

        \STATE $\tilde{\gO}_{n+1} \gets \mathbf{D}(\tilde{X}_{n+1})$
        \STATE $\check{X}_{n+1}^{\rm{DI}} \gets \mathbf{I}(\tilde{X}_{n+1}, \tilde{\gO}_{n+1})$

        \FOR{$i=n_0,\ldots,n$}
        \STATE $\hat{\gO}_i \gets \mathbf{D}(X_i)$
        \STATE $\check{X}_i \gets \mathbf{I}(X_i, \hat{\gO}_i \cup \tilde{\gO}_{n+1})$

        \STATE $\check{R}_{i} \gets | Y_{i}-\hat{\mu}(\check{X}_i) |$

        \ENDFOR



        \STATE $\hat{C}^{\rm{PDI}}(\tilde{X}_{n+1}) \gets \hat{\mu}(\check{X}_{n+1}^{\rm{DI}})
    \pm \hat{q}_{\alpha}^+(\{\check{R}_i\}_{i=n_0}^n)$
      		
    \ENSURE $\hat{C}^{\rm{PDI}}(\tilde{X}_{n+1})$
\end{algorithmic}
\end{algorithm}

We call the construction stated in \eqref{eq:PDI_interval} as \texttt{PDI-CP}, and summarize its implementation in Algorithm \ref{alg:PDI}. Next, we analyze the coverage property of \texttt{PDI-CP}.
\begin{definition}[$\ell_1$-sensitivity]\label{def:l1}
    The $\ell_1$-sensitivity of prediction model $\hat{\mu}$ is  $\sup_{\|x-x^{\prime}\|_1 \leq 1}|\hat{\mu}(x) - \hat{\mu}(x^{\prime})| \leq S_{\hat{\mu}}$.
\end{definition}

The $\ell_1$-sensitivity measures the stability of a prediction model to its input, which is similar to that in differential privacy literature \citep{dwork2014algorithmic}. For example, for a linear regression model $\hat{\mu}(x) = \beta^{\top}x+b$, whose $\ell_1$-sensitivity is given by $S_{\hat{\mu}} = \|\beta\|_1$. 

\begin{definition}[Mean Imputation]\label{def:mean_imputation}
    The imputation function is $\phi_{j}(\cdot) = \bar{x}_{j} = \frac{1}{n_0-1}\sum_{i=1}^{n_0-1}X_{ij}$ for $j\in[d]$, i.e., the cellwise sample mean obtained from the training set.
\end{definition}

The following theorem provides the coverage property of \texttt{PDI-CP} under the Mean Imputation rule. 

\begin{theorem}\label{thm:PDI}
       Suppose the detection procedure $\mathbf{D}$ satisfies Assumptions \ref{def:SS} and \ref{def:ID}, and $\hat{\mu}$ has the $\ell_1$-sensitivity $S_{\hat{\mu}}$ in Definition \ref{def:l1}. Let $E_{i} := \max_{j\in [d]} |X_{ij} - \bar{x}_{j}|$, $i=n_0,\ldots,n,n+1$ be error of Mean Imputation in Definition \ref{def:mean_imputation}. We have
\begin{align}
        \sP&\LRl{Y_{n+1} \in \hat{C}^{\rm{PDI}}(\tilde{X}_{n+1})} 
        \geq 1 - \alpha\nonumber\\
        &\ - \mathbb{P}\left\{\hat{q}_{\alpha}^+\left(\{R_i^* - S_{\hat{\mu}}\cdot E_i\cdot |\tilde{\mathcal{O}}_{n+1}\setminus\mathcal{O}^*|\}_{i=n_0}^n\right) < R_{n+1}^*  \right.\nonumber\\
        &\ \ \ \ \ \ \ \ \ \ \left. \leq \hat{q}_{\alpha}^+\left(\{R_i^* + S_{\hat{\mu}}\cdot E_i\cdot |\tilde{\mathcal{O}}_{n+1}\setminus\mathcal{O}^*|\}_{i=n_0}^n\right)\right\}.\nonumber
    \end{align}
\end{theorem}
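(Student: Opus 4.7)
The plan is to transfer the finite-sample coverage of \texttt{ODI-CP} (Proposition~\ref{thm:ODI}) to \texttt{PDI-CP} by sandwiching the PDI residuals $\check{R}_i$ between two perturbations of the oracle residuals $R_i^*$, using only the $\ell_1$-sensitivity of $\hat{\mu}$ and the bounded imputation error of Mean Imputation. By Lemma~\ref{lemma:detection_SS} the processed test feature satisfies $\check{X}_{n+1}^{\rm{DI}} = \check{X}_{n+1}^*$, so $|Y_{n+1}-\hat{\mu}(\check{X}_{n+1}^{\rm{DI}})| = R_{n+1}^*$ and the PDI coverage probability equals $\sP\{R_{n+1}^* \leq \hat{q}_\alpha^+(\{\check{R}_i\}_{i=n_0}^n)\}$.

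Next, compare $\check{X}_i$ and $\check{X}_i^*$ coordinate by coordinate. Because $\gO^* \subseteq \tilde{\gO}_{n+1}$ under Assumption~\ref{def:SS}, the mask $\hat{\gO}_i \cup \tilde{\gO}_{n+1}$ used by $\check{X}_i$ contains the mask $\hat{\gO}_i \cup \gO^*$ used by $\check{X}_i^*$, and the two vectors agree outside $(\hat{\gO}_i \cup \tilde{\gO}_{n+1}) \setminus (\hat{\gO}_i \cup \gO^*) \subseteq \tilde{\gO}_{n+1}\setminus \gO^*$; on this difference set $\check{X}_i$ has entry $\bar{x}_j$ while $\check{X}_i^*$ has $X_{ij}$, so each coordinate-wise gap is at most $E_i$. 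Hence $\|\check{X}_i - \check{X}_i^*\|_1 \leq |\tilde{\gO}_{n+1}\setminus\gO^*|\cdot E_i$, and by Definition~\ref{def:l1} together with the triangle inequality for absolute residuals,
\[
R_i^- \;\leq\; \check{R}_i \;\leq\; R_i^+,\quad R_i^\pm := R_i^* \pm S_{\hat{\mu}}\cdot E_i \cdot |\tilde{\gO}_{n+1}\setminus \gO^*|.
\]

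Monotonicity of $\hat{q}_\alpha^+(\cdot)$ in its entries then yields $\hat{q}_\alpha^+(\{R_i^-\}) \leq \hat{q}_\alpha^+(\{\check{R}_i\}) \leq \hat{q}_\alpha^+(\{R_i^+\})$, so the PDI coverage event contains $\{R_{n+1}^* \leq \hat{q}_\alpha^+(\{R_i^-\})\}$. Using the set identity
\[
\sP\{R_{n+1}^* \leq \hat{q}_\alpha^+(\{R_i^-\})\} = \sP\{R_{n+1}^* \leq \hat{q}_\alpha^+(\{R_i^+\})\} - \sP\{\hat{q}_\alpha^+(\{R_i^-\}) < R_{n+1}^* \leq \hat{q}_\alpha^+(\{R_i^+\})\}
\]
together with $\hat{q}_\alpha^+(\{R_i^+\}) \geq \hat{q}_\alpha^+(\{R_i^*\})$ (because $R_i^+ \geq R_i^*$) and the oracle bound $\sP\{R_{n+1}^* \leq \hat{q}_\alpha^+(\{R_i^*\})\} \geq 1-\alpha$ from Proposition~\ref{thm:ODI} delivers the claimed lower bound.

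The main obstacle I anticipate is the set-algebraic step producing the sharp multiplier $|\tilde{\gO}_{n+1}\setminus\gO^*|$ rather than the looser $|\tilde{\gO}_{n+1}|$ or $d$: one must carefully track which coordinates are imputed in \emph{both} vectors and therefore cancel, which is where Assumption~\ref{def:SS} plays its essential role via Lemma~\ref{lemma:detection_SS}. Assumption~\ref{def:ID} enters only indirectly through the same lemma, ensuring the exchangeability of $\{R_i^*\}_{i=n_0}^{n+1}$ needed to invoke Proposition~\ref{thm:ODI}. Everything else reduces to quantile monotonicity and a clean split of one coverage event into a larger one minus a boundary strip.
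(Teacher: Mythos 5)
Your proposal is correct and follows essentially the same route as the paper: identify $\check{X}_{n+1}^{\rm{DI}}$ with the oracle test feature via Lemma~\ref{lemma:detection_SS}, bound $|\check{R}_i - R_i^*| \leq S_{\hat{\mu}}\cdot E_i\cdot|\tilde{\gO}_{n+1}\setminus\gO^*|$ using Assumption~\ref{def:SS} and the $\ell_1$-sensitivity (the paper does this with diagonal indicator matrices, you with coordinate-wise set algebra, which is only a cosmetic difference), and then compare quantiles of the perturbed residuals to invoke Proposition~\ref{thm:ODI}. Your endgame—sandwiching $\hat{q}_{\alpha}^+(\{\check{R}_i\}_{i=n_0}^n)$ between $\hat{q}_{\alpha}^+(\{R_i^-\}_{i=n_0}^n)$ and $\hat{q}_{\alpha}^+(\{R_i^+\}_{i=n_0}^n)$ and splitting off the boundary strip one-sidedly—is a slightly cleaner packaging of the paper's min/max coverage-gap step, but it is the same argument in substance.
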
 

\begin{figure*}[t]
\centering  
\centerline{\includegraphics[width=1.0\linewidth]{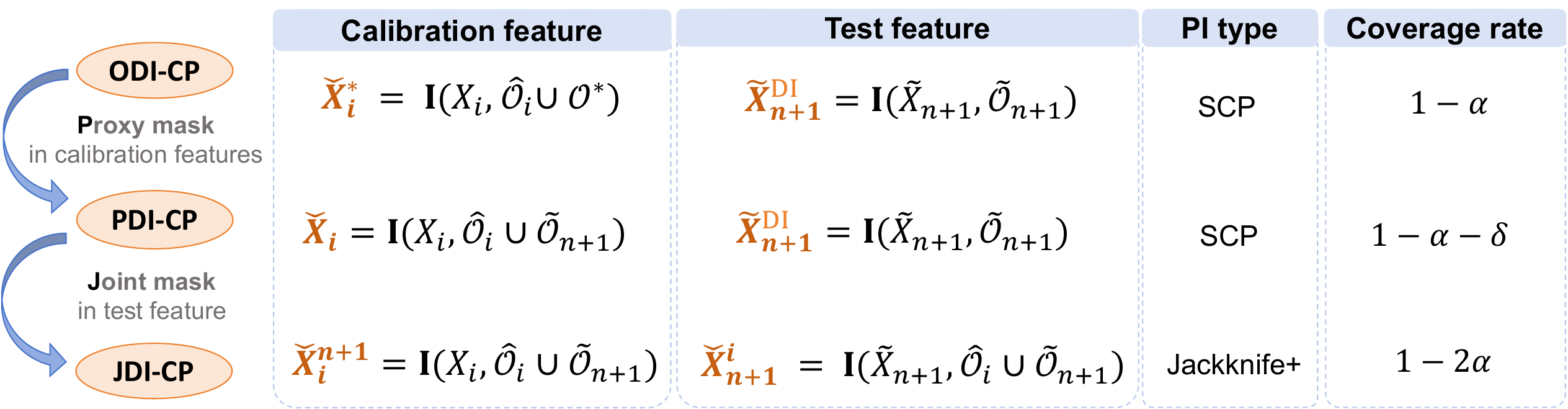}}
    \caption{ An illustration of the relationship of \texttt{ODI-CP}, \texttt{PDI-CP} and \texttt{JDI-CP}. $\hat{\gO}_i = \mathbf{D}(X_i)$ and $\tilde{\gO}_{n+1} = \mathbf{D}(\tilde{X}_{n+1})$.}
    \label{fig:method_relation}
\end{figure*}

According to the representation, the coverage gap of \texttt{PDI-CP} depends on the error of Mean Imputation and the number of false discoveries $|\tilde{\gO}_{n+1}\setminus \gO^*|$, which represents the number of inliers 
incorrectly identified as outliers by the detection procedure $\mathbf{D}$. In an ideal scenario where $\tilde{\gO}_{n+1} = \gO^*$, \texttt{PDI-CP} can achieve the finite sample coverage as \texttt{ODI-CP}. The proof for Theorem \ref{thm:PDI} is deferred to Appendix \ref{proof:thm:PDI}.




\subsection{Joint detection-imputation}

The coverage gaps of \texttt{PDI-CP} are caused by the nonexchangeability between \texttt{PDI} feature $\check{X}_i$ and processed test feature $\check{X}_{n+1}^{\rm{DI}}$. This subsection proposes a robust approach to constructing exchangeable features by applying a modified \textbf{DI} procedure on test feature $\tilde{X}_{n+1}$, which achieves $1-2\alpha$ finite sample coverage in the worst case.

We begin with a careful inspection of \emph{pairwise} exchangeability in test feature and calibration feature after \textbf{DI}. Under Lemma \ref{lemma:detection_SS}, we can rewrite \texttt{PDI} feature \eqref{eq:PDI_feature_labeled} as $\check{X}_i = \mathbf{I}(X_i, \hat{\gO}_i \cup \hat{\gO}_{n+1} \cup \gO^*)$. Since $\hat{\gO}_i$ and $\hat{\gO}_{n+1}$ are exchangeable, it is easy to verify that $\mathbf{I}(X_i, \hat{\gO}_i \cup \hat{\gO}_{n+1} \cup \gO^*)$ and $\mathbf{I}(X_{n+1}, \hat{\gO}_i \cup \hat{\gO}_{n+1} \cup \gO^*)$ are also exchangeable. Using Lemma \ref{lemma:detection_SS} again, the latter one is identical to $\mathbf{I}(\tilde{X}_{n+1}, \hat{\gO}_i \cup \tilde{\gO}_{n+1})$. In light of this observation, we have the following pairwise \emph{joint detection-imputation} (\texttt{JDI}) features: for $i= n_0, \ldots, n$,
\begin{align}
    \check{X}_{n+1}^i &= \mathbf{I}(\tilde{X}_{n+1}, \hat{\gO}_i \cup \tilde{\gO}_{n+1}),\nonumber\\
    \check{X}_i^{n+1} &= \mathbf{I}(X_i, \hat{\gO}_i \cup \tilde{\gO}_{n+1}).\nonumber
\end{align}
Notice that, $\check{X}_i^{n+1}$ is the same as the \texttt{PDI} feature \eqref{eq:PDI_feature_labeled}, and we use a new notation here to emphasize the pairwise relation.

With the pairwise \texttt{JDI} features, we leverage the Jackknife+ technique proposed by \citet{barber2021predictive} to construct the PI for $Y_{n+1}$:
\begin{align}\label{eq:JDI_interval}
    \hat{C}^{\rm{JDI}}(\tilde{X}_{n+1}) &= \left[\hat{q}_{\alpha}^{-}(\{\hat{\mu}(\check{X}_{n+1}^i) - \check{R}_i\}_{i=n_0}^n),\right.\nonumber\\
    &\left.\ \ \ \ \  \hat{q}_{\alpha}^{+}(\{\hat{\mu}(\check{X}_{n+1}^i) + \check{R}_i\}_{i=n_0}^n)\right],
\end{align}
where $\check{R}_i = |Y_i - \hat{\mu}(\check{X}_i^{n+1})| = |Y_i - \hat{\mu}(\check{X}_i)|$ is the same as the residuals in \eqref{eq:PDI_interval}.
The method presented above is called \texttt{JDI-CP}, and
we summarize the implementation in Algorithm \ref{alg:JDI}.


\begin{algorithm}
    \renewcommand{\algorithmicrequire}{\textbf{Input:}}
	\renewcommand{\algorithmicensure}{\textbf{Output:}}
	\caption{\texttt{JDI-CP}}
	\label{alg:JDI}
	\begin{algorithmic}[1]
		\REQUIRE Same as Algorithm \ref{alg:PDI}. 


        \STATE $\tilde{\gO}_{n+1} \gets \mathbf{D}(\tilde{X}_{n+1})$

        \FOR{$i=n_0,\ldots,n$}
        \STATE $\hat{\gO}_i \gets \mathbf{D}(X_i)$

        \STATE $\check{X}_i^{n+1} \gets \mathbf{I}(X_i, \hat{\gO}_i \cup \tilde{\gO}_{n+1})$

        \STATE $\check{X}_{n+1}^i \gets \mathbf{I}(\tilde{X}_{n+1}, \hat{\gO}_i \cup \tilde{\gO}_{n+1})$

        \STATE $\check{R}_i \gets |Y_i - \hat{\mu}( \check{X}_i^{n+1})|$
        


        \ENDFOR

        \STATE $\hat{Q}_{\alpha}^- \gets \hat{q}_{\alpha}^{-}(\{\hat{\mu}(\check{X}_{n+1}^i) - \check{R}_i\}_{i=n_0}^n)$

        \STATE $\hat{Q}_{\alpha}^+ \gets \hat{q}_{\alpha}^{+}(\{\hat{\mu}(\check{X}_{n+1}^i) + \check{R}_i\}_{i=n_0}^n)$

        \STATE $\hat{C}^{\rm{JDI}}(\tilde{X}_{n+1}) \gets [\hat{Q}_{\alpha}^{-}, \hat{Q}_{\alpha}^{+}]$
    		
    \ENSURE $\hat{C}^{\rm{JDI}}(\tilde{X}_{n+1})$
\end{algorithmic}
\end{algorithm}

\begin{theorem}\label{thm:JDI_SS}
     Suppose the detection procedure $\mathbf{D}$ satisfies Assumptions \ref{def:SS} and \ref{def:ID}, then
   \begin{align}
        \sP\left\{Y_{n+1} \in \hat{C}^{\rm{JDI}}(\tilde{X}_{n+1}) \right\} \geq 1-2\alpha.\nonumber
    \end{align}
\end{theorem}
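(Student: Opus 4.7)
I would first condition on the training set $\gD_{t}$, so that $\hat{\mu}$ and all components of $\mathbf{D}$ and $\mathbf{I}$ are deterministic, and view the remaining randomness as coming from $\gD_{c}\cup\{(X_{n+1},Y_{n+1}),Z_{n+1}\}$. By Lemma~\ref{lemma:detection_SS}, under Assumptions~\ref{def:SS} and~\ref{def:ID} one has $\tilde{\gO}_{n+1}=\hat{\gO}_{n+1}\cup\gO^{*}$, and because every outlier coordinate of $\tilde{X}_{n+1}$ sits inside $\gO^{*}\subseteq\tilde{\gO}_{n+1}$, the imputation step overwrites every entry originating from $Z_{n+1}$. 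Consequently the pairwise \texttt{JDI} features can be rewritten as
\begin{align*}
\check{X}_{i}^{n+1} &= \mathbf{I}(X_{i},\,\hat{\gO}_{i}\cup\hat{\gO}_{n+1}\cup\gO^{*}),\\
\check{X}_{n+1}^{i} &= \mathbf{I}(X_{n+1},\,\hat{\gO}_{i}\cup\hat{\gO}_{n+1}\cup\gO^{*}),
\end{align*}
where the mask depends only on the unordered pair $\{X_{i},X_{n+1}\}$ together with the fixed set $\gO^{*}$.

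I would then extend this construction symmetrically over the augmented index set: for distinct $i,j\in\{n_{0},\ldots,n+1\}$ define $\check{X}_{i}^{j}=\mathbf{I}(X_{i},\hat{\gO}_{i}\cup\hat{\gO}_{j}\cup\gO^{*})$ and the pairwise residual $R_{ij}=|Y_{i}-\hat{\mu}(\check{X}_{i}^{j})|$, so in particular $\check{R}_{i}=R_{i,n+1}$ and $|Y_{n+1}-\hat{\mu}(\check{X}_{n+1}^{i})|=R_{n+1,i}$. Because $\{(X_{i},Y_{i})\}_{i=n_{0}}^{n+1}$ are i.i.d.\ and the mask is symmetric in its two index arguments, swapping $(X_{i},Y_{i})\leftrightarrow(X_{j},Y_{j})$ transposes $R_{ij}$ with $R_{ji}$; more generally the joint law of the matrix $(R_{ij})_{i\neq j}$ is invariant under the full symmetric group action on $\{n_{0},\ldots,n+1\}$.

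The final step is to translate miscoverage into a pairwise-comparison counting event. From \eqref{eq:JDI_interval}, the upper failure $Y_{n+1}>\hat{q}_{\alpha}^{+}(\{\hat{\mu}(\check{X}_{n+1}^{i})+\check{R}_{i}\}_{i=n_{0}}^{n})$ forces $Y_{n+1}-\hat{\mu}(\check{X}_{n+1}^{i})>\check{R}_{i}\ge 0$, hence $R_{n+1,i}>R_{i,n+1}$, for a sufficiently large set of calibration indices, and the lower failure yields the same conclusion by a symmetric argument. Setting $A_{ij}=\mathbbm{1}\{R_{ij}>R_{ji}\}$ produces a comparison matrix satisfying $A_{ij}+A_{ji}\le 1$ and inheriting full exchangeability from $(R_{ij})$. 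A direct application of the jackknife+ counting argument of \citet{barber2021predictive}, which hinges only on these two structural properties, then bounds the probability of each one-sided failure by $\alpha$, and a union bound gives $\sP\{Y_{n+1}\notin\hat{C}^{\rm{JDI}}(\tilde{X}_{n+1})\}\le 2\alpha$.

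The main obstacle is the first step: identifying the index-symmetric form of the \texttt{JDI} features so as to simultaneously eliminate the asymmetry between the roles of $i$ and $n+1$ and the dependence on the outlier values $Z_{n+1}$. Once this symmetrization is in hand, the argument reduces to the standard jackknife+ template, with exchangeability of the pairwise residual matrix as the only structural input.
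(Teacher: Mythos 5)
Your proposal follows essentially the same route as the paper's proof: after conditioning on the training set, Lemma \ref{lemma:detection_SS} is used to rewrite the pairwise \texttt{JDI} features in the index-symmetric form $\check{X}_i^j=\mathbf{I}(X_i,\hat{\gO}_i\cup\hat{\gO}_j\cup\gO^*)$, and the bound then follows from the jackknife+ strange-points counting argument of \citet{barber2021predictive} applied to the exchangeable matrix of pairwise residuals. One small correction to your final accounting: the factor $2$ does not arise from bounding each one-sided failure by $\alpha$ and union-bounding (the counting argument yields no per-side $\alpha$ bound); rather, both one-sided failures imply the same event that the test index is ``strange'' (it wins at least $(1-\alpha)(n+2-n_0)$ comparisons), and the deterministic count $|S(A)|<2\alpha(n+2-n_0)$ combined with exchangeability bounds the probability of that single event by $2\alpha$.
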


The proof of Theorem \ref{thm:JDI_SS} is given in Appendix \ref{proof:them:JDI_SS}.
Even though \texttt{JDI-CP} cannot achieve the target level $1-\alpha$ due to the intrinsic limit of Jackknife+ type method \citep{barber2021predictive}, the finite sample result in Theorem \ref{thm:JDI_SS} is still important in practical tasks for two reasons: (1) $\tilde{\gO}_{n+1}$ may contain many false discoveries which leads to a large coverage gap for \texttt{PDI-CP}; (2) the coverage property of \texttt{JDI-CP} holds for arbitrary imputation rules. In Appendix \ref{append:conservative_JDI}, we provide an alternative version of \texttt{JDI-CP}, which achieves a finite sample $1-\alpha$ coverage guarantee under the same conditions in Theorem \ref{thm:JDI_SS}. However, it is quite conservative compared with the original version because it uses $\cup_{i=n_0}^n \hat{\gO}_i \cup \tilde{\gO}_{n+1}$ to mask calibration features and test feature. In our experiments, \texttt{JDI-CP} has a robust coverage control without adjusting the quantile level in \eqref{eq:JDI_interval}.

We notice that Jackknife+ technique is also used in CP-MDA-Nested algorithm in \citet{zaffran2023conformal,zaffran2024predictive} to build PI with missing values in features, and a similar $1-2\alpha$ coverage property is proved. However, the algorithmic designs of CP-MDA-Nested and \texttt{JDI-CP} are very different. The former uses the observed and exchangeable missing patterns to perform nested masking for each pair of calibration and test features, while \texttt{JDI-CP} uses outputs of $\mathbf{D}$ (i.e., $\hat{\gO}_i$ and $\tilde{\gO}_{n+1}$), which are not exchangeable.

To end this section, we use Figure \ref{fig:method_relation} to illustrate the connection and difference of the proposed methods.

\section{Coverage analysis beyond isolated detection}\label{subsec:beyond_ID_assumption}

The coverage results in the previous section are based on the isolated detection in Assumption \ref{def:ID}, now we discuss the theoretical properties of our method when the detection score relies on other coordinates, e.g., DDC detection procedure \citep{rousseeuw2018detecting}. Now we rewrite the detection score $\hat{s}_j(x_j) = \hat{s}_j(x_j;x_{-j})$, where $x_{-j}$ is the subvector of $x$ by dropping $x_j$.
Denote
 \begin{align}
     \tilde{\gT}_{n+1} &= \{j\in [d]\setminus \gO^*: \hat{s}_j(X_{n+1,j};\tilde{X}_{n+1,-j}) > \tau_j\}, \nonumber\\
     \hat{\gT}_{n+1} &= \{j\in [d]\setminus \gO^*: \hat{s}_j(X_{n+1,j};X_{n+1,-j}) > \tau_j\}.\nonumber
 \end{align}
The two sets above represent the false discoveries of $\mathbf{D}$ applied on contaminated feature $\tilde{X}_{n+1}$ and clean feature $X_{n+1}$ except for $j \in \gO^*$, respectively. Without Assumption \ref{def:ID}, the \texttt{ODI} features $\{\check{X}_{i}^{*}\}_{i=n_0}^n$ and $\check{X}_{n+1}^{\rm{DI}}$ are no longer exchangeable, and the coverage guarantee of \texttt{ODI-CP} and \texttt{JDI-CP} will be broken. The next theorem characterizes the coverage gaps of our algorithms through $\tilde{\gT}_{n+1}$ and $\hat{\gT}_{n+1}$.

\begin{theorem}\label{thm:violation_assum2}
     Suppose the detection procedure $\mathbf{D}$ satisfies Assumptions \ref{def:SS}, and $\hat{\mu}$ has the $\ell_1$-sensitivity $S_{\hat{\mu}}$ in Definition \ref{def:l1}. Under the Mean Imputation in Definition \ref{def:mean_imputation},\\
     (1) {\texttt{PDI-CP} satisfies that}
        \begin{align}
        &\sP\LRl{Y_{n+1} \in \hat{C}^{\rm{PDI}}(\tilde{X}_{n+1})} 
        \geq 1 - \alpha\nonumber\\
        &\ \ -\left[F_{R^*}\left(\hat{q}^+_{\alpha}(\{R_i^*\}_{i=n_0}^n) + \Delta_{n+1}\right)-F_{R^*}(\hat{q}^+_{\alpha}(\{R_i^*\}_{i=n_0}^n))\right]\nonumber\\
        &\ \ -\mathbb{P}\left\{\hat{q}_{\alpha}^+\left(\{R_i^* - S_{\hat{\mu}}\cdot E_i\cdot |\tilde{\gT}_{n+1}|\}_{i=n_0}^n\right) < R_{n+1}^*  \right.\nonumber\\
            &\ \ \ \ \ \ \ \ \ \ \left. \leq \hat{q}_{\alpha}^+\left(\{R_i^* + S_{\hat{\mu}}\cdot E_i\cdot |\tilde{\gT}_{n+1}|\}_{i=n_0}^n\right)\right\},\nonumber
        \end{align}
        where {$F_{R^*}$} is the distribution function of \texttt{ODI-CP} residuals $\{R_i^*\}_{i=n_0}^n$, and $\Delta_{n+1} = S_{\hat{\mu}} \cdot|\tilde{\gT}_{n+1} \triangle \hat{\gT}_{n+1}| \cdot E_{n+1}$ where $\triangle$ denotes the symmetric difference.\\
    (2) {\texttt{JDI-CP} satisfies that}
        \begin{align}
            &\sP\LRl{Y_{n+1} \in \hat{C}^{\rm{JDI}}(\tilde{X}_{n+1})} 
            \geq 1 - 2\alpha\nonumber\\
            &\ \ \ - \left[F_Y(\hat{Q}_{\alpha}^+ + 2\Delta_{n+1}) - F_Y(\hat{Q}_{\alpha}^+) \right]\nonumber\\
            &\ \ \ -\left[F_Y(\hat{Q}_{\alpha}^-) - F_Y(\hat{Q}_{\alpha}^- - 2\Delta_{n+1})\right],\nonumber
        \end{align}
        where $F_Y$ is the distribution function of $Y$, $\hat{Q}_{\alpha}^-$ and $\hat{Q}_{\alpha}^+$ are defined in Algorithm \ref{alg:JDI} where $\check{X}_{i}^{n+1}$ and $\check{X}_{n+1}^i$ are replaced by $\mathbf{I}(X_i, \hat{\gO}_i \cup \gO^* \cup \hat{\gT}_{n+1})$ and $\mathbf{I}(X_{n+1}, \hat{\gO}_i \cup \gO^* \cup \hat{\gT}_{n+1})$ respectively.
\end{theorem}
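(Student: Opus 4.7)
My plan is to extend the proofs of Theorems \ref{thm:PDI} and \ref{thm:JDI_SS} by tracking the additional slack introduced when Assumption \ref{def:ID} is dropped. First, under Mean Imputation the masked coordinates of $\check{X}_{n+1}^{\rm DI}$ are overwritten by training means, giving the identity $\check{X}_{n+1}^{\rm DI}=\mathbf{I}(X_{n+1},\gO^*\cup\tilde{\gT}_{n+1})$ despite the contamination. The corresponding exchangeable oracle is $\check{X}_{n+1}^*=\mathbf{I}(X_{n+1},\hat{\gO}_{n+1}\cup\gO^*)=\mathbf{I}(X_{n+1},\gO^*\cup\hat{\gT}_{n+1})$, whose mask differs from the actual one only on $\tilde{\gT}_{n+1}\triangle\hat{\gT}_{n+1}$; analogously the \texttt{PDI} feature $\check{X}_i$ and the \texttt{ODI} feature $\check{X}_i^*$ disagree only on $\tilde{\gT}_{n+1}\setminus(\hat{\gO}_i\cup\gO^*)$, of cardinality at most $|\tilde{\gT}_{n+1}|$. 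Because $\hat{\gO}_i=\mathbf{D}(X_i)$ depends only on the clean $X_i$, the family $\{\hat{\gO}_i\}_{i=n_0}^{n+1}$ remains exchangeable even without Assumption \ref{def:ID}, so the oracle residuals $\{R_i^*\}_{i=n_0}^{n+1}$ are still exchangeable and Proposition \ref{thm:ODI} supplies the baseline $\mathbb{P}\{R_{n+1}^*\le\hat{q}_{\alpha}^+(\{R_i^*\}_{i=n_0}^n)\}\ge 1-\alpha$.

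Combining the mask identities with the $\ell_1$-sensitivity of $\hat{\mu}$ and the Mean Imputation errors $E_i$ yields the quantitative perturbation bounds
\begin{align*}
|\check{R}_i-R_i^*|&\le S_{\hat{\mu}}\,E_i\,|\tilde{\gT}_{n+1}|,\\
|R_{n+1}^{\rm DI}-R_{n+1}^*|&\le S_{\hat{\mu}}\,E_{n+1}\,|\tilde{\gT}_{n+1}\triangle\hat{\gT}_{n+1}|=\Delta_{n+1}.
\end{align*}
The calibration inequality is the same one used in the proof of Theorem \ref{thm:PDI}; the genuinely new ingredient is the test-side slack $\Delta_{n+1}$, which vanishes precisely when $\tilde{\gT}_{n+1}=\hat{\gT}_{n+1}$, recovering the isolated case of Assumption \ref{def:ID}.

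For \texttt{PDI-CP} I would then mimic the decomposition used to prove Theorem \ref{thm:PDI} while paying one extra term for the test-side perturbation. Writing $q_1,q_2,q_3$ for $\hat{q}_{\alpha}^+$ applied to $\{R_i^*\}$, $\{R_i^*-S_{\hat{\mu}}E_i|\tilde{\gT}_{n+1}|\}$, $\{R_i^*+S_{\hat{\mu}}E_i|\tilde{\gT}_{n+1}|\}$ respectively (so $q_2\le q_1\le q_3$), the sandwich bounds contain the failure event $\{R_{n+1}^{\rm DI}>\hat{q}_{\alpha}^+(\{\check{R}_i\})\}$ inside $\{R_{n+1}^*+\Delta_{n+1}>q_2\}$. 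I would split this last event through $q_1$ into (i) the oracle tail $\{R_{n+1}^*>q_1\}$, contributing at most $\alpha$; (ii) the calibration-inflation gap $\{q_2<R_{n+1}^*\le q_1\}\subseteq\{q_2<R_{n+1}^*\le q_3\}$, reproducing the $B$ term of Theorem \ref{thm:PDI}; and (iii) the residual region where only the test-side shift $\Delta_{n+1}$ tips coverage, whose probability is captured by the random marginal CDF increment $F_{R^*}(q_1+\Delta_{n+1})-F_{R^*}(q_1)$ via the exchangeability between $R_{n+1}^*$ and the calibration residuals, together with an outer expectation over the random pair $(q_1,\Delta_{n+1})$. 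For \texttt{JDI-CP} I would introduce the pairwise oracle features $\mathbf{I}(X_i,\hat{\gO}_i\cup\gO^*\cup\hat{\gT}_{n+1})$ and $\mathbf{I}(X_{n+1},\hat{\gO}_i\cup\gO^*\cup\hat{\gT}_{n+1})$, which form pairwise exchangeable copies and therefore, by repeating the argument of Theorem \ref{thm:JDI_SS}, guarantee $\mathbb{P}\{Y_{n+1}\in[\hat{Q}_{\alpha}^-,\hat{Q}_{\alpha}^+]\}\ge 1-2\alpha$ for the oracle Jackknife+ endpoints. The actual endpoints differ from the oracle ones by at most the sum of two $\Delta_{n+1}$-sized shifts (one through the center $\hat{\mu}(\check{X}_{n+1}^i)$ and one through the residual width $\check{R}_i$), so $Y_{n+1}$ can miss the actual interval only if it lies within $2\Delta_{n+1}$ of one of the oracle endpoints, contributing precisely the two $F_Y$-increment terms in the statement.

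The main obstacle is that $\Delta_{n+1}$ is genuinely random and correlated with $R_{n+1}^*$, $Y_{n+1}$ and the order statistics $q_1,q_2,q_3,\hat{Q}_{\alpha}^{\pm}$, because all of these depend on $X_{n+1}$ through the coupled detection scores $\hat{s}_j(\cdot;x_{-j})$. The familiar shortcut of shifting a quantile by an \emph{independent} perturbation and charging the corresponding CDF increment is therefore not directly available; instead I would carry $\Delta_{n+1}$ inside the probability throughout the decomposition and only at the very last step invoke the exchangeability-induced marginal CDFs $F_{R^*}$ and $F_Y$, interpreting the resulting random CDF increments as outer expectations so as to arrive at the form stated in the theorem.
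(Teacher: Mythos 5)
Your proposal is correct and follows essentially the same route as the paper: the same oracle features $\mathbf{I}(X_{n+1},\gO^*\cup\hat{\gT}_{n+1})$ and $\mathbf{I}(X_i,\hat{\gO}_i\cup\gO^*\cup\hat{\gT}_{n+1})$, the same $\ell_1$-sensitivity/Mean-Imputation perturbation bounds yielding $\Delta_{n+1}=S_{\hat{\mu}}|\tilde{\gT}_{n+1}\triangle\hat{\gT}_{n+1}|E_{n+1}$ and the calibration-side $S_{\hat{\mu}}E_i|\tilde{\gT}_{n+1}|$ term, and for \texttt{JDI-CP} the same pairwise-exchangeable oracle Jackknife+ interval whose endpoints are shifted by at most $2\Delta_{n+1}$. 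The only difference is organizational (you fold everything into one failure-event decomposition, whereas the paper first bounds the \texttt{ODI-CP} gap via the $F_{R^*}$ increment and then adds the \texttt{PDI}-vs-\texttt{ODI} quantile sandwich), and your treatment of the random CDF increments is at the same level of informality as the paper's own argument.
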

The proof of Theorem \ref{thm:violation_assum2} is given in Appendix \ref{proof:thm:violation_assum2}.
If the sets $\tilde{\gT}_{n+1}$ and $\hat{\gT}_{n+1}$ are infinitely close, then the error $\Delta_{n+1} \to 0$ and $\hat{C}^{\rm{JDI}}(\tilde{X}_{n+1})$ still has $1-2\alpha$ coverage guarantee.
Further, if $|\tilde{\gT}_{n+1}| = |\hat{\gT}_{n+1}| \to 0$, it means the detection method can accurately find the cellwise outliers without false discoveries, then $\hat{C}^{\rm{PDI}}(\tilde{X}_{n+1})$ can also achieve the target $1-\alpha$ coverage.

We use the DDC method as an example to demonstrate the relation between $\tilde{\gT}_{n+1}$ and $\hat{\gT}_{n+1}$ (see Appendix \ref{append:DDC} for details of DDC method). A commonly used similarity metric of two sets is Jaccard similarity \citep{murphy1996finley}:
\begin{align}
    \rm{Jaccard}(\tilde{\gT}_{n+1},\hat{\gT}_{n+1}) = \frac{|\tilde{\gT}_{n+1} \cap \hat{\gT}_{n+1}|}{|\tilde{\gT}_{n+1} \cup \hat{\gT}_{n+1}|} \in [0,1]\nonumber
\end{align}
with a value closer to 1 indicating $\tilde{\gT}_{n+1}$ and $\hat{\gT}_{n+1}$ are more similar. Figure \ref{fig:DDC_illustrate} visualizes $\rm{Jaccard}(\tilde{\gT}_{n+1},\hat{\gT}_{n+1})$ and the number of occurrences of $\tilde{\gT}_{n+1} = \hat{\gT}_{n+1} = \emptyset$ during 500 trials, each with 100 test points. It can be seen that $\tilde{\gT}_{n+1}$ and $\hat{\gT}_{n+1}$ are almost the same except in a few trials. In addition, the frequency of $\tilde{\gT}_{n+1} = \hat{\gT}_{n+1} = \emptyset$ reaches above $0.6$.

\begin{figure}[ht]
\centering  
\centerline{\includegraphics[width=1.0\linewidth]{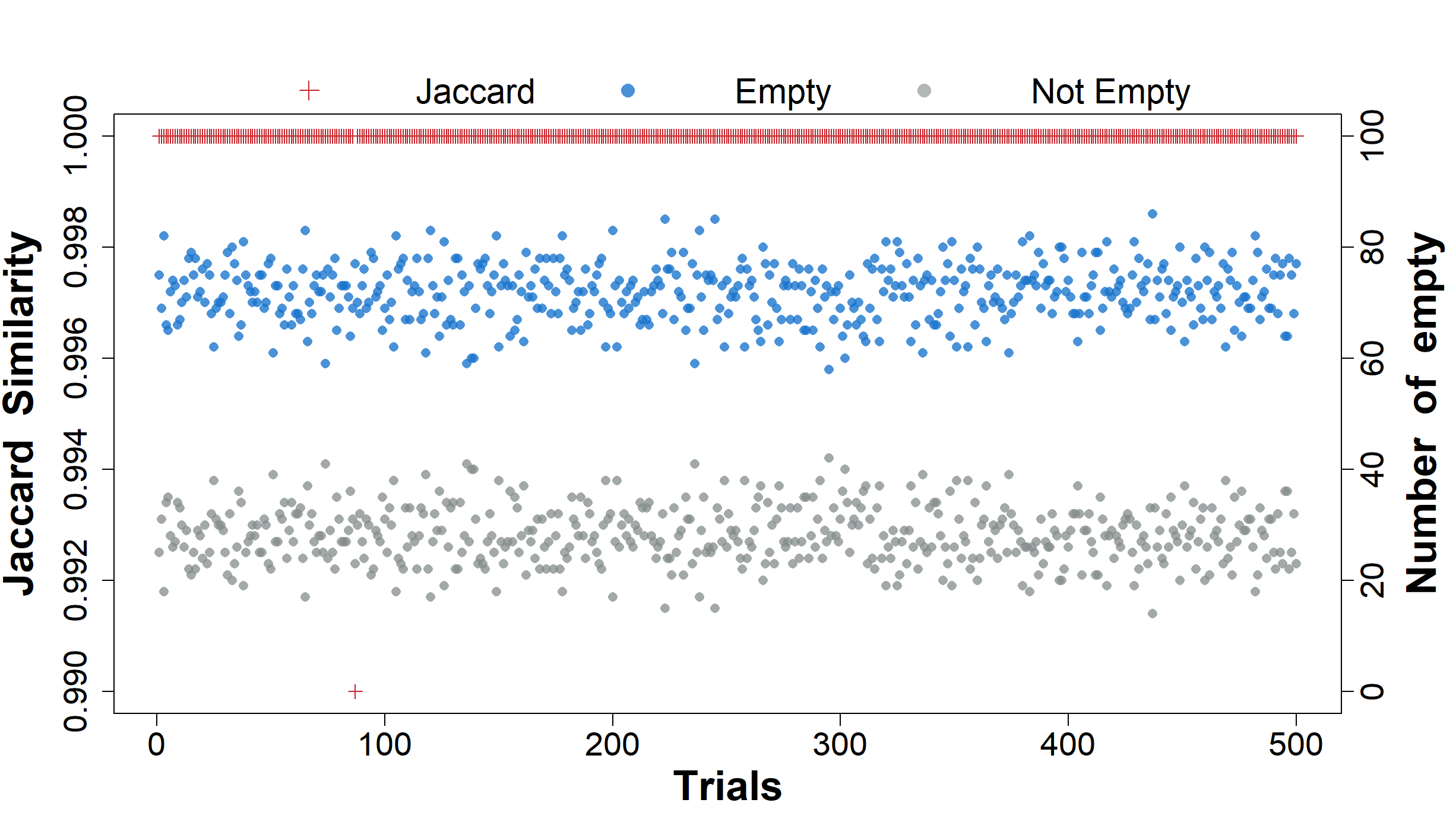}}
    \caption{ An illustration of the similarity of $\tilde{\gT}_{n+1}$ and $\hat{\gT}_{n+1}$ (left axis, red cross marks) and the number of occurrences of empty and nonempty sets (right axis, blue and gray dots).}
    \label{fig:DDC_illustrate}
\end{figure}


\section{Simulation}\label{sec:simulation}
We write $N(\mu, \sigma^2)$ for the normal distribution with mean $\mu$ and variance $\sigma^2$, $SN(\mu, \sigma^2, \alpha)$ for the skewed normal with skewness parameter $\alpha$, $t(k)$ for the $t$-distribution with $k$ degrees of freedom, and $Bern(p)$ for the Bernoulli distribution with success probability $p$. Given any $x \in \sR^d$, define $f(x)=\mathbbm{E}(Y_i|X_i=x)$ and $\eta_i = Y_i-f(X_i)$. We consider three data generation settings in \citet{lei2018distribution}:
\begin{itemize}\setlength\itemsep{0.2em}
    \item {\textbf{Setting A} (linear, homoscedastic, light-tailed):}$f(x) = \beta^{\top}x$; $\{X_{ij}\}_{j=1}^d \stackrel{i.i.d.}{\sim } N(0,1)$; $ \eta_{i} \sim N(0,1) $;$\eta_i \perp X_i$. 
    
    \item \textbf{Setting B} (nonlinear, homoscedastic, heavy-tailed): $ f(x) $ is an additive function of B-splines of $ x_{1}, \ldots, x_{d} $; $\{X_{ij}\}_{j=1}^d \stackrel{i.i.d.}{\sim } N(0,1)$; $ \eta_{i} \sim t(2) $; $\eta_i \perp X_i$.

    \item \textbf{Setting C} (linear, correlated features, heteroskedastic, heavy-tailed): $f(x) = \beta^{\top}x$; $\{X_{ij}\}_{j=1}^d \stackrel{i.i.d.}{\sim } \frac{1}{3}N(0,1) +\frac{1}{3}SN(0,1,5) +\frac{1}{3}Bern(0.5)$, and then given autocorrelation by redefining sequentially each $ X_{ij}$ to be a convex combination of $ X_{ij}, X_{i,j-1}, \ldots, X_{i,(j-3) \wedge 1} $; $ \eta_{i} \sim t(2) $, with variable standard deviation $ 1+2\left|f\left(X_{i}\right)\right|^{3} / \mathbb{E}\left(|f(X)|^{3}\right) $. 

\end{itemize}



Let $\epsilon$ be the probability that a cell in the test feature is contaminated. Then we obtain the test feature with entries
\begin{align}\label{eq:sim_X}
    \tilde{X}_{n+1,j} = \begin{cases}
        X_{n+1, j}, &\rm{with\ probability\ }1-\epsilon\\
        Z_{n+1, j}, &\rm{with\ probability\ }\epsilon
    \end{cases},
\end{align}
where the noise $Z_{n+1,j} \overset{i.i.d.}{\sim } N(\mu,\sigma)$ and $\mu, \sigma$ are randomly sampled from $U(0,10)$. All simulation results in the following are averages over 200 trials with 200 labeled data and 100 test data. The nominal coverage level is set to be $1-\alpha=90\%$ and $d=15$. For comparison, we also conduct following methods (see Appendix \ref{append:three_base} for specific forms):
\begin{enumerate}\setlength\itemsep{0.2em}
    \item \texttt{Baseline}: mask the entries of $\{X_i\}_{i=n_0}^n \cup \tilde{X}_{n+1}$ by $\gO^*$, then compute residuals on the calibration set and construct SCP interval, which can be seen as an optimal oracle approach to construct split conformal PI in the cellwise outlier case;

    \item \texttt{SCP}: directly compute residuals on the calibration set and contaminated test feature, then construct SCP interval without performing \textbf{DI};

    \item \texttt{WCP}: first estimate the likelihood ratio between labeled data and test data using random forests approach, then construct WCP interval.
\end{enumerate}
\begin{figure}[ht]
\centering  
\centerline{\includegraphics[width=\linewidth]{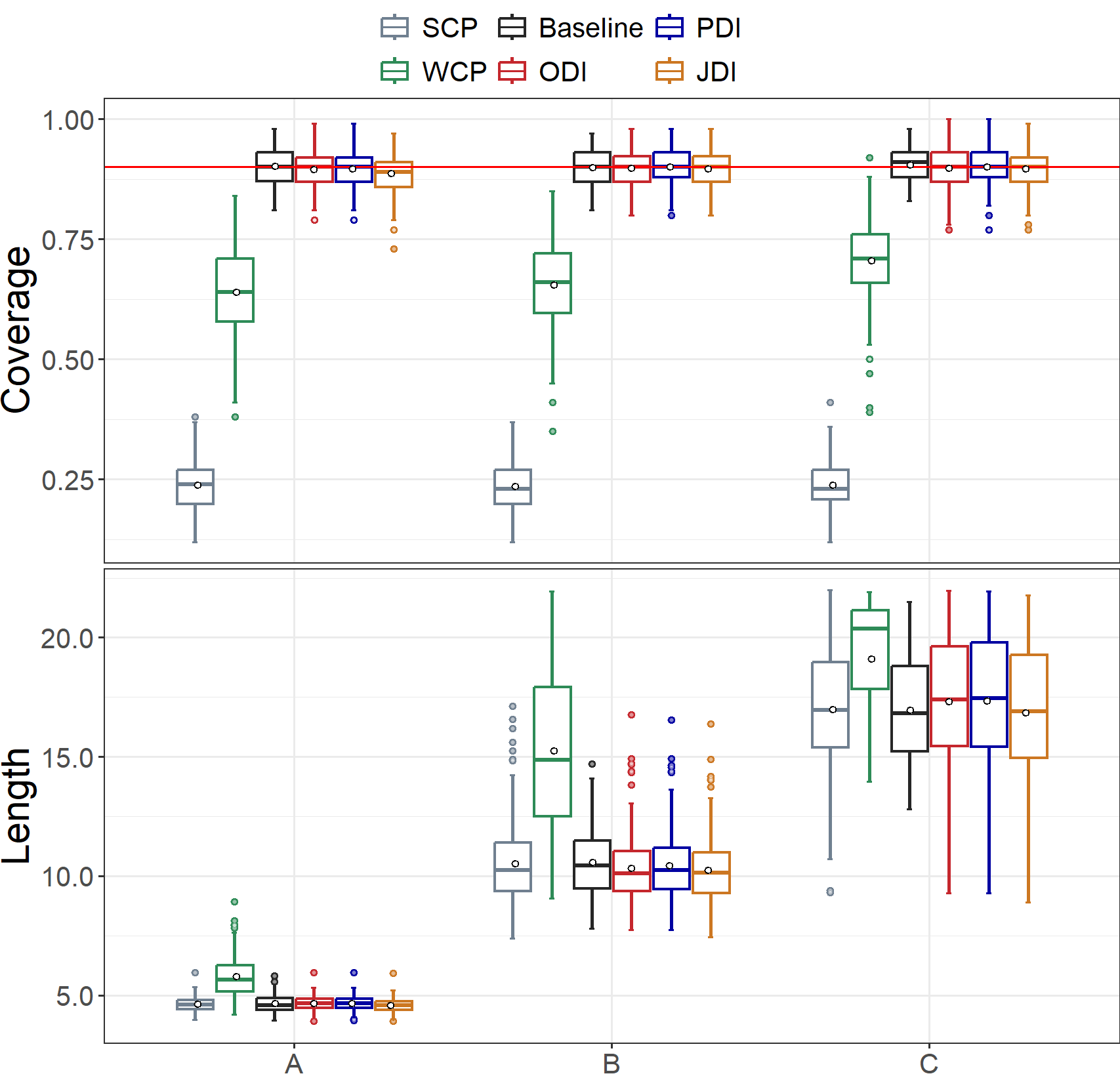}}
    \caption{ Simulation results of six methods. $\mathbf{D}$ is DDC, $\mathbf{I}$ is Mean Imputation, $\epsilon = 0.1$ and $\tau_j=\sqrt{\chi _{1,0.95}^{2}}$. The red line is the target coverage level $1-\alpha = 90\%$.}
    \label{fig:compare}
\end{figure}

Figure \ref{fig:compare} presents the empirical coverage and length of six methods. As expected, \texttt{SCP} and \texttt{WCP} fail to reach the target coverage level. Notably, when the contamination probability is high, e.g. $\epsilon=0.2$, \texttt{WCP} even outputs a PI with infinite width. Meanwhile, our proposed method can achieve $1-\alpha$ target coverage and return PIs with almost the same lengths as \texttt{Baseline} method. Since \texttt{SCP} and \texttt{WCP} cannot provide valid coverage control, we regard \texttt{Baseline} as the benchmark for the subsequent experiments.

\subsection{Combinations with other detection methods}

This experiment is to verify the validity of our methods under other plausible cellwise detection methods besides DDC. Here we consider two procedures: the one-class SVM classifier method \citep{10.1214/22-AOS2244} with $\tau_j = 0.2$ and the cellMCD estimate method \citep{raymaekers2023cellwise} with $\tau_j = \sqrt{\chi _{1,0.99}^{2}}$, where $\tau_j$ is determined to control the FDR (false discovery rate). The former satisfies Assumption \ref{def:ID} while the latter does not. According to Figure \ref{fig:detect}, our method still has the coverage guarantee when employing other detection procedures. The empirical TPR (true positive rate) and FDR of detection methods are given in Appendix \ref{append:FDR}.
\begin{figure}[ht]
\centering
\subfigure[One-class SVM]{
	\includegraphics[width=\linewidth]{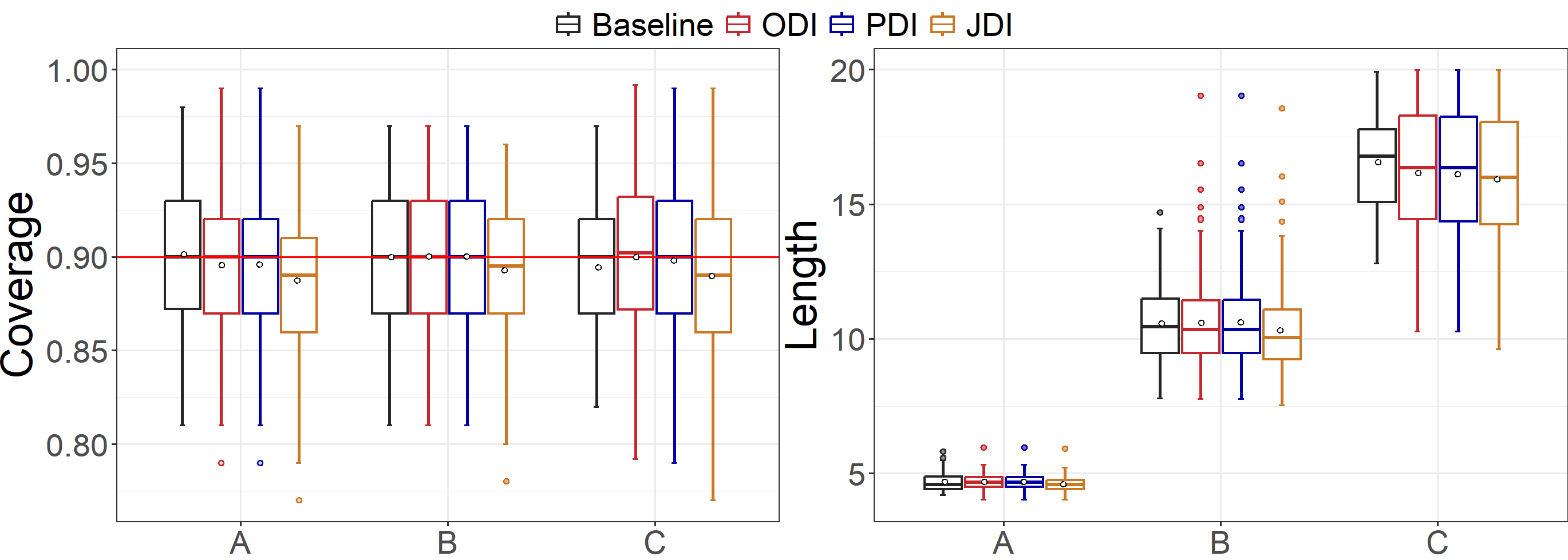}
    \label{fig:SVM}}
    
\subfigure[cellMCD]{	
    \includegraphics[width=\linewidth]{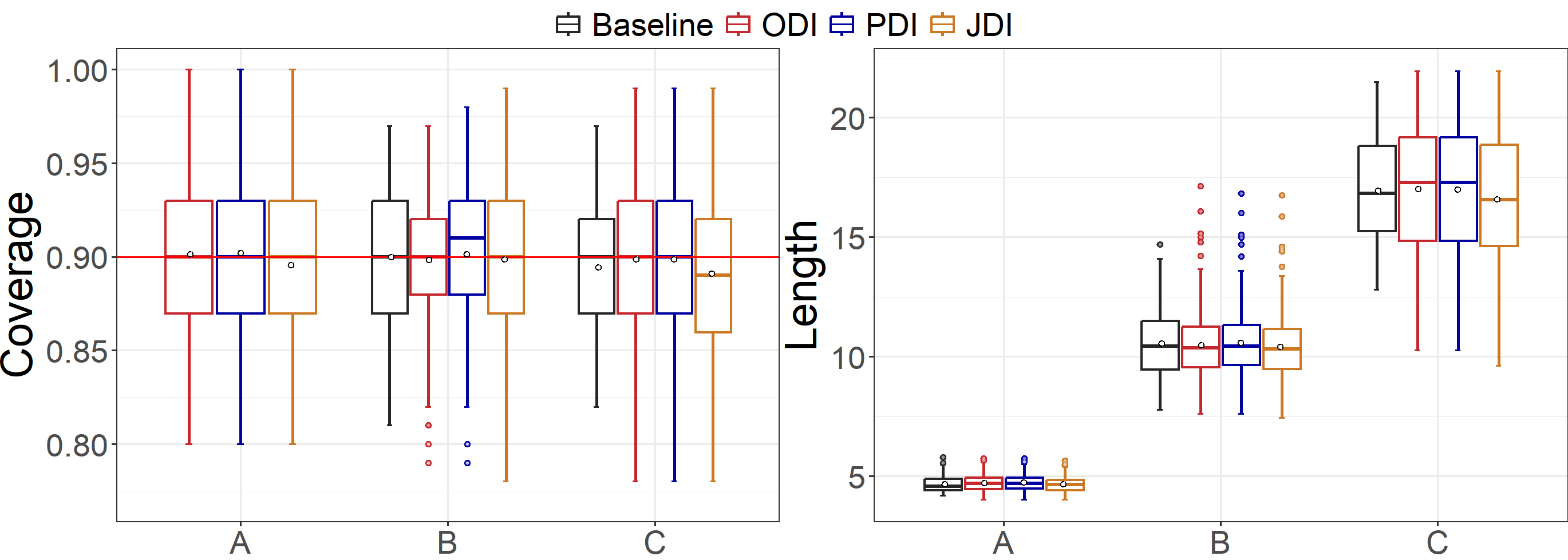}
    \label{fig:MCD}}

\caption{Simulation results of \texttt{Baseline} and our methods with different $\mathbf{D}$ procedures. $\mathbf{I}$ is Mean Imputation and $\epsilon = 0.1$.}
\label{fig:detect}
\end{figure}

\subsection{Combinations with other imputation methods}
Besides Mean Imputation, we also conduct experiments under the other two imputation methods: k-Nearest Neighbour (kNN, \citet{troyanskaya2001missing}) and Multivariate Imputation by Chained Equations (MICE, \citet{van1999multiple}). Figure \ref{fig:impute} shows that if we replace Mean Imputation with kNN or MICE in \textbf{DI}, then our method is still able to achieve target $1-\alpha$ coverage.

\begin{figure}[t]
\centering
\subfigure[kNN]{
	\includegraphics[width=\linewidth]{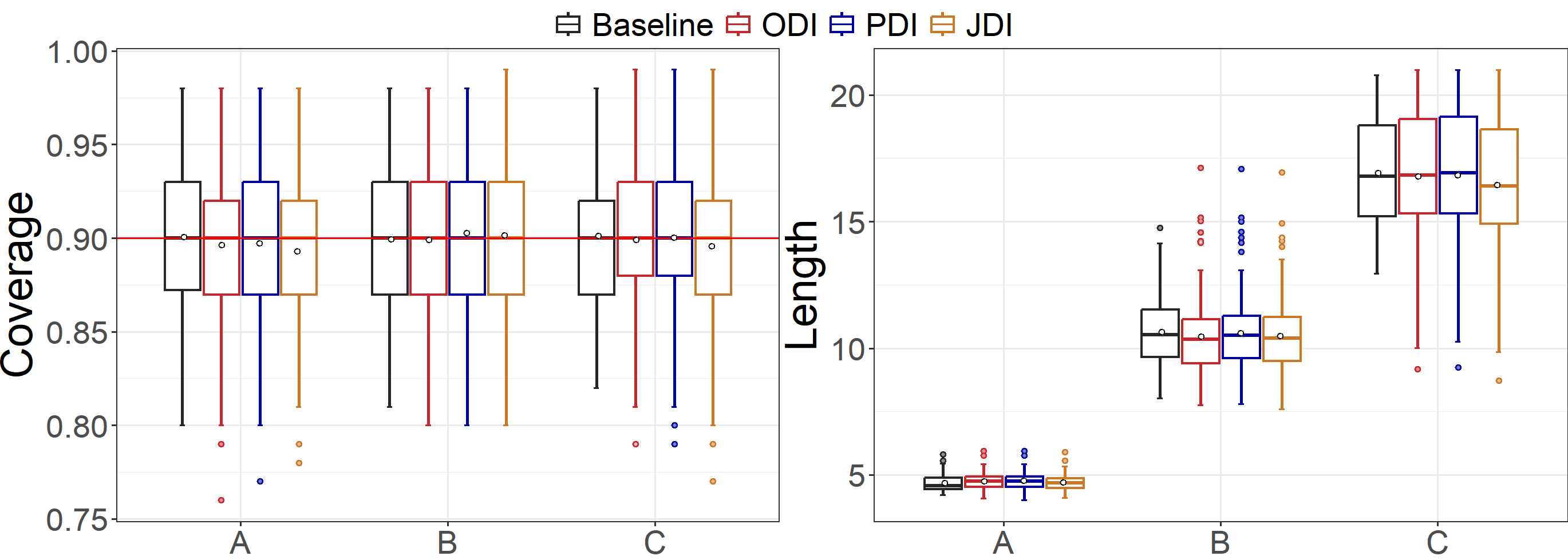}
    \label{fig:knn}}
    
\subfigure[MICE]{	
    \includegraphics[width=\linewidth]{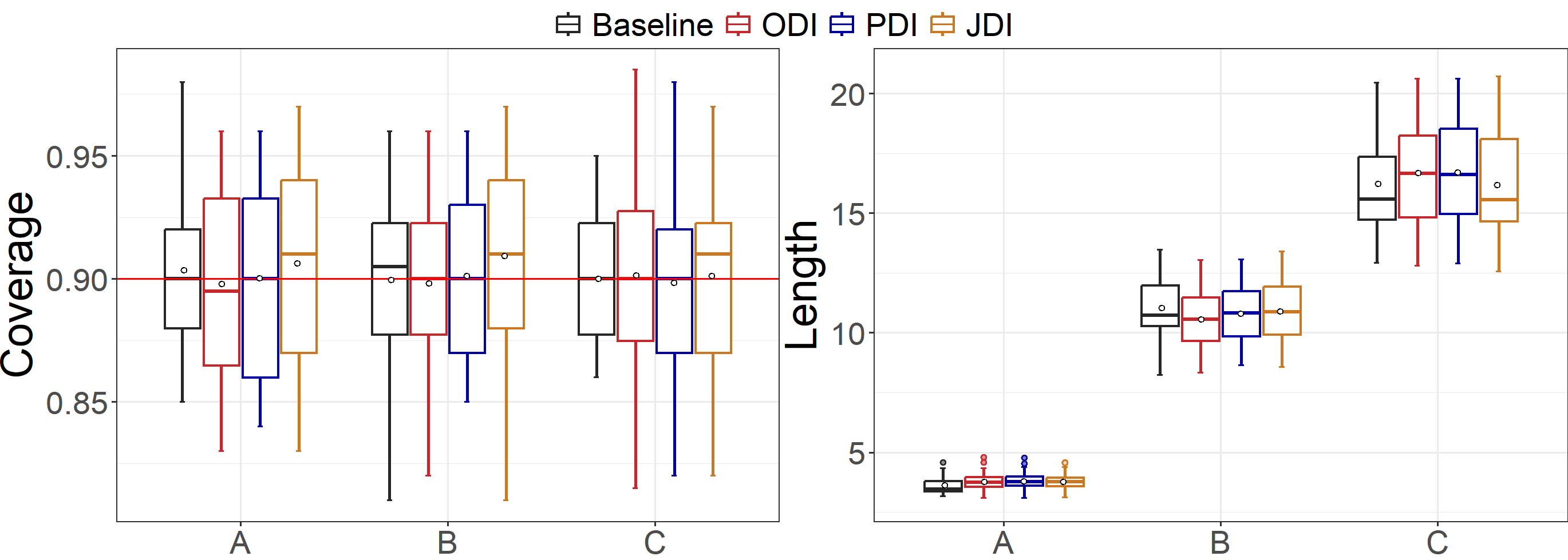}
    \label{fig:mice}}

\caption{ Simulation results of \texttt{Baseline} and our methods with different $\mathbf{I}$ procedures. $\mathbf{D}$ is DDC, $\epsilon = 0.1$ and $\tau_j=\sqrt{\chi_{1,0.95}^{2}}$.}
\label{fig:impute}
\end{figure}

\subsection{Performance under different contaminated ratios}\label{sim:contaminate}
\begin{figure}[t]
\centering  
\centerline{\includegraphics[width=\linewidth]{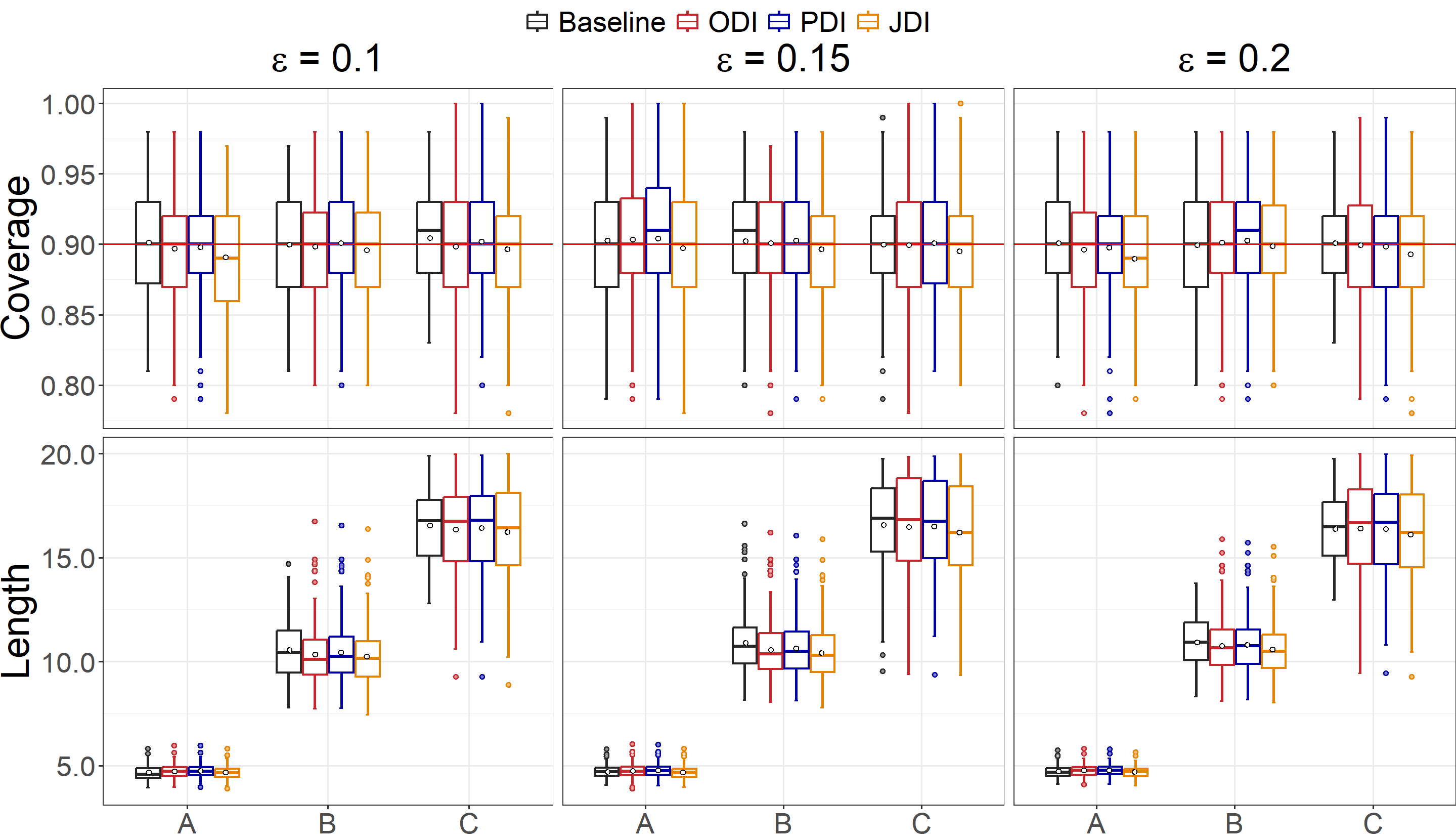}}
    \caption{Simulation results of \texttt{Baseline} and our methods when $\epsilon \in \{0.1,0.15,0.2\}$. $\mathbf{D}$ is DDC and $\mathbf{I}$ is Mean Imputation.}
    \label{fig:mean}
\end{figure}
From \eqref{eq:sim_X}, we can see the probability that at least one cell of $X_{n+1}$ is contaminated by $Z_{n+1}$ is $1-(1-\epsilon)^d$, which grows very quickly as $\epsilon$ and $d$ increase. For example, in our setting $d = 15$, $\epsilon=0.05$ suffices to have over $46\%$ of contaminated test data on average, while $\epsilon=0.2$ can achieve over $97\%$ of contaminated test data.
Here we explore the effect of contamination levels $\epsilon$ on our method. We set $\mathbf{D}$ as DDC, and the parameter $p$ in detection threshold $\tau_j = \sqrt{\chi _{1,p}^{2} }$ as $\{0.95, 0.93, 0.9\}$ corresponding to $\epsilon \in \{0.1, 0.15, 0.2\}$ to control FDR. According to Figure \ref{fig:mean}, our method can achieve a very tight coverage control under each contamination level. Other supplementary simulation results refer to Appendix \ref{append:63}.

\subsection{{Performance under different detection thresholds}}
We conduct this experiment by varying the DDC detection threshold $\sqrt{\chi_{1,p}^2}$ (adjusting $p$). As the threshold decreases, the number of discoveries increases, leading to a higher FDR and a higher TPR. Table \ref{tab:TPR} shows that our method has a robust coverage performance when the threshold changes and can achieve approximate control of coverage when TPR$<$1.

\begin{table}[ht]
  \centering
  \caption{Empirical TPR and FDR of DDC with different thresholds under Setting A when $\epsilon=0.1$.}
  \vskip 0.15in
  \begin{tabular}{lcccc}
    \toprule
    $p$  & 0.99 & 0.9 & 0.7 &0.5 \\ 
    \midrule
    TPR  & 0.987 & 0.992 & 0.995 & 1 \\
    FDR  & 0.035 & 0.340 & 0.669 & 0.793 \\
    PDI coverage & 0.902 & 0.901 & 0.907 & 0.909 \\
    JDI coverage & 0.895 & 0.899 & 0.904 & 0.904 \\ \bottomrule
  \end{tabular}
\label{tab:TPR}
\end{table}

\subsection{{Performance with CQR score}}
To demonstrate that our method can be combined with other non-conformity scores, we conduct an experiment using CQR to construct PIs. We compare the \texttt{Baseline} with our method under Setting A with different contaminated ratios. Figure \ref{fig:CQR} shows that the PIs constructed by our method with CQR score can still approximately achieve the target coverage rate.

\begin{figure}[H]
\centering  
\centerline{\includegraphics[width=\linewidth]{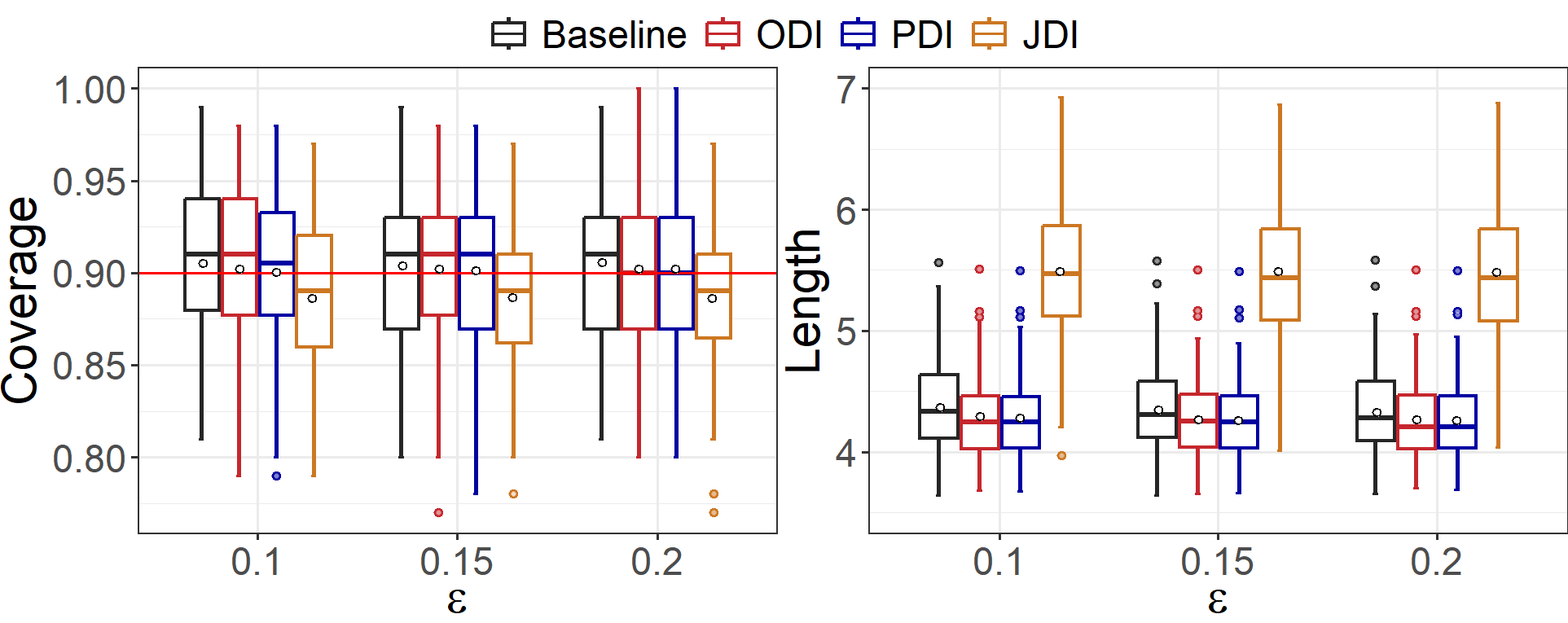}}
    \caption{Simulation results of \texttt{Baseline} and our methods when PI is constructed by CQR. $\mathbf{D}$ is DDC and $\mathbf{I}$ is Mean Imputation.}
    \label{fig:CQR}
\end{figure}

\section{Application on real data}
\subsection{Airfoil data}
We apply the proposed method to the airfoil dataset from the UCI Machine Learning Repository 
\citep{dua2019uci}, where the response $Y$ and covariates $X$ (with 5 dimensions) are described in Appendix \ref{append:airfoil}. We select 1000 labeled data and 500 test data in 100 trials. Since it is unknown which cells are outliers in reality, we artificially introduce outliers with $\epsilon=0.02$ to construct test features with both genuine and artificial cellwise outliers. The details of the experiment are presented in Appendix \ref{append:airfoil}. Figure \ref{fig:airoil} shows that our method can achieve $1-\alpha$ coverage while \texttt{SCP} and \texttt{WCP} fail to reach the target
coverage. 
\begin{figure}[ht]
\centering  
\centerline{\includegraphics[width=\linewidth]{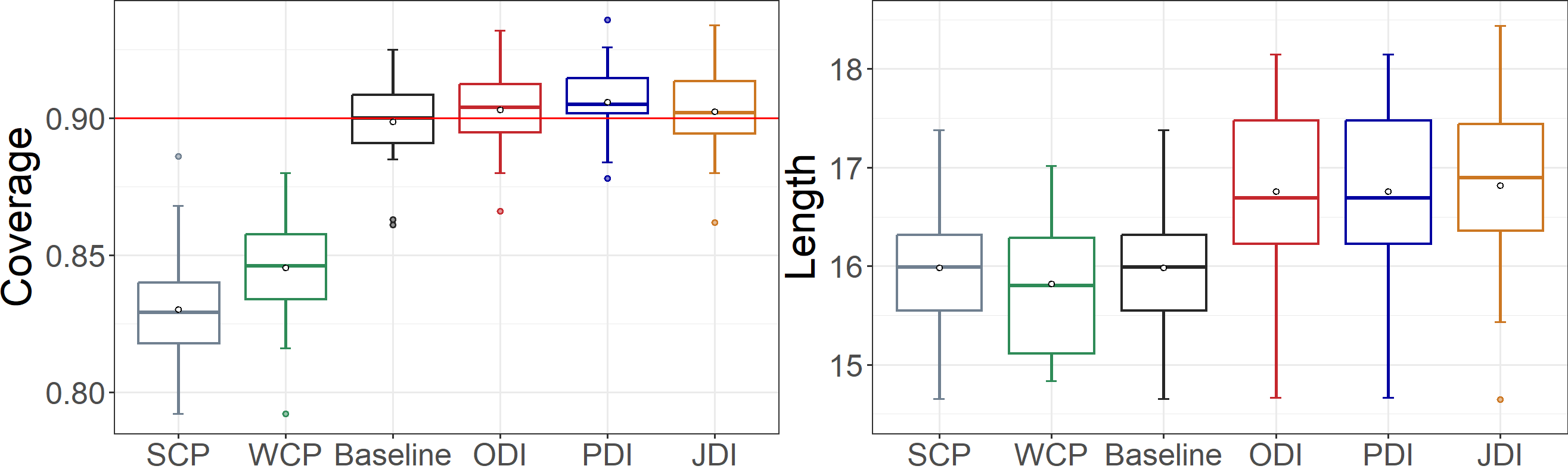}}
    \caption{ Experimental results on airfoil dataset. $\mathbf{D}$ is DDC and $\mathbf{I}$ is Mean Imputation.}
    \label{fig:airoil}
\end{figure}

\subsection{Wind direction data}
Another example involves the hourly wind direction data from a meteorological station in the Central-West region of Brazil (https://tempo.inmet.gov.br/TabelaEstacoes/A001). In 100 trials, we randomly select 1000 labeled data and 500 test data, and add artificial outliers with $\epsilon=0.02$ to test features. The variable list and data processing steps are presented in Appendix \ref{append:wind}. From Figure \ref{fig:wind}, we conclude that our method outperforms \texttt{SCP} and \texttt{WCP} in actual cases where cellwise outliers are present in the test feature.

\begin{figure}[ht]
\centering  
\centerline{\includegraphics[width=\linewidth]{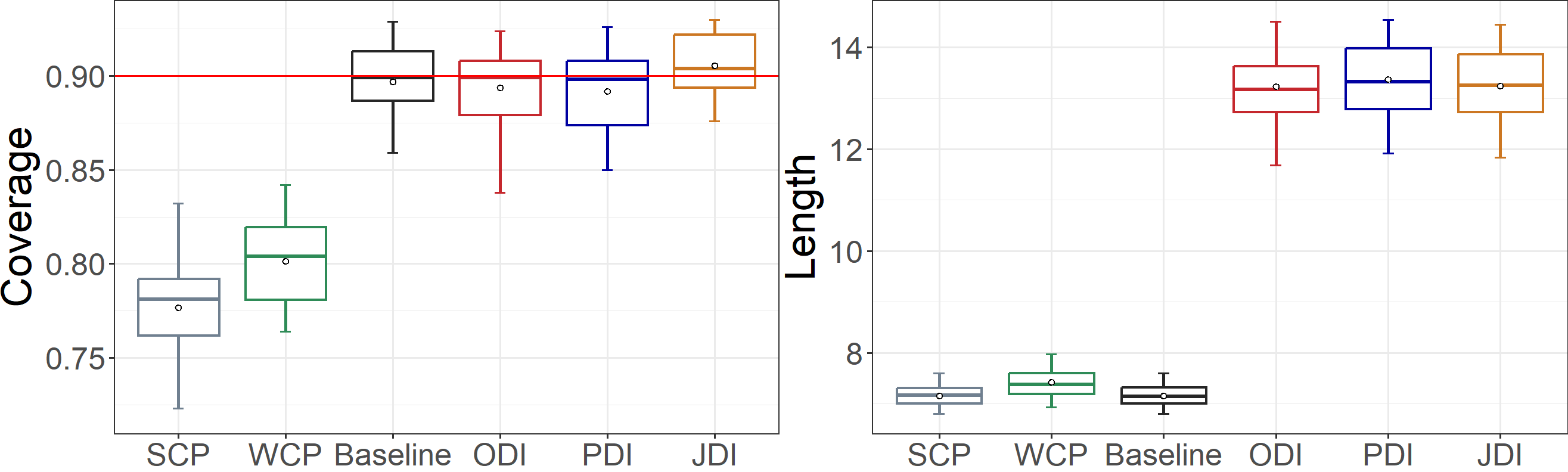}}
    \caption{ Experimental results on wind direction dataset. $\mathbf{D}$ is DDC and $\mathbf{I}$ is Mean Imputation.}
    \label{fig:wind}
\end{figure}

    


\subsection{{Riboflavin data}}
To further demonstrate robustness, we test our method on the gene expression dataset for riboflavin production provided by DSM (Kaiseraugst, Switzerland), which was offered by \citet{buhlmann2014high} and confirmed to have cellwise outliers by \citet{liu2022cellwise}. The details can be found in Appendix \ref{append:Riboflavin}. Figure \ref{fig:Riboflavin} shows that our methods maintain coverage above the target $1-\alpha = 90\%$ while WCP fails to provide meaningful PIs, that is, the PIs constructed by WCP may have infinite length.

\begin{figure}[t]
\centering  
\centerline{\includegraphics[width=\linewidth]{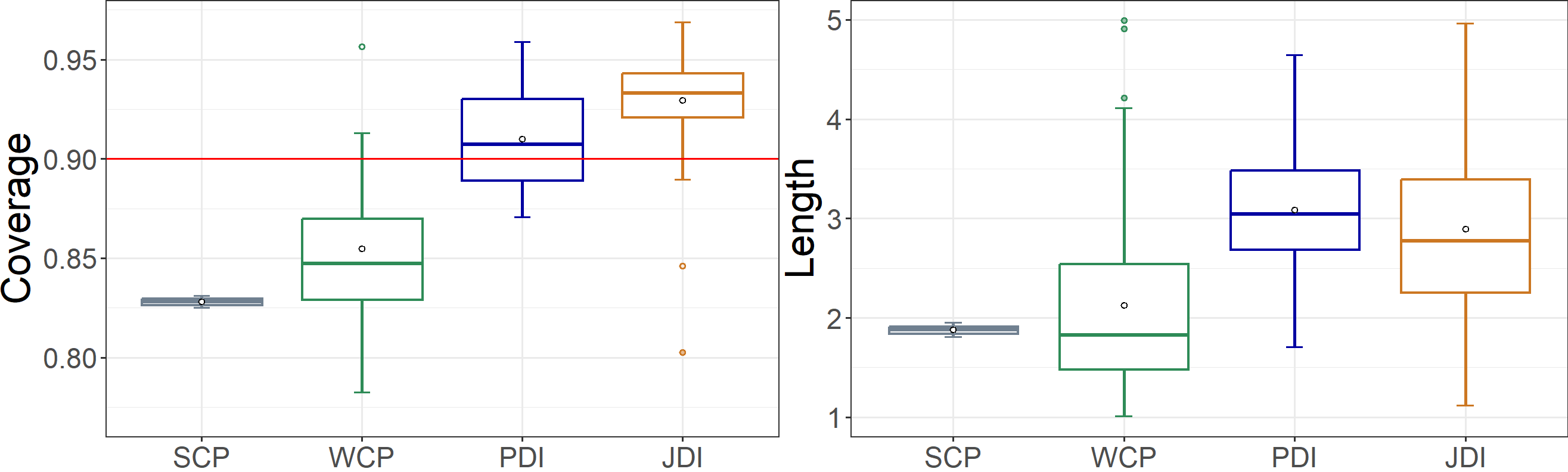}}
    \caption{ Experimental results on riboflavin dataset. $\mathbf{D}$ is DDC and $\mathbf{I}$ is Mean Imputation.}
    \label{fig:Riboflavin}
\end{figure}

\section{Conclusion}
This paper proposes a new detect-then-impute conformal prediction framework to address the cellwise outliers in the test feature. We develop two efficient algorithms \texttt{PDI-CP} and \texttt{JDI-CP} to construct prediction intervals, which can be wrapped around arbitrary mainstream detection and imputation procedures. In particular, \texttt{JDI-CP} achieves a finite sample $1-2\alpha$ coverage guarantee under the sure detection property of the detection procedure. We conduct extensive experiments to illustrate the robustness and efficiency of our proposed algorithms in both synthetic data and real applications.

\section*{Acknowledgements}
We would like to thank the anonymous reviewers and area chair for their helpful comments. Changliang Zou was supported by the National Key R\&D Program of China (Grant Nos. 2022YFA1003800, 2022YFA1003703), and the National Natural Science Foundation of China (Grant Nos. 12231011). Haojie Ren was supported by the National Key R\&D Program of China (Grant No. 2024YFA1012200), and the National Natural Science Foundation of China (Grant No. 12471262), Young Elite Scientists Sponsorship Program by CAST and Shanghai Jiao Tong University 2030 Initiative.

\section*{Impact Statement}
This paper presents work whose goal is to advance the field of conformal prediction, which promotes the reliability of machine learning. The tools introduced in this study can be utilized across a wide range of machine learning applications, including those with societal implications, but they do not directly implicate any specific ethical or impact-related concerns, so there is nothing that we feel must be specifically highlighted here.


\bibliography{main}
\bibliographystyle{icml2025}

\newpage
\appendix
\onecolumn

\allowdisplaybreaks
\numberwithin{equation}{section}

\section{{Notation table}}
\begin{table}[H]
\centering
\caption{The mathematical notations in this paper.}
\vskip 0.15in
\begin{tabular}{lcc}
\toprule
    \textbf{Notation} & \textbf{Meaning} & \textbf{Comments} \\ \hline
    $X_i$& The $i$th instance &Data model\\
    $Y_i$& The label corresponding to $X_i$&Data model\\
    $\tilde{X}_{n+1}$ & The test feature with cellwise outliers &Data model\\
    $Z_{n+1}$&The instance of outliers with arbitrarily distributed &Data model\\
    $\check{X}_{n+1}^{\rm{DI}}$& The test feature processed by \textbf{DI}& Data model\\
    $X_{ij}$ & The $j$th coordinate of the $i$th instance &Data model\\
    $x$ & The vector $x=(x_1,x_2,...,x_d)^T \in \sR^d$&Data model\\
    $x_{\ell}$ & The $\ell$th coordinate of $x$ &Data model\\
    $\check{X}_i^*$& The \texttt{ODI} feature&Data model\\
    $\check{X}_i$& The \texttt{PDI} feature&Data model\\
    $\check{X}_{n+1}^i$&The \texttt{JDI} feature&Data model\\
    $\check{X}_i^{n+1}$&The \texttt{JDI} feature which is the same as the \texttt{PDI} feature $\check{X}_i$&Data model\\
    
    $R_i^*$& The residual computed by \texttt{ODI} features&Algorithm output\\
    $\check{R}_i$& The residual computed by \texttt{PDI} features&Algorithm output\\

    $\mathcal{D}_{l}$&The labeled dataset&Set\\
    $\mathcal{D}_t$& The training dataset&Set\\
    $\mathcal{D}_c$& The calibration dataset &Set\\
    $\gO^*$&The coordinates of cellwise outliers in $\tilde{X}_{n+1}$&Set\\
    $\hat{\gO}_i$&The detected coordinates by applying $\mathbf{D}$ to $X_{i}$&Set\\
    $\tilde{\gO}_{n+1}$&The detected coordinates by applying $\mathbf{D}$ to $\tilde{X}_{n+1}$&Set\\
    $\tilde{\gT}_{n+1}$&The false discoveries of $\mathbf{D}$ applied on $\tilde{X}_{n+1}$ except for $j \in \gO^*$&Set\\
    $\hat{\gT}_{n+1}$&The false discoveries of $\mathbf{D}$ applied on $X_{n+1}$ except for $j \in \gO^*$&Set\\

    $\hat{\mu}(\cdot)$&The prediction model&Function\\
    $\hat{q}_{\alpha}^-(\cdot)$& The $\lceil \alpha (n+1)\rceil$ quantile function&Function\\
    $\hat{q}_{\alpha}^+(\cdot)$& The $\lceil(1-\alpha)(n+1)\rceil$ quantile function&Function\\
    $\hat{s}_j(\cdot)$&The cellwise score function of $j$th coordinate&Function\\
    $\phi_j(\cdot)$&The imputation function of $j$th coordinate&Function\\
    $\mathbf{D}(x)$& The output of detection procedure&Function\\
    $\mathbf{I}(x, \gO)$& The output of imputation procedure&Function\\

    $n_0$&The critical point of split&Algorithm parameter\\
    $\alpha$&The target coverage level&Algorithm parameter\\
    $\tau_j$&The detection threshold of $j$th coordinate&Algorithm parameter\\
    
    $N(\mu, \sigma^2)$&The normal distribution with mean $\mu$ and variance $\sigma^2$&Distribution model\\
    $SN(\mu, \sigma^2, \alpha)$& The skewed normal with skewness parameter $\alpha$&Distribution model\\
    $t(k)$& The $t$-distribution with $k$ degrees of freedom&Distribution model\\
    $Bern(p)$& The Bernoulli distribution with success probability $p$&Distribution model\\

    \bottomrule
  \end{tabular}
\end{table}

\section{More methods to construct PI}

\subsection{Naive combination of DI and CP}\label{append:direct}
We consider a method by naively combining the DI procedure with split conformal prediction, which is abbreviated as \texttt{Naive-DI}. The processed features of \texttt{Naive-DI} are defined as 
\begin{align}
    \check{X}_i^{\texttt{Naive-DI}} 
    &= \mathbf{I}(X_{i}, \tilde{\gO}_{n+1}),\quad i = n_0,\ldots,n+1,\nonumber
\end{align}
and the PI for $Y_{n+1}$:
\begin{align}
     \hat{C}^{\texttt{Naive-DI}}(\tilde{X}_{n+1}) = \hat{\mu}(\check{X}_{n+1}^{\texttt{Naive-DI}})
    \pm \hat{q}_{\alpha}^+(\{| Y_{i}-\hat{\mu}(\check{X}_i^{\texttt{Naive-DI}})|\}_{i=n_0}^n).
\end{align}

However, \texttt{Naive-DI} constructed in this way fails to achieve $1-\alpha$ coverage in several cases. As an example, we consider the coverage of PIs constructed by \texttt{ODI-CP}, \texttt{PDI-CP} and \texttt{Naive-DI} under Setting A when $\epsilon = \{0.1, 0.15, 0.2\}$, $\tau_j = \sqrt{\chi _{1, 0.9}^{2} }$  and $Z_{n+1,j} \overset{i.i.d.}{\sim } N(\mu,\sigma)$ where $\mu, \sigma$ are randomly sampled from $U(0,0.1)$. From Figure \ref{fig:naive_DI}, it can be seen that when the gap between $Z_{n+1}$ and the clean data is not very large, and the detection threshold $\tau_j$ is small enough to detect more outliers, the \texttt{Naive-DI} method fails to $1-\alpha$ coverage regardless of the level of contaminated probability. In comparison, our method has a better coverage in these cases, which is due to the consideration of the effect of $\{\hat{\gO_i}\}_{i=n_0}^n$ when masking. {This empirical evidence confirms that simply combining DI with CP (without our proposed modifications) cannot guarantee valid coverage.}

\begin{figure}[H]
\centering  
\centerline{\includegraphics[width=0.6\linewidth]{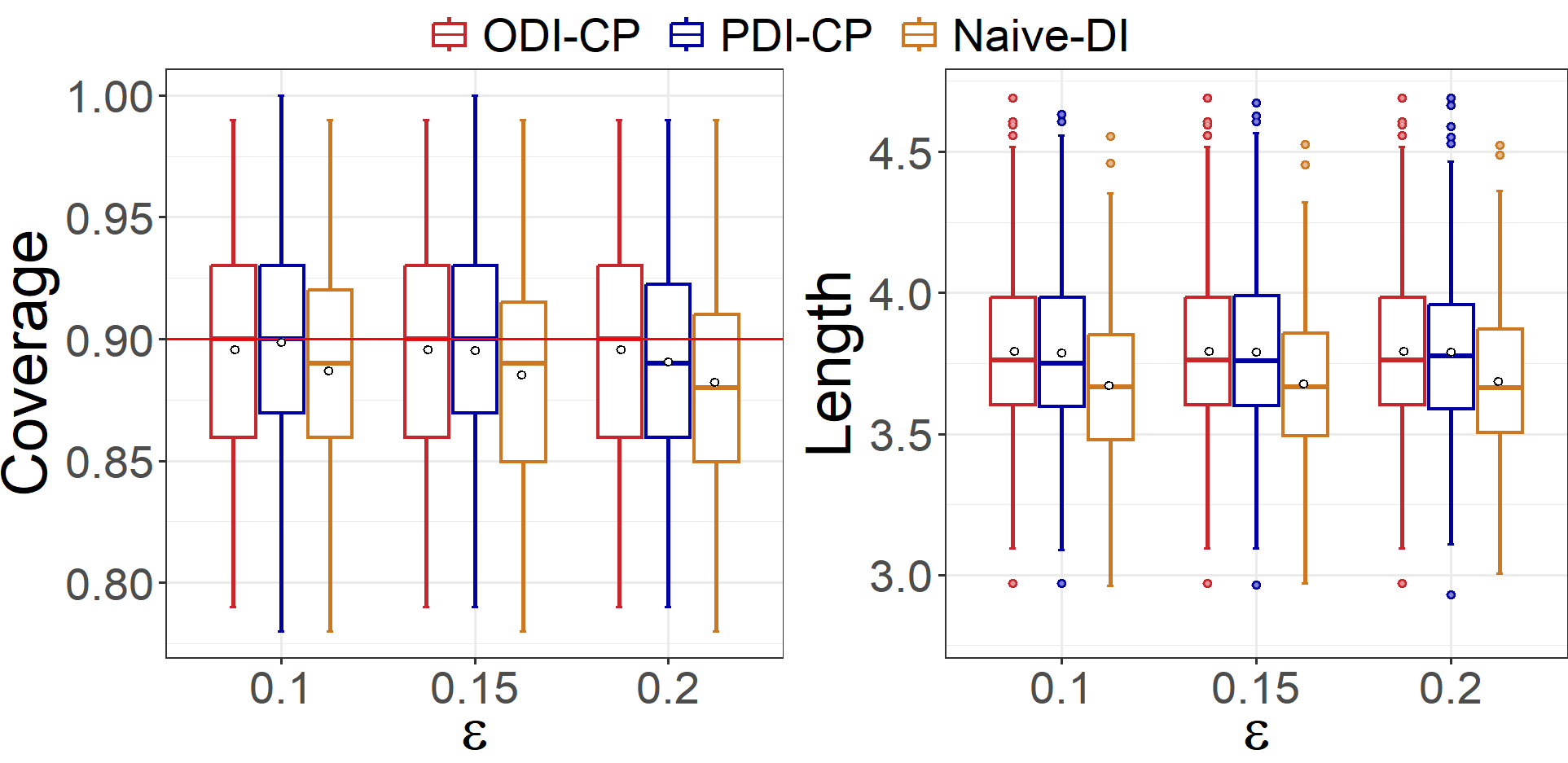}}
    \caption{ Simulation results of \texttt{ODI-CP}, \texttt{PDI-CP} and \texttt{Naive-DI} under Setting A when $\epsilon \in \{0.1,0.15,0.2\}$. $\mathbf{D}$ is DDC, $\mathbf{I}$ is Mean Imputation.}
    \label{fig:naive_DI}
\end{figure}

\subsection{An alternative version of JDI}\label{append:conservative_JDI}
If we take into account the cellwise outliers contained in all calibration features, we can provide the following conservative version of \texttt{JDI}, whose features and PI are listed below.

Conservative-JDI (\texttt{C-JDI}) features:
\begin{align}
        \check{X}_i^{\rm{C\mbox{-}JDI}} 
        &= \mathbf{I}(X_{i}, \bigcup_{i=n_0}^{n} \hat{\gO}_i \cup \tilde{\gO}_{n+1}),\quad i = n_0,\ldots,n+1,\nonumber
\end{align}
and the PI for $Y_{n+1}$:
\begin{align}
     \hat{C}^{\rm{C\mbox{-}JDI}}(\tilde{X}_{n+1}) = \hat{\mu}(\check{X}_{n+1}^{\rm{C\mbox{-}JDI}})
    \pm \hat{q}_{\alpha}^+(\{| Y_{i}-\hat{\mu}(\check{X}_i^{\rm{C\mbox{-}JDI}})|\}_{i=n_0}^n).
\end{align}

We introduce Theorem \ref{thm:C-JDI} to demonstrate the coverage guarantee of \texttt{C-JDI}. 
\begin{theorem}\label{thm:C-JDI}
    Suppose the detection procedure $\mathbf{D}$ satisfies Assumptions \ref{def:SS} and \ref{def:ID}, then
   \begin{align}
        \sP\left\{Y_{n+1} \in \hat{C}^{\rm{C\mbox{-}JDI}}(\tilde{X}_{n+1}) \right\} \geq 1-\alpha.\nonumber
    \end{align} 
\end{theorem}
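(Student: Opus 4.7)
The plan is to reduce the statement to the standard split conformal coverage guarantee by showing that $\{(\check{X}_i^{\rm{C\mbox{-}JDI}}, Y_i)\}_{i=n_0}^{n+1}$ are exchangeable and that the residuals are computable from the observed (contaminated) test feature. The whole argument hinges on a single algebraic identity for the common mask used in \texttt{C-JDI}.

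First, I would invoke Lemma~\ref{lemma:detection_SS} to rewrite $\tilde{\gO}_{n+1} = \hat{\gO}_{n+1}\cup \gO^{*}$. Substituting this into the \texttt{C-JDI} mask gives
\begin{align}
\gM \;:=\; \bigcup_{i=n_0}^{n}\hat{\gO}_i \;\cup\; \tilde{\gO}_{n+1} \;=\; \bigcup_{i=n_0}^{n+1}\hat{\gO}_i \;\cup\; \gO^{*}.\nonumber
\end{align}
Next, since $\gM \supseteq \gO^{*}$, the imputation overwrites every contaminated cell of $\tilde{X}_{n+1}$, while on the complement $[d]\setminus\gM$ we have $\tilde{X}_{n+1,j}=X_{n+1,j}$. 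Hence
\begin{align}
\check{X}_{n+1}^{\rm{C\mbox{-}JDI}} \;=\; \mathbf{I}(\tilde{X}_{n+1},\gM) \;=\; \mathbf{I}(X_{n+1},\gM),\nonumber
\end{align}
and for each calibration index $\check{X}_{i}^{\rm{C\mbox{-}JDI}} = \mathbf{I}(X_i,\gM)$. In particular, all $n-n_0+2$ \texttt{C-JDI} features share the identical mask $\gM$ that is a symmetric function of $X_{n_0},\dots,X_{n+1}$ alone.

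Now I would establish exchangeability of $\{(\check{X}_i^{\rm{C\mbox{-}JDI}}, Y_i)\}_{i=n_0}^{n+1}$. Under Assumption~\ref{def:ID}, each $\hat{\gO}_i$ is determined cellwise by the fixed (training-based) score functions $\hat{s}_j$ and thresholds $\tau_j$ applied to $X_i$; the set $\gO^{*}$ is a deterministic subset of $[d]$; and the training-fitted prediction model $\hat{\mu}$ and imputation functions $\phi_j$ are fixed once $\gD_t$ is conditioned upon. Because $\{(X_i,Y_i)\}_{i=n_0}^{n+1}$ are i.i.d.\ and independent of $\gD_t$, applying a permutation $\pi$ of $\{n_0,\dots,n+1\}$ leaves $\gM$ unchanged (union is permutation-invariant) and simultaneously permutes the \texttt{C-JDI} features and labels. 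It follows that the residuals $\check{R}_i^{\rm{C\mbox{-}JDI}} = |Y_i - \hat{\mu}(\check{X}_i^{\rm{C\mbox{-}JDI}})|$ are exchangeable across $i=n_0,\dots,n+1$.

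Finally, a standard split-conformal quantile argument closes the proof: by exchangeability, $\check{R}_{n+1}^{\rm{C\mbox{-}JDI}}$ is equally likely to be any rank among $\{\check{R}_i^{\rm{C\mbox{-}JDI}}\}_{i=n_0}^{n+1}$, so
\begin{align}
\Prob\bigl\{\check{R}_{n+1}^{\rm{C\mbox{-}JDI}} \leq \hat{q}_{\alpha}^{+}(\{\check{R}_i^{\rm{C\mbox{-}JDI}}\}_{i=n_0}^{n})\bigr\} \;\geq\; 1-\alpha,\nonumber
\end{align}
which is exactly $\Prob\{Y_{n+1}\in \hat{C}^{\rm{C\mbox{-}JDI}}(\tilde{X}_{n+1})\}\geq 1-\alpha$, using that $\hat{\mu}(\check{X}_{n+1}^{\rm{C\mbox{-}JDI}})$ is computable from $\tilde{X}_{n+1}$ (as shown above). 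I do not expect a genuine obstacle here; the only subtle point is the mask identity that removes the dependence of $\check{X}_{n+1}^{\rm{C\mbox{-}JDI}}$ on the arbitrary outlier values $Z_{n+1}$, which is precisely the payoff of unioning all calibration masks with $\tilde{\gO}_{n+1}$ and is why \texttt{C-JDI} recovers the full $1-\alpha$ level at the price of being more conservative than \texttt{JDI-CP}.
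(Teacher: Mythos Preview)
Your proposal is correct and follows essentially the same approach as the paper: invoke Lemma~\ref{lemma:detection_SS} to rewrite the joint mask as the symmetric quantity $\bigcup_{i=n_0}^{n+1}\hat{\gO}_i\cup\gO^{*}$, observe that this mask is fixed given the unordered set $\{X_i\}_{i=n_0}^{n+1}$ so that the \texttt{C-JDI} features are exchangeable, and then appeal to the standard split conformal argument. Your write-up is simply more explicit about why $\mathbf{I}(\tilde{X}_{n+1},\gM)=\mathbf{I}(X_{n+1},\gM)$ and about the permutation invariance of $\gM$, but the substance is identical to the paper's proof.
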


\begin{proof}
    Notice that, Lemma \ref{lemma:detection_SS} shows that $\tilde{\gO}_{n+1} = \hat{\gO}_{n+1} \cup \gO^*$, which implies that $\bigcup_{i=n_0}^{n+1} \hat{\gO}_i \cup \gO^* = \bigcup_{i=n_0}^{n} \hat{\gO}_i \cup \tilde{\gO}_{n+1}$.
    Given the unordered set of $\{X_i\}_{i=n_0}^{n+1}$, the joint mask $\bigcup_{i=n_0}^{n+1} \hat{\gO}_i \cup \gO^*$ is fixed. Hence we know $\{\check{X}_i^{\rm{C\mbox{-}JDI}}\}_{i=n_0}^{n+1}$ are exchangeable. Using the standard proof of split conformal prediction \citep{lei2018distribution,tibshirani2019conformal}, we can prove the result.
\end{proof}

We present an example for comparison of \texttt{PDI-CP}, \texttt{JDI-CP}, and \texttt{C-JDI} in three settings, where $\epsilon=0.05$, $\tau_j$ and $Z_{n+1}$ are the same as those in Appendix \ref{append:direct}. From Figure \ref{fig:conserveJDI}, we can see that our method can achieve the target coverage rate, which is consistent with simulation results. Although the \texttt{C-JDI} method can also achieve the target coverage, its PI is wider and coverage rate is looser than \texttt{PDI-CP} and \texttt{JDI-CP}, indicating that the PIs constructed by \texttt{C-JDI} are more conservative. Taking into account both Appendix \ref{append:direct} and \ref{append:conservative_JDI}, our method outperforms \texttt{Naive-DI} and \texttt{C-JDI}, which is able to build tight PIs that satisfy the target coverage.

\begin{figure}[H]
\centering  
\centerline{\includegraphics[width=0.6\linewidth]{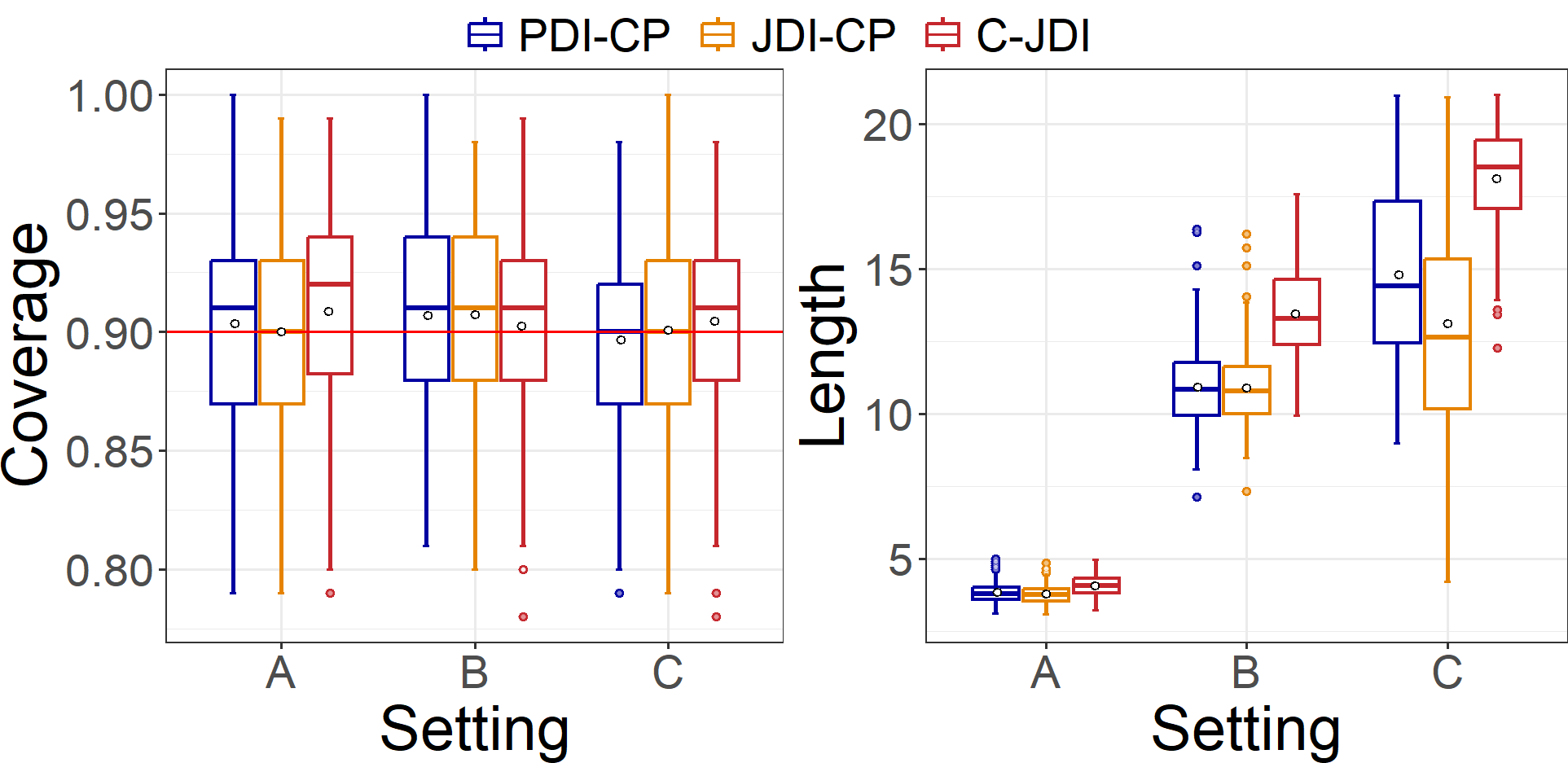}}
    \caption{Simulation results of \texttt{PDI}, \texttt{JDI} and \texttt{C-JDI}. $\mathbf{D}$ is DDC, $\mathbf{I}$ is Mean Imputation and $\epsilon=0.05$.}
    \label{fig:conserveJDI}
\end{figure}

\section{Technical Proofs}\label{append:A}
In this section, we fix the training set $\{(X_i,Y_i)\}_{i=1}^{n_0-1}$ and only consider the randomness from calibration set $\{(X_i,Y_i)\}_{i=n_0}^n$ and test data $(X_{n+1},Y_{n+1},Z_{n+1})$.

\subsection{Proof of Lemma \ref{lemma:detection_SS} and Proposition \ref{thm:ODI}}\label{append:Lemma3.3}
\begin{proof}
    Firstly, we prove Lemma \ref{lemma:detection_SS}.
    Notice that for any $j \in [d]\setminus \gO^*$, we have $\tilde{X}_{n+1,j} = X_{n+1,j}$. It follows that
    \begin{align}
        \tilde{\gO}_{n+1} 
            &= \gO^* \cup  \{j\in [d]\setminus \gO^*: \hat{s}_j(\tilde{X}_{n+1,j}) > \tau_j\}\nonumber\\
            &= \gO^* \cup  \{j\in [d]\setminus \gO^*: \hat{s}_j(X_{n+1,j}) > \tau_j\}\nonumber\\
            &= \gO^* \cup \{j\in \gO^*: \hat{s}_j(X_{n+1,j}) > \tau_j\} \cup  \{j\in [d]\setminus \gO^*: \hat{s}_j(X_{n+1,j}) > \tau_j\}\nonumber\\
            &= \gO^* \cup \hat{\gO}_{n+1},\nonumber
    \end{align}
    where the first equality holds due to Assumption \ref{def:SS}; and the second equality holds due to Assumption \ref{def:ID}. From the definition, we know
    \begin{align}
    [\mathbf{I}(\tilde{X}_{n+1}, \hat{\gO}_{n+1} \cup \gO^*)]_j &= \begin{cases}
        \tilde{X}_{n+1,j} = X_{n+1,j},& j\notin \hat{\gO}_{n+1} \cup \gO^*\\
        \phi_j(\{\tilde{X}_{n+1,l}\}_{l\notin \hat{\gO}_{n+1} \cup \gO^*}) = \phi_j(\{X_{n+1,l}\}_{l\notin \hat{\gO}_{n+1} \cup \gO^*}),& j\in \hat{\gO}_{n+1} \cup \gO^*
    \end{cases}\nonumber\\
    &= [\mathbf{I}(X_{n+1}, \hat{\gO}_{n+1} \cup \gO^*)]_j.\nonumber
    \end{align}
    Thus we have
    \begin{align}
        \check{X}_{n+1}^{\rm{DI}} = \mathbf{I}(\tilde{X}_{n+1}, \tilde{\gO}_{n+1}) = \mathbf{I}(\tilde{X}_{n+1}, \hat{\gO}_{n+1} \cup \gO^*)  = \mathbf{I}(X_{n+1}, \hat{\gO}_{n+1} \cup \gO^*),
    \end{align} 
    which is exchangeable to $\check{X}_i^* = \mathbf{I}(X_i, \hat{\gO}_i \cup \gO^*)$ for $i= n_0,\ldots,n$.

Next, we prove Proposition \ref{thm:ODI}.
Recalling that $R_i^* = |Y_i - \hat{\mu}(\check{X}_i^*)|$ for $i = n_0,\ldots,n$.
Denote $R_{n+1}^* = |Y_{n+1} - \hat{\mu}(\check{X}_{n+1}^{\rm{DI}})|$. By the exchangeability of $\{R_i^*\}_{i=n_0}^{n+1}$, we can guarantee
    \begin{align}
    \sP\left\{Y_{n+1} \in \hat{C}^{\rm{ODI}}(\tilde{X}_{n+1})\right\}&=\sP\LRl{R_{n+1}^* \leq \hat{q}_{\alpha}^+(\{R_i^*\}_{i=n_0}^n)} \geq 1-\alpha.
    \end{align}
\end{proof}

\subsection{Proof of Theorem \ref{thm:impossible}}\label{proof:thm:impossible}
\begin{proof}
    We consider the case where $d = 2$. Suppose $X$ is a two-dimensional standard normal random variable with $n$ observations, the corresponding labels are $Y_i=X_{i,1}+X_{i,2}$ where $X_{i,1},X_{i,2} \sim \operatorname{Uniform}([0,1])$ for $i\in[n]$. Suppose $\hat{\mu}$ is a linear regression model $\hat{\mu}(x) = \beta_1x_1 + \beta_2 x_2$ where $\beta_2 \neq 0$. The test point $\tilde{X}_{n+1} = (X_{n+1,1},X_{n+1,2})^{\top}$ where $X_{n+1,2}= \frac{M+1}{\beta_2} \mathbbm{1}\{\beta_1\geq 1\} + \frac{M-\beta_1+2}{\beta_2}\mathbbm{1}\{\beta_1< 1\}$ for some large positive value $M$.
    
    Now we consider the case $\check{X}_{n+1}^{\rm{DI}}$ still contains cellwise outlier after $\textbf{DI}$, that is $\mathbf{D}(\tilde{X}_{n+1}) = \emptyset$ and $\check{X}_{n+1}^{\rm{DI}} = \tilde{X}_{n+1}$. When $\beta_1\geq 1$, notice that
    \begin{align}\label{eq:large_M}
        |Y_{n+1}-\hat{\mu}(\check{X}_{n+1}^{\rm{DI}})| = |(1-\beta_1)X_{n+1,1} + X_{n+1,2} - M - 1| \geq M.
    \end{align}
    By the form of PI, it holds that
    \begin{align}
        1-\alpha=\sP\{Y_{n+1} \in \hat{C}(\check{X}_{n+1}^{\rm{DI}})\} = \sP\LRl{|Y_{n+1}-\hat{\mu}(\check{X}_{n+1}^{\rm{DI}})| \leq \hat{q}_n} \leq \sP\LRs{\hat{q}_n \geq M}.\nonumber
    \end{align}
    When $\beta_1< 1$, notice that
    \begin{align}
        |Y_{n+1}-\hat{\mu}(\check{X}_{n+1}^{\rm{DI}})| = |(1-\beta_1)X_{n+1,1} + X_{n+1,2} - M - 2 + \beta_1| \geq M,\nonumber
    \end{align}
    and \eqref{eq:large_M} still holds.
    
\end{proof}

\begin{theorem}
    If $\check{X}_{n+1}^{\rm{DI}}$ contains cellwise outliers, given any PI taking the form $\hat{C}(\check{X}_{n+1}^{\rm{DI}})=[\hat{f}^{lo}(\check{X}_{n+1}^{\rm{DI}})-\hat{q}_n,\hat{f}^{up}(\check{X}_{n+1}^{\rm{DI}})+\hat{q}_n]$, where $\hat{f}^{lo}$ and $\hat{f}^{up}$ are the lower and upper quantile regression models, and $\hat{q}_n$ is the quantile of the empirical distribution of CQR score computed on the calibration set. Suppose $\hat{C}(\check{X}_{n+1}^{\rm{DI}})$ satisfies marginal coverage $\sP\{Y_{n+1} \in \hat{C}(\check{X}_{n+1}^{\rm{DI}})\} \geq 1-\alpha$. For arbitrary $M > 0$, there exists $\hat{f}^{lo}$ and $\hat{f}^{up}$, and distributions $P$ and $P_Z$ such that for $(X_{n+1},Y_{n+1}) \sim P$ and $Z_{n+1} \sim P_Z$, $\sP(\hat{q}_n \geq M) \geq 1-\alpha$. In other words, $\lim_{M \to \infty} \mathbb{E}\left[|\hat{C}(\check{X}_{n+1}^{\rm{DI}})|\right] = \infty$.
\end{theorem}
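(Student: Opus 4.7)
The plan is to mirror the construction used in the proof of Theorem \ref{thm:impossible} (Appendix \ref{proof:thm:impossible}), but adapted so that the outlier in the contaminated test feature pushes one of the two quantile regressors, rather than the single point predictor, arbitrarily far away from the true label. The high-level idea is: since the PI has the form $[\hat{f}^{lo}(\check{X}_{n+1}^{\rm{DI}}) - \hat{q}_n,\ \hat{f}^{up}(\check{X}_{n+1}^{\rm{DI}}) + \hat{q}_n]$, coverage of $Y_{n+1}$ forces
\[
\hat{q}_n \;\geq\; \max\{\hat{f}^{lo}(\check{X}_{n+1}^{\rm{DI}}) - Y_{n+1},\ Y_{n+1} - \hat{f}^{up}(\check{X}_{n+1}^{\rm{DI}})\}.
\]
So it suffices to design the data-generating mechanism and the two regression models so that the right-hand side is deterministically at least $M$ whenever an outlier slips past the detector.

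Concretely, I would take $d = 2$ with $X_{i,1},X_{i,2}\stackrel{\mathrm{i.i.d.}}{\sim}\mathrm{Uniform}([0,1])$ and $Y_i = X_{i,1} + X_{i,2}$, exactly as in Theorem \ref{thm:impossible}. Choose the quantile regressors to be linear, $\hat{f}^{lo}(x) = \beta_1^{lo}x_1 + \beta_2^{lo}x_2$ and $\hat{f}^{up}(x) = \beta_1^{up}x_1 + \beta_2^{up}x_2$ with $\beta_2^{lo}\neq 0$; for concreteness fix $\beta_2^{lo} > 0$ (the $\beta_2^{lo}<0$ case is symmetric). Let $\mathcal{O}^* = \{2\}$ and set the outlier to be a deterministic constant $Z_{n+1,2} = K$, and assume $\mathbf{D}$ returns $\tilde{\mathcal{O}}_{n+1} = \emptyset$ so that $\check{X}_{n+1}^{\rm{DI}} = \tilde{X}_{n+1} = (X_{n+1,1},K)$ still contains the outlier, as required by the hypothesis of the theorem.

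The key computation is then that, for any realization of $X_{n+1}\in[0,1]^2$,
\[
\hat{f}^{lo}(\check{X}_{n+1}^{\rm{DI}}) - Y_{n+1} \;=\; \beta_2^{lo} K + (\beta_1^{lo} - 1)X_{n+1,1} - X_{n+1,2}\;\geq\;\beta_2^{lo} K - (|\beta_1^{lo}-1| + 1),
\]
which we can make $\geq M$ by picking $K$ large. Hence on the coverage event $\{Y_{n+1}\in\hat{C}(\check{X}_{n+1}^{\rm{DI}})\}$ we deterministically have $\hat{q}_n\geq \hat{f}^{lo}(\check{X}_{n+1}^{\rm{DI}}) - Y_{n+1}\geq M$, which gives
\[
\sP(\hat{q}_n\geq M)\;\geq\;\sP\bigl\{Y_{n+1}\in \hat{C}(\check{X}_{n+1}^{\rm{DI}})\bigr\}\;\geq\;1-\alpha,
\]
as desired. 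Taking $M\to\infty$ then yields the expected-length divergence.

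The main obstacle I anticipate is cosmetic rather than substantive: one has to argue that the construction is admissible as a legitimate CQR instance, i.e.\ that $\hat{f}^{lo}\leq \hat{f}^{up}$ can be arranged (easy: just pick $\beta_1^{up} = \beta_1^{lo}$ and $\beta_2^{up} = \beta_2^{lo}$, so the two regressors coincide and the construction degenerates to the original Theorem \ref{thm:impossible} with a different labeling), and to handle the sign of $\beta_2^{lo}$ if one wants to present the statement symmetrically. Mirroring the case split on $\beta_1$ in the proof of Theorem \ref{thm:impossible}, I would split on $\mathrm{sign}(\beta_2^{lo})$ and choose $K\to+\infty$ or $K\to-\infty$ accordingly, which preserves the lower bound $\hat{q}_n\geq M$ in either case. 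Everything else is a line-by-line adaptation of the existing argument.
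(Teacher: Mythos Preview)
Your proposal is correct and follows essentially the same argument as the paper's own proof: both take $d=2$, $Y=X_1+X_2$ with $X_1,X_2\sim\mathrm{Unif}[0,1]$, choose a linear lower quantile regressor $\hat f^{lo}(x)=\beta_1^{lo}x_1+\beta_2^{lo}x_2$, and select the outlier $Z_{n+1,2}$ so that $\hat f^{lo}(\check X_{n+1}^{\rm DI})-Y_{n+1}\ge M$ deterministically, forcing $\hat q_n\ge M$ on the coverage event. The only cosmetic difference is that the paper writes an explicit three-case formula $Z_{n+1,2}=(M+c(\beta_1^{lo}))/\beta_2^{lo}$ (which automatically handles the sign of $\beta_2^{lo}$), whereas you use the cleaner uniform bound $\beta_2^{lo}K-(|\beta_1^{lo}-1|+1)$ and split on $\mathrm{sign}(\beta_2^{lo})$ to choose $K$.
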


\begin{proof}
    Following the proof of Theorem 3.5 in Appendix B.2, the labels are $Y_i=X_{i,1}+X_{i,2}$ where $X_{i,1},X_{i,2} \sim \operatorname{Uniform}([0,1])$ for $i\in[n+1]$. Suppose $\hat{f}^{lo}(x) = \beta_1^{lo}x_1 + \beta_2^{lo} x_2$ where $\beta_2^{lo} \neq 0$, and the test point $\tilde{X}_{n+1} = (X_{n+1,1},Z_{n+1,2})^{\top}$ where the outlier is given by
$$Z_{n+1,2}= \frac{M+1}{\beta_2^{lo}} \mathbbm{1}\{\beta_1^{lo}\geq 1\}+\frac{M+2}{\beta_2^{lo}}\mathbbm{1}\{0<\beta_1^{lo}<1\} + \frac{M-\beta_1^{lo}+2}{\beta_2^{lo}}\mathbbm{1}\{\beta_1^{lo}\leq 0\}$$
for some large positive value $M$. If $\check{X}_{n+1}^{\rm{DI}}$ still contains $Z_{n+1,2}$ and $\hat{C}(\check{X}_{n+1}^{\rm{DI}})$ covers the true label, we have
$$\max\{\hat{f}^{lo}(\check{X}_{n+1}^{\rm{DI}})-Y_{n+1},Y_{n+1}-\hat{f}^{up}(\check{X}_{n+1}^{\rm{DI}})\}\geq \hat{f}^{lo}(\check{X}_{n+1}^{\rm{DI}})-Y_{n+1}\geq M,$$
which means $ \mathbb{P}(\hat{q}_n \geq M) \geq \mathbb{P}(Y_{n+1} \in \hat{C}(\check{X}_{n+1}^{\rm{DI}})) \geq 1-\alpha$.
\end{proof}

\subsection{Proof of Theorem \ref{thm:PDI}}\label{proof:thm:PDI}
\begin{proof}
According to Definition \ref{def:mean_imputation}, we denote by $\bar{\vx}_i = (\bar{x}_{1},\ldots,\bar{x}_{j},\ldots,\bar{x}_{d})^{\top}$ the Mean Imputation values of $X_i$ for $i=n_0,\ldots,n+1$. Denote
 \begin{align}
     \hat{\delta}_{i,j} &= \mathbbm{1}\{\hat{s}(X_{i,j}) \leq \tau_j\},\quad i = n_0,\ldots,n+1,\ j\in [d],\label{eq:indicator_Xi}
 \end{align}
 and
 \begin{align}
     \hat{\Delta}_{i} &= \operatorname{diag}(\{\hat{\delta}_{i,j}\}_{j\in [d]}),\quad  i = n_0,\ldots,n+1,\label{eq:Delta_Xi}\\
     \tilde{\Delta}_{n+1} &= \operatorname{diag}(\{\tilde{\delta}_{n+1,j}\}_{j\in [d]}),\label{eq:Delta_X_tilde}\\
     \Delta_{n+1} &= \operatorname{diag}(\{\delta_{n+1,j}\}_{j\in [d]}),\nonumber
 \end{align}
 where $\tilde{\delta}_{n+1,j} = \mathbbm{1}\{\hat{s}_j(\tilde{X}_{n+1,j}) \leq \tau_j\}$ and $\delta_{n+1,j} = \mathbbm{1}\{j\in [d]\setminus \gO^*\}$. Then the difference between \texttt{ODI} feature and \texttt{PDI} feature can be written as
\begin{align}\label{eq:PDI_feature_diff}
    \check{X}_i - \check{X}_{i}^* &= \left[\hat{\Delta}_i\tilde{\Delta}_{n+1} X_i + (I-\hat{\Delta}_i\tilde{\Delta}_{n+1}) \bar{\vx}_i\right] - \left[\hat{\Delta}_i\Delta_{n+1} X_i  + (I-\hat{\Delta}_i\Delta_{n+1}) \bar{\vx}_i \right] \nonumber\\
    &=\hat{\Delta}_i(\tilde{\Delta}_{n+1} - \Delta_{n+1}) X_i - \hat{\Delta}_i (\tilde{\Delta}_{n+1} - \Delta_{n+1}) \bar{\vx}_i \nonumber\\
    &=\hat{\Delta}_i(\tilde{\Delta}_{n+1} - \Delta_{n+1}) (X_i - \bar{\vx}_i).
\end{align}
Denote by $E_{i,j} = |X_{i,j} - \bar{x}_{j}|$ the Mean Imputation error of each entry for $i= n_0, \ldots, n$. Invoking $\ell_1$-sensitivity of $\hat{\mu}$, we have
\begin{align}\label{eq:R_diff}
    \left|\check{R}_i - R_i^*\right| &= \left| |Y_i - \hat{\mu}(\check{X}_i)| - |Y_{i} - \hat{\mu}(\check{X}_{i}^*)|\right|\nonumber\\ &\leq \left|\hat{\mu}(\check{X}_i) -  \hat{\mu}(\check{X}_{i}^*)\right| \leq S_{\hat{\mu}} \cdot \left\|\check{X}_i - \check{X}_{i}^*\right\|_1\nonumber\\
    &= S_{\hat{\mu}} \cdot \sum_{j=1}^d \hat{\delta}_{i,j}|\tilde{\delta}_{n+1,j} - \delta_{n+1,j}||X_{i,j}-\bar{x}_{j}|\nonumber\\
    &= S_{\hat{\mu}} \cdot \sum_{j\in [d]\setminus \gO^*} \hat{\delta}_{i,j}(\delta_{n+1,j} -\tilde{\delta}_{n+1,j})\cdot E_{i,j}\nonumber\\
    &\leq S_{\hat{\mu}}\cdot\max_{j\in [d]}E_{i,j} \cdot \sum_{j\in [d]\setminus \gO^*} (\delta_{n+1,j} -\tilde{\delta}_{n+1,j})\nonumber\\
    &:= S_{\hat{\mu}}\cdot E_i \cdot |\tilde{\gO}_{n+1}  \setminus \gO^*|,
\end{align}
where $E_i := \max_{j\in [d]} E_{i,j}$ and the third equality holds due to Assumption \ref{def:SS}. 

By the construction of $\hat{C}^{\rm{ODI}}(\tilde{X}_{n+1})$ in \eqref{eq:PI_ODI}, we have
\begin{align}\label{eq:cover_equivalence}
    \sP\LRl{Y_{n+1} \in \hat{C}^{\rm{PDI}}(\tilde{X}_{n+1})} &= \sP\LRl{|Y_{n+1} - \hat{\mu}(\check{X}_{n+1}^{\rm{DI}}) | \leq \hat{q}_{\alpha}^+(\{\check{R}_i\}_{i=n_0}^n)}\nonumber\\
    &= \sP\LRl{R_{n+1}^* \leq \hat{q}_{\alpha}^+(\{\check{R}_i\}_{i=n_0}^n)},
\end{align}
and the coverage gap between $\hat{C}^{\rm{ODI}}(\tilde{X}_{n+1})$ and $\hat{C}^{\rm{PDI}}(\tilde{X}_{n+1})$ is
\begin{align}\label{eq:cov_diff}
        &\left| \sP\LRl{Y_{n+1} \in \hat{C}^{\rm{PDI}}(\tilde{X}_{n+1})} - \sP\LRl{Y_{n+1} \in \hat{C}^{\rm{ODI}}(\tilde{X}_{n+1})} \right|\nonumber\\
        &=
        \left| \sP\LRl{R_{n+1}^* \leq \hat{q}_{\alpha}^+(\{\check{R}_i\}_{i=n_0}^n)} - \sP\LRl{R_{n+1}^* \leq \hat{q}_{\alpha}^+(\{R_i^*\}_{i=n_0}^n)} \right| \nonumber\\
        &= \sP\LRl{\min\left(\hat{q}_{\alpha}^+(\{\check{R}_i\}_{i=n_0}^n), \hat{q}_{\alpha}^+(\{R_i^*\}_{i=n_0}^n)\right) < R_{n+1}^* \leq \max \left(\hat{q}_{\alpha}^+(\{\check{R}_i\}_{i=n_0}^n), \hat{q}_{\alpha}^+(\{R_i^*\}_{i=n_0}^n)\right)}\nonumber\\
        &\leq \mathbb{P}\left\{\hat{q}_{\alpha}^+\left(\{R_i^* - S_{\hat{\mu}}\cdot E_i\cdot |\tilde{\mathcal{O}}_{n+1}\setminus\mathcal{O}^*|\}_{i=n_0}^n\right) < R_{n+1}^* \leq \hat{q}_{\alpha}^+\left(\{R_i^* + S_{\hat{\mu}}\cdot E_i\cdot |\tilde{\mathcal{O}}_{n+1}\setminus\mathcal{O}^*|\}_{i=n_0}^n\right)\right\}.
    \end{align}
Combined with Assumption \ref{def:ID} and Proposition \ref{thm:ODI}, we can obtain the coverage property of \texttt{PDI-CP} 
    \begin{align}\label{eq:PDI_gap}
        \sP&\LRl{Y_{n+1} \in \hat{C}^{\rm{PDI}}(\tilde{X}_{n+1})} 
        \geq 1 - \alpha\nonumber\\
        &\ \ \ - \mathbb{P}\left\{\hat{q}_{\alpha}^+\left(\{R_i^* - S_{\hat{\mu}}\cdot E_i\cdot |\tilde{\mathcal{O}}_{n+1}\setminus\mathcal{O}^*|\}_{i=n_0}^n\right) < R_{n+1}^* \leq \hat{q}_{\alpha}^+\left(\{R_i^* + S_{\hat{\mu}}\cdot E_i\cdot |\tilde{\mathcal{O}}_{n+1}\setminus\mathcal{O}^*|\}_{i=n_0}^n\right)\right\}.
    \end{align}
\end{proof}

\subsection{Proof of Proposition \ref{thm:JDI_SS}}\label{proof:them:JDI_SS}
\begin{proof}
We follow the proof in \citet{barber2021predictive} to show the result.
We abbreviate the set of integers $\{n_0,\ldots,n\}$ as $[n_0,n]$ in this subsection. Notice that under Assumptions \ref{def:SS} and \ref{def:ID}, $(\check{X}_{i}^{n+1}, Y_i) \cup (\check{X}_{n+1}^{i}, Y_{n+1})$ are exchangeable. For $i,j \in [n_0,n+1]$ with $i \ne j$, denote $\check{X}_{i}^{j} = \mathbf{I}(X_{i}, \hat{\gO}_i \cup \hat{\gO}_{j} \cup \mathcal{O}^*)$. Then we assert that $\hat{\mu}(\check{X}_{i}^{j}) \ne \hat{\mu}(\check{X}_{j}^{i})$ for $i \ne j$.

Define a matrix of residuals, $D \in \mathbb{R}^{(n+2-n_0)\times(n+2-n_0)}$, with entries
\begin{equation}
    D_{pq} = 
    \left\{\begin{matrix}
    +\infty, & p=q\\
    |Y_{p+n_0-1} - \hat{\mu}(\check{X}^{q+n_0-1}_{p+n_0-1})|, & p \ne q
    \end{matrix}\right.
\end{equation}
for $p,q \in [n+2-n_0]$, i.e., the off-diagonal entries represent the residual for the $(p+n_0-1)$-th point which processed by $\textbf{DI}$ with mask $\hat{\gO}_{p+n_0-1} \cup \hat{\gO}_{q+n_0-1} \cup \mathcal{O}^*$.

Define a comparison matrix, $A \in \{0,1\}^{(n+2-n_0)\times(n+2-n_0)}$, with entries
$$A_{pq} = \mathbbm{1}\{D_{pq} > D_{qp}\},$$
and $A_{pq} = 1$ means that the residual of $X_{p+n_0-1}$ is larger than that of $X_{q+n_0-1}$ which are processed with the same mask $\hat{\gO}_{p+n_0-1} \cup \hat{\gO}_{q+n_0-1} \cup \mathcal{O}^*$, and we say that data point $(p+n_0-1)$ “wins” $(q+n_0-1)$.

Define a set $S(A) \subseteq [n+2-n_0]$ of strange points as
\begin{align}\label{eq:det_SA}
    S(A) = \left\{p \in [n+2-n_0]: \sum_{q=1}^{n+2-n_0} A_{pq} \ge (1-\alpha)(n+2-n_0) \right\},
\end{align}
i.e., $p \in S(A)$ if it holds that, when we compare the residual $D_{pq}$ of the $(p+n_0-1)$-th point against residual $D_{qp}$ of the $(q+n_0-1)$-th point, the residual $D_{pq}$ is the larger one for a sufficiently high fraction of these comparisons. Note that each strange point $p \in S(A)$ can ``lose" against at most $\alpha(n+2-n_0)-1$ other strange points, this is because point $(p+n_0-1)$ must ``win" against at least $(1-\alpha)(n+2-n_0)$ points in total because it is strange, and point $(p+n_0-1)$ cannot win against itself as the definition of $D_{pq}$.

Denote by $s = |S(A)|$ the number of strange points. The key realization is now that, if we think about grouping each pair of strange points by the losing point, then we see that there are at most
$$s \cdot (\alpha(n+2-n_0)-1)$$
pairs of strange points. So we have
\begin{align}\label{eq:step1}
    \frac{s(s-1)}{2} \leq s \cdot (\alpha(n+2-n_0)-1),
\end{align}
which  simplifies to $s \leq 2\alpha(n+2-n_0)-1 <2\alpha(n+2-n_0)$.

Since the data points $\{(\check{X}_i^j, Y_i)\}_{i,j\in [n_0,n+1]}$ are exchangeable and the fitting algorithm is not affected by $\textbf{DI}$, it follows that $A \overset{d}{=} \Pi A\Pi ^{T}$ for any $(n+2-n_0)\times(n+2-n_0)$ permutation matrix $\Pi$. In particular, for any index $q \in [n+2-n_0]$, suppose we take $\Pi$ to be any permutation matrix with $\Pi_{q,n+2-n_0}=1$ (i.e., corresponding to a permutation mapping $n + 1$ to $q+n_0-1$), then we have
$$n+2-n_0 \in S(A) \Longleftrightarrow q \in S(\Pi A\Pi ^{T}),$$
and therefore
$$\sP\{n+2-n_0 \in S(A)\} = \sP\{q \in S(\Pi A\Pi ^{T})\} = \sP\{q \in S(A)\}.$$

In other words, if we compare an arbitrary calibration point $q+n_0-1$ where $q\in [n+2-n_0]$ versus the test point $n + 1$, these two points are equally likely to be strange, by the exchangeability of the data. So we can calculate
\begin{align}\label{eq:step2}
    \sP\{n+2-n_0 \in S(A)\} &= \frac{1}{n+2-n_0} \sum_{q=1}^{n+2-n_0} \sP\{q \in S(A)\}\nonumber\\
    &= \frac{\mathbb{E}\left[\sum_{q=1}^{n+2-n_0} \mathbbm{1}\{q \in S(A)\}\right]}{n+2-n_0} = \frac{\mathbb{E}[|S(A)|]}{n+2-n_0} = \frac{s}{n+2-n_0} \leq 2\alpha,
\end{align}
according to \eqref{eq:step1}. By the construction of $\hat{C}^{\rm{JDI}}(\tilde{X}_{n+1})$ in \eqref{eq:JDI_interval}, we have the following equivalence relation
\begin{align}
        Y_{n+1} \notin \hat{C}^{\rm{JDI}}(\tilde{X}_{n+1}) \Longleftrightarrow &Y_{n+1}>\hat{q}_{\alpha}^{+}\left(\left\{\hat{\mu}(\check{X}_{n+1}^i) + \check{R}_i\right\}_{i=n_0}^n\right)\nonumber\\
        \text{ 
 or  } &Y_{n+1}<\hat{q}_{\alpha}^{-}\left(\left\{\hat{\mu}(\check{X}_{n+1}^i) - \check{R}_i\right\}_{i=n_0}^n\right).
\end{align}

In either case, we have
\begin{align}
        (1-\alpha)(n+2-n_0) &\leq \sum_{i=n_0}^{n} \mathbbm{1}\{Y_{n+1} \notin \hat{\mu}(\check{X}_{n+1}^i) \pm \check{R}_i\}\nonumber\\
        &= \sum_{i=n_0}^{n} \mathbbm{1}\{|Y_i - \hat{\mu}(\check{X}_i^{n+1})| < |Y_{n+1} - \hat{\mu}(\check{X}_{n+1}^i)|\}\nonumber\\
        &= \sum_{q=1}^{n+1-n_0} \mathbbm{1}\{D_{q,n+2-n_0} < D_{n+2-n_0,q}\} = \sum_{q=1}^{n+1-n_0} A_{n+2-n_0,q},
\end{align}
and therefore $n+2-n_0 \in S(A)$ according to \eqref{eq:det_SA}. Combined with \eqref{eq:step2}, we have
\begin{align}
    \sP\{Y_{n+1} \notin \hat{C}^{\rm{JDI}}(\tilde{X}_{n+1})\} \leq \sP\{n+2-n_0 \in S(A)\} \leq 2\alpha.
\end{align}
\end{proof}

\subsection{Proof of  Theorem \ref{thm:violation_assum2}}\label{proof:thm:violation_assum2}
\subsubsection{Proof of \texttt{PDI-CP}}
\begin{proof}
Under Assumption \ref{def:SS}, denote
\begin{align}
    \check{X}_{n+1}^{\rm{DI}} &= \mathbf{I}(X_{n+1},\gO^* \cup  \tilde{\gT}_{n+1}),\\
    \check{X}^*_{n+1} &= \mathbf{I}(X_{n+1},\gO^* \cup  \hat{\gT}_{n+1}).
\end{align}
The difference between $\check{X}_{n+1}^{\rm{DI}}$ and $\check{X}^*_{n+1}$ is
\begin{align}
    \check{X}_{n+1}^{\rm{DI}} - \check{X}^*_{n+1} &= (\tilde{\Delta}^{\gT}_{n+1}\Delta_{n+1} - \hat{\Delta}^{\gT}_{n+1}\Delta_{n+1}) X_{n+1} + (\hat{\Delta}^{\gT}_{n+1}\Delta_{n+1} - \tilde{\Delta}^{\gT}_{n+1}\Delta_{n+1})\bar{\vx}_{n+1}\nonumber\\
    &= \Delta_{n+1} (\tilde{\Delta}^{\gT}_{n+1} - \hat{\Delta}^{\gT}_{n+1})(X_{n+1} - \bar{\vx}_{n+1}),
\end{align}
where 
\begin{align}
    \tilde{\Delta}^{\gT}_{n+1} &= \operatorname{diag}(\{\tilde{\delta}^{\gT}_{n+1,j}\}_{j\in [d]}) = \operatorname{diag}(\{\mathbbm{1}\{j \notin \tilde{\gT}_{n+1}\}\}_{j\in [d]}),\nonumber\\
    \hat{\Delta}^{\gT}_{n+1} &= \operatorname{diag}(\{\hat{\delta}^{\gT}_{n+1,j}\}_{j\in [d]}) = \operatorname{diag}(\{\mathbbm{1}\{j \notin \hat{\gT}_{n+1}\}\}_{j\in [d]}),\nonumber
\end{align}
and $\bar{\vx}_{n+1} = (\bar{x}_{1},\ldots,\bar{x}_{j},\ldots,\bar{x}_{d})^{\top}$ is the Mean Imputation value of $\tilde{X}_{n+1}$.

Denote $\check{R}_{n+1} = |Y_{n+1} - \hat{\mu}(\check{X}_{n+1}^{\rm{DI}})|$ and $\check{R}^*_{n+1} = |Y_{n+1} - \hat{\mu}(\check{X}^*_{n+1})|$. Invoking $\ell_1$-sensitivity of $\hat{\mu}$, we have
\begin{align}
    |\check{R}_{n+1} - \check{R}^*_{n+1}| 
    &\leq S_{\hat{\mu}} \cdot \left \| \check{X}_{n+1}^{\rm{DI}} - \check{X}^*_{n+1} \right \| _1\nonumber\\
    &= S_{\hat{\mu}} \cdot  {\textstyle \sum_{j=1}^{d}} |\delta_{n+1,j}| \cdot |\tilde{\delta}^{\gT}_{n+1,j} - \hat{\delta}^{\gT}_{n+1,j}| \cdot |X_{n+1,j} - \bar{x}_{j}|\nonumber\\
    &= S_{\hat{\mu}} \cdot  {\textstyle \sum_{j \in [d] \setminus \gO^*}} |\tilde{\delta}^{\gT}_{n+1,j} - \hat{\delta}^{\gT}_{n+1,j}| \cdot |X_{n+1,j} - \bar{x}_{j}|\nonumber\\
    &\leq S_{\hat{\mu}} \cdot|\tilde{\gT}_{n+1} \triangle \hat{\gT}_{n+1}| \cdot E_{n+1} \nonumber\\
    &:= \Delta_{n+1},\label{eq:Delta_R}
\end{align}
where $E_{n+1} := \max_{j\in [d]} |X_{n+1,j} - \bar{x}_{j}|$ is the same as the definition in Theorem \ref{thm:PDI}. Notice that $\check{X}^*_{n+1}$ is exchangeable with \texttt{ODI} features $\{\check{X}_{i}^*\}_{i=n_0}^n$ under Assumption 
\ref{def:SS}, then the coverage gap of \texttt{ODI-CP} is
\begin{align}\label{eq:ODI_gap}
    &\left|\sP\left\{Y_{n+1} \in \hat{C}^{\rm{ODI}}(\tilde{X}_{n+1})\right\} - (1-\alpha)\right|\nonumber\\
    &\ \ \ \ =\left|\sP\left\{\check{R}_{n+1} \leq \hat{q}^+_{\alpha}(\{R_i^*\}_{i=n_0}^n)\right\}  - (1-\alpha)\right|\nonumber\\
    &\ \ \ \ \leq\left|\sP\left\{\check{R}_{n+1}^* - \Delta_{n+1} \leq \hat{q}^+_{\alpha}(\{R_i^*\}_{i=n_0}^n)\right\}  - |\sP\left\{\check{R}_{n+1}^* \leq \hat{q}^+_{\alpha}(\{R_i^*\}_{i=n_0}^n)\right\}\right|\nonumber\\
    &\ \ \ \ =F_{R^*}(\hat{q}^+_{\alpha}(\{R_i^*\}_{i=n_0}^n) + \Delta_{n+1}) - F_{R^*}(\hat{q}^+_{\alpha}(\{R_i^*\}_{i=n_0}^n)),
\end{align}
where $F_{R^*}$ is the distribution function of \texttt{ODI-CP} residuals $\{R_i^*\}_{i=n_0}^n$.
Similar to the proof of Theorem \ref{thm:PDI}, we can obtain the coverage of \texttt{PDI-CP}:
\begin{align}
        &\sP\LRl{Y_{n+1} \in \hat{C}^{\rm{PDI}}(\tilde{X}_{n+1})} 
            \geq 1 - \alpha\nonumber -\left[F_{R^*}(\hat{q}^+_{\alpha}(\{R_i^*\}_{i=n_0}^n) + \Delta_{n+1})-F_{R^*}(\hat{q}^+_{\alpha}(\{R_i^*\}_{i=n_0}^n))\right]\nonumber\\
            &\quad -\mathbb{P}\left\{\hat{q}_{\alpha}^+\left(\{R_i^* - S_{\hat{\mu}}\cdot E_i\cdot |\tilde{\gT}_{n+
            1}|\}_{i=n_0}^n\right) < R_{n+1}^* \leq \hat{q}_{\alpha}^+\left(\{R_i^* + S_{\hat{\mu}}\cdot E_i\cdot |\tilde{\gT}_{n+1}|\}_{i=n_0}^n\right)\right\} \nonumber.
\end{align}
\end{proof}

\subsubsection{Proof of \texttt{JDI-CP}}
\begin{proof}
Under Assumption \ref{def:SS}, denote
    \begin{align}
        X_i^{n+1} &= \mathbf{I}(X_i, \hat{\gO}_i \cup \gO^* \cup \hat{\gT}_{n+1}),\quad i= n_0, \ldots, n,\label{eq:JDI_SS_label}\\
        X_{n+1}^i &= \mathbf{I}(X_{n+1}, \hat{\gO}_i \cup \gO^* \cup \hat{\gT}_{n+1}),\quad i= n_0, \ldots, n.\label{eq:JDI_SS_test}
    \end{align}
According to Proposition \ref{thm:JDI_SS}, we have
\begin{align}\label{eq:JDI_proof_cover}
    \sP\left\{Y_{n+1} \in \hat{C}^{\rm{JDI}}(X_{n+1}) \right\} \geq 1-2\alpha,
\end{align}
where 
\begin{align}
    \hat{C}^{\rm{JDI}}(X_{n+1}) = \left[\hat{q}_{\alpha}^{-}(\{\hat{\mu}(X_{n+1}^i) - |Y_i - \hat{\mu}( X_i^{n+1}) |\}_{i=n_0}^n),\ \hat{q}_{\alpha}^{+}(\{\hat{\mu}(X_{n+1}^i) + |Y_i - \hat{\mu}( X_i^{n+1}) |\}_{i=n_0}^n)\right].
\end{align}

Note that if Assumption \ref{def:ID} is satisfied, $X_i^{n+1}$ and $X_{n+1}^i$ are equal to $\check{X}_i^{n+1}$ and $\check{X}_{n+1}^i$. Based on this, we can characterize the differences between features and obtain the coverage property of $\hat{C}^{\rm{JDI}}(\tilde{X}_{n+1})$ beyond Assumption \ref{def:ID}. First, notice that the difference between $\check{X}_{n+1}^i$ and $X_{n+1}^i$ is
\begin{align}\label{eq:JDI_feature_diff_test}
    \check{X}_{n+1}^i - X_{n+1}^i &= \hat{\Delta}_i \Delta_{n+1}(\tilde{\Delta}^{\gT}_{n+1} - \hat{\Delta}^{\gT}_{n+1}) X_{n+1} + \hat{\Delta}_i \Delta_{n+1} (\hat{\Delta}^{\gT}_{n+1} - \tilde{\Delta}^{\gT}_{n+1}) \bar{\vx}_{n+1} \nonumber\\
    &=\hat{\Delta}_i \Delta_{n+1}(\tilde{\Delta}^{\gT}_{n+1} - \hat{\Delta}^{\gT}_{n+1}) (X_{n+1} - \bar{\vx}_{n+1}),
\end{align}
and the difference between $\check{X}_i^{n+1}$ and $X_i^{n+1}$ is
\begin{align}\label{eq:JDI_feature_diff_label}
    \check{X}_i^{n+1} - X_i^{n+1} &= \hat{\Delta}_i \Delta_{n+1} (\tilde{\Delta}^{\gT}_{n+1} - \hat{\Delta}^{\gT}_{n+1}) X_i + \hat{\Delta}_i \Delta_{n+1} (\hat{\Delta}^{\gT}_{n+1} - \tilde{\Delta}^{\gT}_{n+1}) \bar{\vx}_{i} \nonumber\\
    &=\hat{\Delta}_i \Delta_{n+1} (\tilde{\Delta}^{\gT}_{n+1} - \hat{\Delta}^{\gT}_{n+1}) (X_i - \bar{\vx}_i).
\end{align}
Invoking $\ell_1$-sensitivity of $\hat{\mu}$, we have
\begin{align}\label{eq:JDI_Delta_{max}}
    |\hat{\mu}(\check{X}_{n+1}^i) - \hat{\mu}(X_{n+1}^i)| &\leq S_{\hat{\mu}} \cdot \left \| \check{X}_{n+1}^i - X_{n+1}^i \right \| _1\nonumber\\
    &= S_{\hat{\mu}} \cdot  {\textstyle \sum_{j=1}^{d}} |\hat{\delta}_{i,j}| \cdot |\delta_{n+1,j}| \cdot |\tilde{\delta}^{\gT}_{n+1,j} - \hat{\delta}^{\gT}_{n+1,j}| \cdot |X_{n+1,j} - \bar{x}_{j}|\nonumber\\
    &= S_{\hat{\mu}} \cdot {\textstyle \sum_{j \in [d]\setminus \gO^*}} |\hat{\delta}_{i,j}| \cdot |\tilde{\delta}^{\gT}_{n+1,j} - \hat{\delta}^{\gT}_{n+1,j}| \cdot |X_{n+1,j} - \bar{x}_{j}|\nonumber\\
    &\leq S_{\hat{\mu}} \cdot|\tilde{\gT}_{n+1} \triangle \hat{\gT}_{n+1}| \cdot E_{n+1},\nonumber\\
\end{align}
and
\begin{align}\label{eq:JDI_Delta_R}
    \left||Y_i - \hat{\mu}(\check{X}_i^{n+1})| - |Y_i - \hat{\mu}(X_i^{n+1})|\right| &\leq S_{\hat{\mu}} \cdot|\tilde{\gT}_{n+1} \triangle \hat{\gT}_{n+1}| \cdot E_{n+1}.
\end{align}

Notice that the upper bound in \eqref{eq:JDI_Delta_{max}} and 
\eqref{eq:JDI_Delta_R}  is exactly $\Delta_{n+1}$ in \eqref{eq:Delta_R}. Combining with \eqref{eq:JDI_proof_cover}, we can obtain the coverage gap:
\begin{align}\label{eq:JDI_gap}
    &\left| \sP\LRl{Y_{n+1} \in \hat{C}^{\rm{JDI}}(\tilde{X}_{n+1})} - \sP\LRl{Y_{n+1} \in \hat{C}^{\rm{JDI}}(X_{n+1})} \right|\nonumber\\
    &=
    \left| \sP\LRl{\hat{q}_{\alpha}^-(\{\hat{\mu}(\check{X}_{n+1}^i) - |Y_i - \hat{\mu}(\check{X}_i^{n+1})|\}_{i=n_0}^n) < Y_{n+1} \leq \hat{q}_{\alpha}^+ (\{\hat{\mu}(\check{X}_{n+1}^i) + |Y_i - \hat{\mu}(\check{X}_i^{n+1})|\}_{i=n_0}^n)} \nonumber\right.\\
    & \left.\ \ \ - \sP\LRl{\hat{q}_{\alpha}^- (\{\hat{\mu}(X_{n+1}^i) - |Y_i - \hat{\mu}(X_i^{n+1})|\}_{i=n_0}^n) < Y_{n+1} \leq \hat{q}_{\alpha}^+ (\{\hat{\mu}(X_{n+1}^i) + |Y_i - \hat{\mu}(X_i^{n+1})|\}_{i=n_0}^n)}\right| \nonumber\\
    & \leq \left| \sP
    \{\hat{q}_{\alpha}^- (\{\hat{\mu}(X_{n+1}^i) - \Delta_{n+1} - (|Y_i - \hat{\mu}(X_i^{n+1})| + \Delta_{n+1})\}_{i=n_0}^n) < Y_{n+1} \leq \nonumber\right.\\
    & \left.\quad\quad\ \ \hat{q}_{\alpha}^+ (\{\hat{\mu}(X_{n+1}^i)  + \Delta_{n+1} + (|Y_i - \hat{\mu}(X_i^{n+1})| + \Delta_{n+1})\}_{i=n_0}^n)\} \nonumber\right.\\
    & \left.\ \ \ - \sP\LRl{\hat{q}_{\alpha}^- (\{\hat{\mu}(X_{n+1}^i) - |Y_i - \hat{\mu}(X_i^{n+1})|\}_{i=n_0}^n) < Y_{n+1} \leq \hat{q}_{\alpha}^+ (\{\hat{\mu}(X_{n+1}^i) + |Y_i - \hat{\mu}(X_i^{n+1})|\}_{i=n_0}^n)}\right|\nonumber\\
    &= \left| \sP\LRl{\hat{Q}_{\alpha}^- - 2\Delta_{n+1} < Y_{n+1} \leq \hat{Q}_{\alpha}^+ + 2\Delta_{n+1}} - \sP\LRl{\hat{Q}_{\alpha}^- < Y_{n+1} \leq \hat{Q}_{\alpha}^+}\right|\nonumber\\
    &= F_Y(\hat{Q}_{\alpha}^+ + 2\Delta_{n+1}) - F_Y(\hat{Q}_{\alpha}^+) + F_Y(\hat{Q}_{\alpha}^-) - F_Y(\hat{Q}_{\alpha}^- - 2\Delta_{n+1}),
\end{align}
where
\begin{align}
\hat{Q}_{\alpha}^- &= \hat{q}_{\alpha}^- (\{\hat{\mu}(X_{n+1}^i) - |Y_i - \hat{\mu}(X_i^{n+1})|\}_{i=n_0}^n),\\
\hat{Q}_{\alpha}^+ &= \hat{q}_{\alpha}^+ (\{\hat{\mu}(X_{n+1}^i) + |Y_i - \hat{\mu}(X_i^{n+1})|\}_{i=n_0}^n).
\end{align}
\end{proof}

\section{DDC Method}\label{append:DDC}
The DDC method is the most widely used method to detect cellwise outliers. We display the process of DDC in Figure \ref{fig:DDC_process}, and the details of the robust functions (highlighted in orange) can be found in \citet{rousseeuw2018detecting}.

\begin{figure}[H]
\centering  
\centerline{\includegraphics[width=\linewidth]{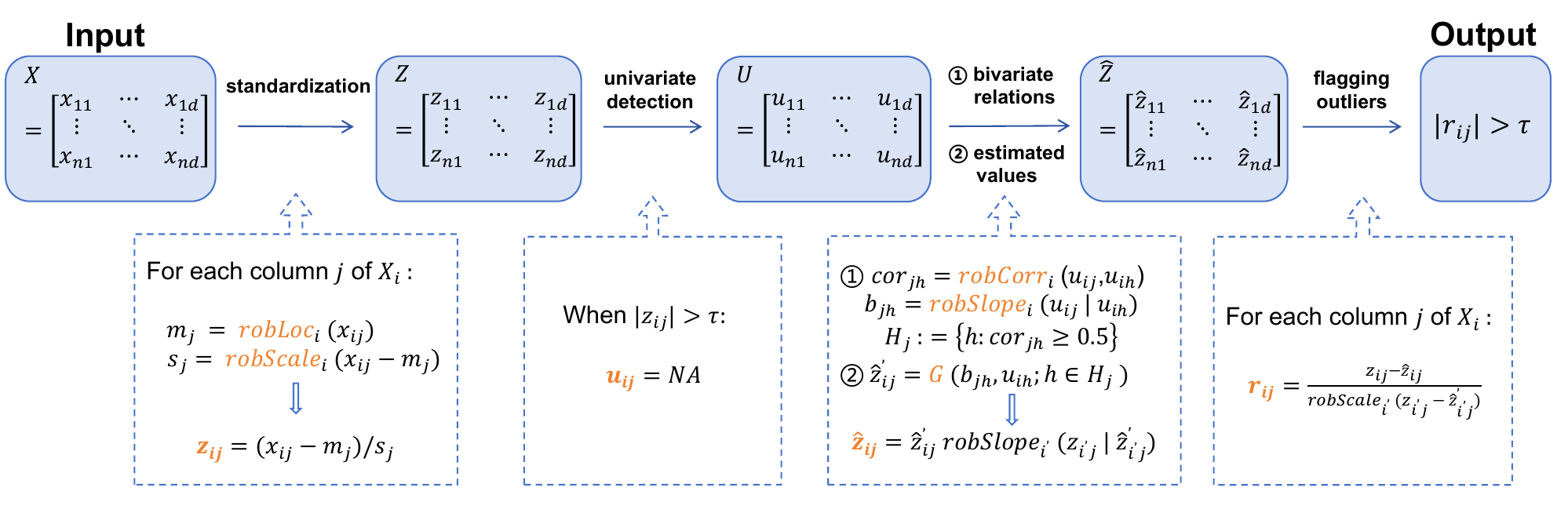}}
    \caption{An implementation of DDC detection method.}
    \label{fig:DDC_process}
\end{figure}

\section{Additional experiment results}

\subsection{Three benchmark methods in simulation}\label{append:three_base}
Now we present the specific construction forms of the three methods used for comparison in Section \ref{sec:simulation}.
\begin{itemize}
    \item \texttt{Baseline}: mask the entries of $\{X_i\}_{i=n_0}^n \cup \tilde{X}_{n+1}$ by $\gO^*$, and obtain the processed features
    \begin{align}
        \check{X}^{\rm{Base}}_i &= \mathbf{I}(X_i, \gO^*),\quad i = n_0,\ldots,n,\nonumber\\
        \check{X}_{n+1}^{\rm{Base}} &= \mathbf{I}(\tilde{X}_{n+1}, \gO^*) =  \mathbf{I}(X_{n+1}, \gO^*),\nonumber
    \end{align}
    then construct SCP interval
    \begin{align}
    \hat{C}^{\rm{Base}}(\tilde{X}_{n+1}) &= \hat{\mu}(\check{X}_{n+1}^{\rm{Base}}) \pm \hat{q}_{\alpha}^+(\{|Y_i - \hat{\mu}(\check{X}^{\rm{Base}}_i)|\}_{i=n_0}^n).\nonumber
    \end{align}

    \item \texttt{SCP}: directly compute residuals $R_i = | Y_{i}-\hat{\mu}(X_{i})|$ on the calibration set, then construct SCP interval
    \begin{align}
    \hat{C}^{\rm{SCP}}(\tilde{X}_{n+1}) &= \hat{\mu}(\tilde{X}_{n+1}) \pm \hat{q}_{\alpha}^+(\{R_i\}_{i=n_0}^n)
    \end{align}
    
    \item \texttt{WCP}: consider the weights of residuals
    \begin{align}
    &p_{i}^{w}(x)=\frac{w\left(X_{i}\right)}{ \sum_{l=n_0}^{n} w(X_l)+w(x)},\quad i = n_0,\ldots,n, \nonumber\\
    &p_{n+1}^{w}(x)=\frac{w(x)}{ \sum_{l=n_0}^{n} w(X_l)+w(x)},\nonumber
    \end{align}
    where the likelihood ratio $w(x) = d P_{\tilde{X}}(x) / d P_X(x)$ with $\tilde{X}_{n+1} \sim P_{\tilde{X}}$ can be estimated by Random Forests (more generally, any classifier that outputs estimated probabilities of class membership) as mentioned in \citet{tibshirani2019conformal}, then construct WCP interval
    \begin{align}
    \hat{C}^{\rm{WCP}}(\tilde{X}_{n+1}) &= \hat{\mu}(\tilde{X}_{n+1}) \pm \hat{q}_{\alpha}^+\left( \sum_{i=n_{0}}^{n}  p_{i}^{w}(\tilde{X}_{n+1}) \delta_{R_i}+p_{n+1}^{w}(\tilde{X}_{n+1}) \delta_{\infty} \right).
    \end{align}
    
\end{itemize}


 

\subsection{TPR and FDR for detection methods}\label{append:FDR}
In Table \ref{table:FDR_DDC_Z10}-\ref{table:FDR_other}, we summarize the empirical TPR and FDR obtained through the application of the detection methods in the simulation.

\begin{table}[H] 
\centering
\caption{Empirical TPR and FDR of \emph{DDC} under three settings when $\epsilon=\{0.2,0.15,0.1\}$, where the noise $Z_{n+1,j}$ is 10 .} 
\vskip 0.15in
\begin{tabular}{ccccccccc}
          \toprule
            \multirow{2}{*}{Setting} & \multicolumn{3}{c}{TPR} & &\multicolumn{3}{c}{FDR}
            \\
            \cline{2-4}
            \cline{6-8}
            &
            0.2 & 0.15 & 0.1 & & 0.2 & 0.15 & 0.1\\
          \hline
           A & 1&1&1 & & 0.082 & 0.113&0.179 \\
           B & 1&1 & 1 & & 0.067 & 0.108 &0.232\\
           C & 1 & 1&1 & & 0.080&0.116 & 0.182\\
         \bottomrule
\end{tabular}
\label{table:FDR_DDC_Z10}
 \end{table}

\begin{table}[H] 
\centering
\caption{Empirical TPR and FDR of \emph{DDC} under three settings when $\epsilon=\{0.2,0.15,0.1\}$, where the noise $Z_{n+1,j} \overset{i.i.d.}{\sim } N(\mu,\sigma)$, $\mu, \sigma$ are randomly sampled from $U(0,10)$.} 
\vskip 0.15in
\begin{tabular}{ccccccccc}
          \toprule
            \multirow{2}{*}{Setting} & \multicolumn{3}{c}{TPR} & &\multicolumn{3}{c}{FDR}
            \\
            \cline{2-4}
            \cline{6-8}
            &
            0.2 & 0.15 & 0.1 & & 0.2 & 0.15 & 0.1\\
          \hline
           A & 0.991 &0.990 &0.990 & & 0.084 & 0.115 &0.186 \\
           B & 0.990&0.990 & 0.990 & & 0.078 & 0.116 &0.236\\
           C & 0.990 &0.990 & 0.990 & & 0.085 &0.116 & 0.182\\
         \bottomrule
\end{tabular}
\label{table:FDR_DDC_Zrm}
\end{table}

\begin{table}[H] 
\centering
\caption{Empirical TPR and FDR of \emph{one-class SVM classifier} and \emph{cellMCD estimate} methods under three settings when $\epsilon=0.1$, where the noise $Z_{n+1,j} \overset{i.i.d.}{\sim } N(\mu,\sigma)$ and $\mu, \sigma$ are randomly sampled from $U(0,10)$.} 
\vskip 0.15in
\begin{tabular}{cccccc}
          \toprule
            \multirow{3}{*}{Setting} & 
            \multicolumn{2}{c}{TPR} & & \multicolumn{2}{c}{FDR}
            \\
            \cline{2-3}
            \cline{5-6}
            &
            SVM & MCD & & SVM & MCD\\
          \hline
           A & 0.988 & 0.990 & & 0.026 & 0.173 \\
           B & 0.988 & 0.989 & & 0.023 & 0.172\\
           C & 0.988 & 0.989 & & 0.027 & 0.175\\
         \bottomrule
         \end{tabular}
 \label{table:FDR_other}
 \end{table}

\subsection{Supplementary simulation results for \ref{sim:contaminate}}\label{append:63}
In addition to Mean Imputation, we present the empirical coverage and length of \texttt{Baseline} and our method under different contamination probabilities when $\mathbf{I}$ is kNN or MICE in Figure \ref{fig:kNN} and \ref{fig:MICE}, respectively. The parameters are the same as those in \ref{sim:contaminate}.

\begin{figure}[H]
\centering  
\centerline{\includegraphics[width=0.7\linewidth]{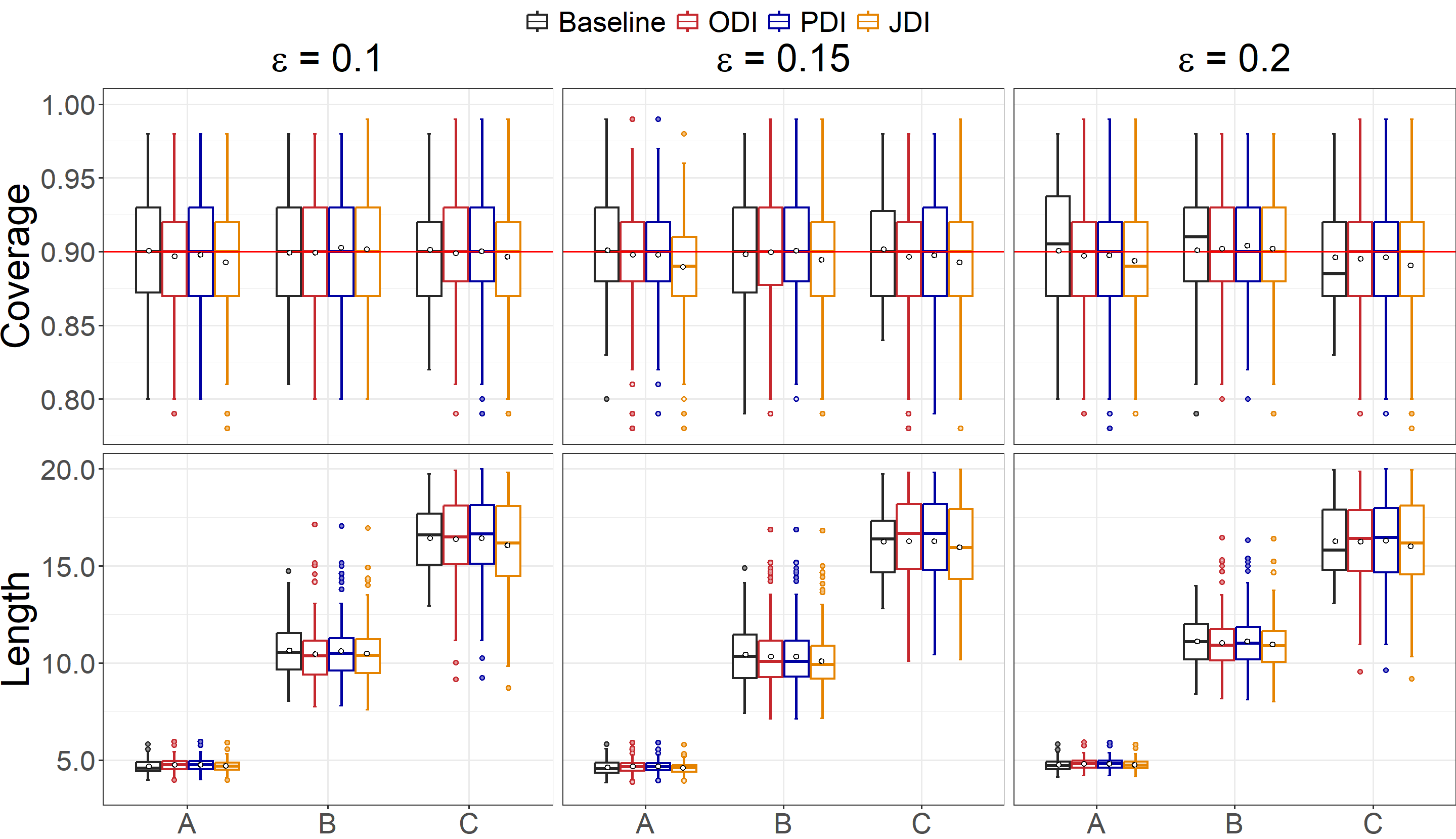}}
    \caption{Simulation results of \texttt{Baseline} and our methods when $\epsilon \in \{0.1,0.15,0.2\}$. $\mathbf{D}$ is DDC and $\mathbf{I}$ is \emph{kNN}.}
    \label{fig:kNN}
\end{figure}

\begin{figure}[ht]
\centering  
\centerline{\includegraphics[width=0.7\linewidth]{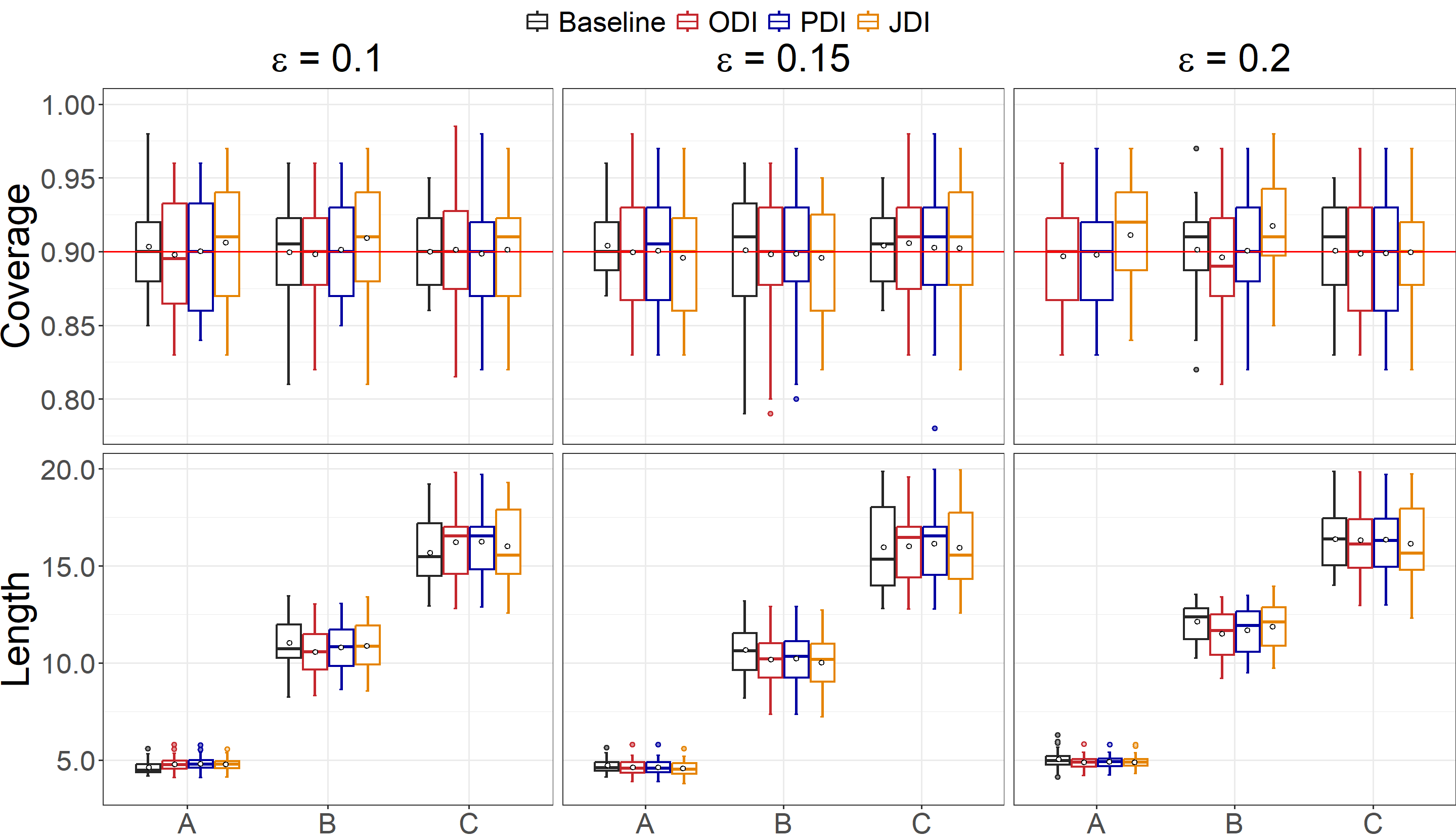}}
    \caption{Simulation results of \texttt{Baseline} and our methods when $\epsilon \in \{0.1,0.15,0.2\}$. $\mathbf{D}$ is DDC and $\mathbf{I}$ is \emph{MICE}.}
    \label{fig:MICE}
\end{figure}

\subsection{Supplement of airfoil dataset study}\label{append:airfoil}

\begin{table}[H]
\centering
\caption{Available variables in the airfoil dataset.}
\vskip 0.15in
{\renewcommand{\arraystretch}{0.9}
\begin{tabular}{lc}
\hline
Variable & Unit \\
\hline
Scaled sound pressure level of NASA airfoils (target) & dB\\
Frequency & Hz \\
Attack-angle & deg \\ 
Chord-length & m \\ 
Free-stream-velocity & m/s \\
Suction-side-displacement-thickness	& m\\
\hline
\end{tabular}}
\label{tab:windvars}
\end{table}

Creating training data, test data, and covariate shift: We repeated an experiment for 200 trials, and for each trial we randomly partition the data $\{(X_i, Y_i)\}_{i=1}^{1000}$ into two equally sized subsets $\gD_t$ and $\gD_c$, and construct a test set $\gD_{test}$ containing cellwise outliers with the following steps.
\begin{itemize}
    \item Log the first and fifth features of $X$ across all the data.
    \item According to the contamination probability $\epsilon=0.02$, randomly select some entries to assign outlier 50.
    \item Use $\textbf{DI}$ to handle covariates, where $\tau_j$ is set to $\sqrt{\chi _{1,0.95}^{2}}$ to control FDR.
    \item The set $\gO^*$ used in \texttt{Baseline} and \texttt{ODI-CP} is the coordinates of those artificially added noises.
\end{itemize}

\subsection{Supplement of wind direction dataset study}\label{append:wind}

\begin{table}[H]
\centering
\caption{Available variables in the wind direction dataset.}
\vskip 0.15in
{\renewcommand{\arraystretch}{0.9}
\begin{tabular}{lc}
\hline
Variable & Unit \\
\hline
Wind direction (target) & rad\\
Cosine of wind direction in the previous hour & dimensionless \\ 
Sine of wind direction in the previous hour & dimensionless \\ 
Atmospheric pressure in the previous hour & mB \\
Air temperature (dry bulb) in the previous hour	& $^\circ\mathrm{C}$ \\
Dew point temperature in the previous hour & $^\circ\mathrm{C}$ \\
Relative humidity in the previous hour & \% \\
Wind gust in the previous hour & m/s \\
Wind speed in the previous hour	& m/s \\
\hline
\end{tabular}}
\label{tab:windvars_data}
\end{table}

Additional details are referred to  \ref{append:airfoil}, but the test set $\gD_{test}$ is constructed with the following steps.
\begin{itemize}
    \item According to the contamination probability $\epsilon=0.02$, randomly select some entries to assign outlier 100.
    \item Use $\textbf{DI}$ to handle covariates, where $\tau_j$ is set to $\sqrt{\chi _{1,0.95}^{2}}$ to control FDR.
    \item The set $\gO^*$ used in \texttt{Baseline} and \texttt{ODI-CP} is the coordinates of those artificially added noises.
\end{itemize}

\subsection{Supplement of riboflavin dataset study}\label{append:Riboflavin}

\begin{table}[H]
\centering
\caption{Available variables in the riboflavin dataset.}
\vskip 0.15in
{\renewcommand{\arraystretch}{0.9}
\begin{tabular}{ll}
\hline
Variable& Comments\\
\hline
Logarithm of riboflavin production rate & target\\
Log-transformed gene expression levels & $p = 4088$ (co)variables \\
\hline
\end{tabular}}
\label{tab:Riboflavin}
\end{table}

Additional details are referred to  \ref{append:airfoil} without introducing artificial outliers. 
\begin{itemize}
    \item Use $\textbf{DI}$ to handle covariates, where $\tau_j$ is set to $\sqrt{\chi _{1,0.95}^{2}}$ to control FDR.
    \item The set $\gO^*$ is unknown, so the experiments of \texttt{Baseline} and \texttt{ODI-CP} are not conducted.
    \item There may be cellwise outliers in $\gD_t$ and $\gD_c$.
\end{itemize}

\end{document}